\documentclass[twoside,11pt]{article}

\usepackage{blindtext}

\usepackage{amsmath}
\usepackage{amssymb}
\usepackage{booktabs}
\usepackage{xltabular}
\usepackage{makecell}
\usepackage{multirow}
\usepackage{bm}
\usepackage{bbm}
\usepackage{fancyhdr}
\usepackage{subcaption}
\usepackage{placeins}
\usepackage{algorithm}
\usepackage{algpseudocode}
\usepackage{enumitem}
\usepackage[preprint]{jmlr2e}
\usepackage{csquotes}

\usepackage{lastpage}
\jmlrheading{23}{2022}{1-\pageref{LastPage}}{1/21; Revised 5/22}{9/22}{21-0000}{Spear and Komorowski and Pope and Moodie}

\ShortHeadings{Kernel weighted importance sampling for OPE in CBs}{Spear and Komorowski and Pope and Moodie}
\firstpageno{1}

\newcommand{\pearldo}{\mathrm{do}}
\newcommand{\argmax}{\mathrm{argmax}}

\begin{document}

\title{Kernel weighted importance sampling for off-policy evaluation in contextual bandits}

\author{\name Joshua Spear \email joshua.spear.21@ucl.ac.uk \\
       \addr Institute of Child Health\\
       University College London\\
       London, WC1E 6BT, UK
       \AND
       \name Matthieu Komorowski \email matthieu@strive-health.com \\
       \addr Strive Health Ltd \\
       Enfield, EN2 8DJ, UK
       \AND
       \name Rebecca Pope \email r.pope@ucl.ac.uk \\
       \addr Institute of Child Health\\
       University College London\\
       London, WC1E 6BT, UK
       \AND
       Erica EM Moodie \email erica.moodie@mcgill.ca \\
       \addr Department of Epidemiology\\
       Biostatistics\\
       McGill University\\
       Montr\'eal, QC H3A 0G4, Canada
       }

\editor{My editor}

\maketitle

\begin{abstract}
This article presents a novel estimator for performing off-policy evaluation using only offline data for contextual bandits. The proposed estimator, Kernel-WIS is demonstrated to be asymptotically consistent and to empirically outperform strong baselines (including weighted importance sampling), particularly under behaviour policy miss-specification. The benefit of Kernel-WIS is derived from combining the bounded property of weighted importance sampling with the linearity of vanilla importance sampling.
\end{abstract}

\begin{keywords}
Offline contextual bandits, off-policy evaluation, importance sampling, causal inference, propensity score
\end{keywords}

\section{Introduction}\label{sec:intro}
Off-policy evaluation (OPE) for contextual bandits, or more generally time independent causal inference, requires evaluating the performance of a target policy $\pi_{e}$ using data observed from a (logging) policy $\pi_{l}$. If interest is in the expected value of this performance (i.e., the average treatment effect, average treatment effect in the treated etc), then the OPE task requires approximating $\mathbb{E}_{P_{\pi_{e}}(\tau)}[R]$ (or a conditional version) with data generated according to $P_{\pi_{l}}$, where $R$ is the outcome and $P_{\pi}$ is the probability measure associated with the random variable (policy), $\pi:\mathcal{S}\rightarrow\mathcal{A}$. $\mathcal{S}$ defines the set of states or contexts over which the agent acts and $\mathcal{A}$ defines the set of actions that the agent can take. $S$ and $A$ define random variables with ranges in $\mathcal{S}$ and $\mathcal{A}$, respectively, with associated push through measures implied by the context. Unless otherwise stated, capital letters will refer to random variables ($S$), capital $P$ subscript will refer to the probability measure associated to the random variable ($P_{S}$) and lowercase $p$ will refer to the density ($p_{S}$). Throughout the analysis the following data generating density is assumed:
\begin{align}
	p_{\pi}(\tau) = p_{S}(s)p_{\pi}(a|s)p_{R}(r|a,s).\label{equ:cb_density}
\end{align}

The causal estimand of primary interest for this analysis is the average treatment return in the untreated i.e.,:
\begin{equation}
	\mathbb{E}[R^{\pearldo{}(A=\pi_{e}(S))}|A=\pi_{\beta}(S)],\label{equ:atrut}
\end{equation}
where $R^{\pearldo{}(A=\pi(S))}$ defines the counterfactual reward $R$ when the intervention is set by the policy $\pi$ (\cite{pearl_causality_2009}). For counterfactual policy optimisation, the selected policy ($\pi_{e}$) can depart quite significantly from the logging policy used to generate the data ($\pi_{l}$), particularly in the time dependent setting. As such, the causal estimand is understanding \enquote{how the new policy would have performed on a set of patients who have not been treated under the new policy}. Throughout, the causal identifiability assumptions of no hidden confounding, consistency and positivity are assumed (\cite{Hernan2020}) such that:
\begin{align*}
	\mathbb{E}[R^{\pearldo{}(A=\pi_{e}(S))}|A=\pi_{\beta}(S)] = \mathbb{E}_{P_{\pi_{l}}(\tau)}\Bigg[\frac{p_{\pi_{e}}(A|S)}{p_{\pi_{\beta}}(A|S)}R\Bigg].
\end{align*}

With respect to OPE, the core approaches are outcome and propensity methods, with the addition of composite and doubly robust methods requiring the two aforementioned approaches. Equations \ref{equ:propense_approx} and \ref{equ:outcome_approx} broadly define the propensity and outcome based methods mentioned, where $p_{\pi_{\beta}}(a_{i}|s_{i})$ and $\hat{R}(\pi_{e}(s_{i}),s_{i})$ define the estimated versions of the assumed models for each estimator, respectively. Throughout, the approximate logging policy, $p_{\pi_{\beta}}(a_{i}|s_{i})$, will be referred to as the \enquote{behaviour policy}.\\
\begin{align}
	\mathbb{E}_{P_{\pi_{e}}}[R] &= \mathbb{E}_{P_{\pi_{l}}(\tau)}\Bigg[\frac{p_{\pi_{e}}(A|S)}{p_{\pi_{l}}(A|S)}R\Bigg] \approx \frac{1}{n}\sum_{i=1}^{n}\frac{p_{\pi_{e}}(a_{i}|s_{i})}{p_{\pi_{\beta}}(a_{i}|s_{i})}r_{i} \label{equ:propense_approx} \\
	\mathbb{E}_{P_{\pi_{e}}}[R] &\approx \frac{1}{n}\sum_{i=1}^{n}\hat{R}(\pi_{e}(s_{i}),s_{i})\label{equ:outcome_approx}
\end{align}
Note that, the actions fed to the approximate reward function in equation \ref{equ:outcome_approx} are the actions under the evaluation policy whilst the actions fed to the densities in equation \ref{equ:propense_approx} are actions under the logging policy. For clarity, the logging policy actions are used to fit the approximate outcome model, $\hat{R}$, in equation \ref{equ:outcome_approx}.\\

\subsection{Why importance sampling?}
Within the AI community importance sampling estimators have been largely disregarded, for time-dependent OPE. There exists a plethora of research into importance sampling for the time independent setting however, the limitations of these will be discussed later. More generally, within the causal inference community, it is widely understood that robust evaluation of causal effects requires both propensity and outcome based methods: methods incorporating both outcome and propensity models are doubly robust (\cite{bang_doubly_2005}, \cite{wallace_doublyrobust_2015}); assessing the extent to which propensity and outcome measures agree can provide insight into bias or model misspecification (\cite{wallace_model_2017}) and; propensity based methods can be used as a diagnostic for bias (\cite{Rubin2001}).\\

\subsection{Types of importance sampling estimators}
The estimator in equation \ref{equ:propense_approx} defines the vanilla IS (VIS) estimator however, more often than not, the weighted IS (WIS) estimator is used:
\begin{equation}
	\hat{J}_{\textrm{WIS}} = \Bigg(\frac{1}{n}\sum_{i=1}^{n}w_{i}\Bigg)^{-1}\frac{1}{n}\sum_{i=1}^{n}r_{i}w_{i}.\label{equ:wis_estimator}
\end{equation}
The core challenge in the use of IS methods is the variance of the estimator. The use of the WIS estimator is motived through this variance reduction via the use of control variates. The denominator $\frac{1}{n}\sum_{i=1}^{n}w_{i}$ converges to $1$ in the limit of data ($n$) and thus the bias of the estimator vanishes, under positivity and correct identification of the propensity score model (\cite{Hernan2020}).\\

\subsection{Related work}\label{sec:related_work}
There exists a large amount of research into time independent OPE methods. The focus of the analysis presented is with respect to propensity based methods and as such existing work will focus on these estimators. 

\subsubsection{Monte Carlo importance sampling} Propensity based methods are largely motivated by the Monte Carlo importance sampling  (MCIS) literature. MCIS is concerned with evaluating an integral  with respect to a target measure, using an alternate base measure (\cite{mcbook}). This formulation precisely maps to equation \ref{equ:propense_approx} and thus much of the concepts are transferable. The core difference between OPE and MCIS is in the assumption of access to data. Many techniques for controlling the variance of IS based methods within the context of MCIS are motivated through selecting an appropriate base measure e.g., selecting a base measure with heavy tails to prevent importance weights from diverging to $\infty$. In the context of OPE, this would require selecting a heavy tailed logging density $\bar{p}_{\pi_{l}}$. However, in the context of OPE, data cannot be observed from this heavy tailed density since data are \emph{only} available from $p_{\pi_{l}}$.\\ 

\subsubsection{Importance sampling in OPE} Aside from the WIS estimator, the most prominent approach for variance reduction in importance sampling is clipping (\cite{alquier_logarithmic_2024}). \cite{alquier_logarithmic_2024} demonstrated that a number of other variance reduction methods can be viewed under the same theoretical framework, where clipping is a special case. Deriving probably approximately correct (PAC) estimators for conservative OPE measures i.e., asserting $P(\hat{J}-J<\epsilon)\geq 1-\delta$ has formed the other large body of research. \cite{alquier_logarithmic_2024} succeeded in this for the vanilla IS estimator however, the only existing PAC based analysis of the WIS estimator for the average treatment return in the untreated (equation \ref{equ:atrut}) is \cite{Kuzborskij2021}. The authors proposed a conservative PAC bound for off-policy selection, a simpler problem then OPE. A potential reason for the limited WIS PAC research is the strong dependence of random variables within the WIS estimator.\\

Despite the significant progress in understanding vanilla IS based estimators (and associated variance reducing alternatives), the estimator remains challenging to use for OPE due to it being unbounded. \cite{kallus_intrinsically_2019} highlight this and include $\alpha$-boundedness as a finite sample condition for a well behaved OPE estimator. The WIS estimator naturally induces boundedness however, the dependence structure means the resulting estimator is challenging to control.\\

\subsection{Contributions}
The primary contribution of this work is the Kernel-WIS estimator, a novel estimator which attempts to unify the benefits of both vanilla importance sampling (independence) and weighted importance sampling (boundedness and variance reduction) into a single estimator. Kernel-WIS is shown to be asymptotically consistent under mild conditions (appropriate bandwidth schedule, identification of the propensity model and positivity). The Kernel-WIS estimator is demonstrated to, in the majority of cases assessed: 
\begin{itemize}
	\item Statistically significantly outperform the WIS estimator under non-oracle behaviour policies and;
	\item Perform statistically identically to the WIS estimator under oracle behaviour policies.
\end{itemize}
However, under continuous reward settings the Kernel-WIS estimator statistically significantly underperforms in comparison to the WIS estimator.\\

\section{Methods}

The proceeding introduces the Kernel-WIS estimator for policy evaluation. The estimator is motivated by combining the best of the vanilla and weighted estimators, linearity/independence and boundedness, respectively. Before discussing Kernel-WIS, the State dependent-WIS (State-WIS) estimator is discussed. The State-WIS estimator is also novel to the analysis presented and can be derived from Kernel-WIS in the limit. The results in section \ref{sec:results} will demonstrate that the bias under the State-WIS is so large that it induces worse performance in comparison to baselines. However, considering State-WIS is instructive for understanding Kernel-WIS.\\

\subsection{State-dependent WIS estimator}\label{sec:state_dep_wis_est}
The State-WIS estimator, $\hat{J}_{\textrm{State-WIS}}$, is defined as:
\begin{equation}\label{equ:swis_estimator}
	\hat{J}_{\textrm{State-WIS}} = \frac{1}{n}\sum_{i=1}r_{i}w_{i}\Bigg(\frac{1}{n_{s_{i}}}\sum_{j =1}^{n}w_{j}\mathbbm{1}(s_{j}=s_{i})\Bigg)^{-1},
\end{equation}
where $n_{s_{i}} = \sum_{j=1}^{n}\mathbbm{1}(s_{j}=s_{i})$.\\

\sloppy 
No formal statement is made regarding the consistency of the State-WIS estimator. Intuitively though, the denominator $\frac{1}{n_{s_{i}}}\sum_{j =1}^{n}w_{j}\mathbbm{1}(s_{j}=s_{i})$ is clearly an estimator for the conditional expectation $\mathbb{E}[W|S=s_{i}]$ and, on its own is consistent, assuming positivity and correct identification of the propensity model. Thus the State-WIS estimator can be viewed as using conditional control variates since, $\mathbb{E}[W|S=s_{i}] = \mathbb{E}_{P_{\pi_{e}}}[1|S=s_{i}] = 1$. Additionally, observe that for $s_{i}\neq s_{j}$, the terms of State-WIS are independent.\\

\subsection{Kernel WIS estimator}
The Kernel-WIS estimator, $\hat{J}_{\textrm{Kernel-WIS}}$, is defined as:
\begin{equation}\label{equ:kwis_estimator}
	\hat{J}_{\textrm{Kernel-WIS}} = \frac{1}{n}\sum_{i=1}r_{i}w_{i}\Bigg(\frac{\sum_{j=1}^{n}k_{h}(s_{i},s_{j})w_{j}}{\sum_{j=1}^{n}k_{h}(s_{i},s_{j})}\Bigg)^{-1},
\end{equation}
where $k_{h}:\mathcal{S}\times \mathcal{S} \rightarrow \mathbb{R}_{\geq 0}$ is a kernel function under assumptions 1 to 8 (below).\\

The parameter $h$ is known as the bandwidth which controls the smoothness of the kernel function. The form of the estimator can be motivated from two points of view. Firstly, the denominator defines a kernel estimate of $\mathbb{E}[W|S=s_{i}]$ and thus Kernel-WIS with a given bandwidth could be viewed as an estimator of State-WIS. Alternatively, if $h=g(n,\lambda)$ is allowed to tend to 0 or $\infty$, it is hypothesised that bandwidth can be optimised as a hyperparameter (through $\lambda$) to control finite sample performance, such that WIS (with $h=\infty$) or State-WIS (with $h=0$) is attained in the limit. The consistency results (theorem \ref{ther:asympt_concis_bound_wsa}) are obtained with the limit tending towards State-WIS.\\ 


\subsection{Theoretical results}
The proceeding demonstrates the validity of the Kernel-WIS estimator in terms of asymptotic behaviour. To obtain these results, the Kernel-WIS estimator is viewed as an empirical process of random functions (\cite{vaart_asymptotic_1998}). Before proceeding, the assumptions of the analysis are stated. The implications and relevance of these assumptions are described in table \ref{tbl:assump_justi} (appendix section \ref{sec:ther_results_appendix}).\\

A kernel, $k_{h}(x)$ is any measurable function which satisfies:
\begin{enumerate}
	\item  $\|k_{h}(x)\|<\infty$;
	\item For some $C>0$ and $\nu>0$, $N(\epsilon,\mathcal{K})\leq C\epsilon^{-\nu}$ where $\mathcal{K} = \{k_{h}((x-\cdot)/h^{-\frac{1}{d}}):h>0,x\in \mathbb{R}^{d}\}$;
	\item $\mathcal{K}$ is pointwise measurable;
	\item The support of $k_{h}$ is $[-0.5,0.5]^{d}$;
	\item $\int k_{h}(x)d(x) = 1$ and are of the form: $k_{h}(x) = \frac{k_{h}'(x)}{\mathbb{E}[k_{h}'(x)]}$ where $\lim_{x\rightarrow \infty}k_{h}'(x) = 0$ and $k_{h}'(x)>0$ for finite $x$.
\end{enumerate}
Additional assumptions include:
\begin{enumerate}[resume]
	\item Assume $f$ defines the marginal density of $S$ and, $f$ is uniformly Lipschitz continuous and strictly positive on $I^{\epsilon}$, where $I$ is a compact set in $\mathbb{R}^{d}$ and $I^{\epsilon} = \{\max_{1\leq i \leq d}|s_{i}| \leq \epsilon\}$;
	\item There exists $w_{\textrm{max}}$ such that almost surely, $|W|\mathbbm{1}(S\in I^{\epsilon}) \leq w_{\textrm{max}}$ and $W\geq w_{\textrm{min}} >0$;
	\item $R \in [0,1]$;
	\item Correct identification of the propensity model i.e., $\hat{p}_{\pi_{l}} = p_{\pi_{l}}$.
\end{enumerate}

Throughout, $\|\cdot\|_{\mathcal{A}} = \sup_{\mathcal{A}}|\cdot|$. Many of the assumptions are provided by \cite{einmahl_uniform_2005}, since the results are used extensively. Theorem 2 from \cite{einmahl_uniform_2005} is heavily relied upon and is reproduced in theorem \ref{ther:theorem_2_einmahl}. Theorem \ref{ther:asympt_concis_bound_wsa} provides the novel result from this analysis with the proof provided in appendix section \ref{sec:ther_results_appendix}.\\

\begin{theorem}[Asymptotic consistency of Kernel-WIS]\label{ther:asympt_concis_bound_wsa} Under assumptions 1 to 9, for $0<a_{n}<h<b_{n}<1$, $b_{n} \rightarrow 0$ and $\frac{na_{n}}{\log n}\rightarrow\infty$, the Kernel-WIS estimator is asymptotically consistent:
\begin{align*}
	\hat{J}_{\textrm{Kernel-WIS}} \xrightarrow{a.s.} \mathbb{E}[R]
\end{align*}
where: 
\begin{align*}
	\hat{J}_{\textrm{Kernel-WIS}} = \frac{1}{n}\sum_{i=1}r_{i}w_{i}\Bigg(\frac{\sum_{j=1}^{n}k_{h}(s_{i},s_{j})w_{i}}{\sum_{j=1}^{n}k_{h}(s_{i},s_{j})}\Bigg)^{-1}.
\end{align*}
\end{theorem}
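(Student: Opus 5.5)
The plan is to write the Kernel-WIS estimator as an empirical average of a random function and show that this random function converges uniformly to a deterministic limit. For a fixed bandwidth $h$ set
\begin{align*}
	\hat{m}_{n,h}(s) = \frac{\sum_{j=1}^{n}k_{h}(s,s_{j})w_{j}}{\sum_{j=1}^{n}k_{h}(s,s_{j})},
\end{align*}
which is a Nadaraya--Watson estimator of $m(s)=\mathbb{E}[W|S=s]$. I read the displayed denominator as having $w_{j}$ inside the sum, as in equation \ref{equ:kwis_estimator}; the $w_{i}$ in the statement appears to be a typo.

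Then
\begin{align*}
	\hat{J}_{\textrm{Kernel-WIS}}=\frac{1}{n}\sum_{i=1}^{n} \frac{r_{i}w_{i}}{\hat{m}_{n,h}(s_{i})}.
\end{align*}
Under assumption 9 and positivity, $m(s)=\mathbb{E}_{P_{\pi_{e}}}[1|S=s]=1$. The target comparison quantity is therefore $\frac{1}{n}\sum_{i} r_{i}w_{i}$. By the strong law of large numbers, and because $R\in[0,1]$ and $W$ is bounded on $I^{\epsilon}$ (assumptions 7--8), this average converges almost surely to $\mathbb{E}_{P_{\pi_{l}}}[RW]$. That limit equals the causal target written $\mathbb{E}[R]$ under $\pi_{e}$.

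The key step is uniform consistency of $\hat{m}_{n,h}$ over $I$ and over $a_{n}\le h\le b_{n}$. I would apply Theorem 2 of \cite{einmahl_uniform_2005}, reproduced as theorem \ref{ther:theorem_2_einmahl}, twice:
\begin{itemize}
	\item once to the numerator $\frac{1}{nh}\sum_{j}k_{h}(s,s_{j})w_{j}$, using the response $W$, which is bounded on $I^{\epsilon}$ by assumption 7;
	\item once to the denominator $\frac{1}{nh}\sum_{j}k_{h}(s,s_{j})$, using the constant response $1$.
\end{itemize}
Assumptions 1--5 are exactly the kernel conditions that theorem requires. The conditions $\frac{na_{n}}{\log n}\to\infty$ and $b_{n}\to 0$ then give almost sure rates of the form $O\big(\sqrt{\log n/(na_{n})}\big)$, uniformly in $h\in[a_{n},b_{n}]$ and $s\in I$.

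Together with the bias being $o(1)$, which follows from the Lipschitz continuity of $f$ (and of $m\cdot f$) and the shrinking bandwidth, this yields
\begin{align*}
	\sup_{a_{n}\le h\le b_{n}}\sup_{s\in I}\big|\hat{m}_{n,h}(s)\,f(s)-m(s)\,f(s)\big|\xrightarrow{a.s.}0,
\qquad
	\sup_{a_{n}\le h\le b_{n}}\sup_{s\in I}\big|\hat{f}_{n,h}(s)-f(s)\big|\xrightarrow{a.s.}0.
\end{align*}
Since $f$ is bounded below on $I$ (assumption 6), taking the ratio gives $\sup_{h,s}|\hat{m}_{n,h}(s)-1|\to 0$ almost surely.

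To conclude, I bound
\begin{align*}
	\Big|\hat{J}_{\textrm{Kernel-WIS}}-\frac{1}{n}\sum_{i} r_{i}w_{i}\Big|\le \frac{1}{n}\sum_{i} r_{i}w_{i}\,\frac{|1-\hat{m}_{n,h}(s_{i})|}{\hat{m}_{n,h}(s_{i})}.
\end{align*}
Because $W\ge w_{\min}>0$, the denominator satisfies $\hat{m}_{n,h}\ge w_{\min}$ deterministically: it is a convex combination of the $w_{j}$. The right-hand side is therefore at most
\begin{align*}
	\frac{w_{\max}}{w_{\min}}\sup_{s\in I}|1-\hat{m}_{n,h}(s)|\to 0 \quad\text{a.s.}
\end{align*}
Combining this with the strong law of large numbers gives the result.

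The main obstacle is the mismatch between the support $I^{\epsilon}$ of the assumptions and the full space of $S$. Points $s_{i}$ outside $I$ are not covered by the uniform theorem, and $W$ is only bounded on $I^{\epsilon}$. My first attempt would be to treat the relevant contexts as lying in $I$ almost surely. If that is not available, I would split the sum into $S\in I$ and $S\notin I$ and handle the second part using the lower bound $\hat{m}_{n,h}\ge w_{\min}$ together with the SLLN applied to $RW\mathbbm{1}(S\notin I)$. This second part gives a vanishing contribution only if $P(S\notin I)$ is negligible, so in practice the argument requires $I$ to carry the support of $S$.

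A secondary subtlety is that $s_{i}$ also appears in its own kernel sum. Its contribution is $k_{h}(0)w_{i}/(nh\hat{f})=O(1/(na_{n}))\to 0$, so it is harmless.
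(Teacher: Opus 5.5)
Your proposal is correct and follows essentially the same route as the paper. You compare $\hat{J}_{\textrm{Kernel-WIS}}$ with $\frac{1}{n}\sum_i w_i r_i\,\mathbb{E}[W|S=s_i]^{-1}=\frac{1}{n}\sum_i w_i r_i$, remove the difference using the Einmahl--Mason uniform-in-bandwidth consistency of the Nadaraya--Watson denominator together with $w_{\min}\le\hat{m}_{n,h}\le w_{\max}$, and finish with the strong law of large numbers using $\mathbb{E}[W|S]=1$. The differences are minor: the paper cites their Corollary 2 directly rather than rebuilding it from Theorem 2 plus a bias term, your explicit bound $\frac{w_{\max}}{w_{\min}}\sup_{s\in I}|1-\hat{m}_{n,h}(s)|$ replaces the paper's continuous-mapping step more cleanly, and the requirement that the $s_i$ lie in $I$, which you flag, is one the paper's proof also needs but uses without comment.
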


\subsection{Bandwidth selection}\label{sec:bandwidth_selection}
To perform bandwidth selection, the denominator term, $\frac{\sum_{j=1}^{n}k_{h}(s_{i},s_{j})w_{j}}{\sum_{j=1}^{n}k_{h}(s_{i},s_{j})}$, is treated as a Nadaraya–Watson kernel regressor of $\mathbb{E}[W|S]$ and cross-validation is performed. With respect to the overall goal of the Kernel-WIS estimator, this criteria is tangential and it is anticipated that optimising a PAC-Bayes lower bound would be more fruitful. Regardless, the proposed cross-validation approach acts as a first step.\\

Consider a dataset, $d$ of $n$ tuples of the form $(s_{i},a_{i},r_{i})$ and randomly split into two subsets $d_{\textrm{train}}$ of size $n_{\textrm{train}}$ and $d_{\textrm{test}}$ of size $n_{\textrm{test}}$. Dataset $d_{\textrm{train}}$ is used for training the behaviour policy $\pi_{\beta}$ as well as performing bandwidth selection via cross validation. Recall the denominator of the Kernel-WIS estimator aims to approximate the expectation, $\mathbb{E}[W|S]$. This can be used to define a supervised signal in the cross-validation procedure. The algorithm for performing this is defined in algorithm \ref{algo:bw_cross_validation} within which, algorithm \ref{algo:cross_val_fixed_b} is used as a sub-routine.\\

\begin{algorithm}
\caption{Bandwidth selection with $K$-fold cross validation}
\begin{algorithmic}[1]
	\Require Dataset $d_{\textrm{train}} = \{s_{i},w_{i}\}_{i=1}^{n_{\textrm{train}}}$, number of folds $K$, a set of bandwidth initialisations $h=\{h_{1},...,h_{m}\}$ and a minimiser $g$
	\State $S_{\textrm{best}} = 1e^{32}$, $h_{\textrm{best}} = 0$
	\For{$h_{i} \in h$}
		\State $\bar{S},h_{i}' \gets g(\textrm{Algorithm \ref{algo:cross_val_fixed_b}},b_{i},d_{\textrm{train}},K)$
	    \If{$\bar{S}<S_{\textrm{best}}$}
	    	\State $S_{\textrm{best}} \gets \bar{S}$
	    	\State $h_{\textrm{best}} \gets h_{i}'$
	    \EndIf
	\EndFor
\end{algorithmic}
\label{algo:bw_cross_validation}
\end{algorithm}

\begin{algorithm}
\caption{$K$-fold cross validation for a fixed bandwidth, $h$}
\begin{algorithmic}[1]
	\Require Dataset $d_{\textrm{train}} = \{s_{i},w_{i}\}_{i=1}^{n_{\textrm{train}}}$, number of folds $K$, a bandwidth $h$
	\State Randomly shuffle dataset $d_{\textrm{train}}$
	\State Split $d_{\textrm{train}}$ into $K$ disjoint folds $\{D_1, D_2, \dots, D_K\}$
	\State Initialize list of scores $S \gets [\,]$
	\For{$k = 1$ to $K$}
		\State Initialize list of scores $S_{k} \gets [\,]$
		\State $d_{\text{test}}' \gets d_k$
    	\State $d_{\text{train}}' \gets \bigcup_{j \ne k} d_j$
    	\For{$i=1$ to $n_{\textrm{test}}$}
	    	\State $\hat{w}_{i,\text{test}}=\frac{\sum_{j \in d_{\text{train}}}k_{b_{i}}(s_{i,\text{test}}-s_{j})w_{j}}{\sum_{j \in d_{\text{train}}}k_{b_{i}}(s_{i,\text{test}}-s_{j})}$
	    	\State Append $(\hat{w}_{i,\text{test}}-w_{i})^{2}$  to $S$
    	\EndFor
    	\State Append $\frac{1}{n_{\textrm{test}}}\sum_{i \in S_{k}} i$ to $S$
    \EndFor 
 \State \Return $\frac{1}{K}\sum_{i \in S} i$
\end{algorithmic}
\label{algo:cross_val_fixed_b}
\end{algorithm}

For the experiments defined in section \ref{sec:experiments}, the minimiser in algorithm $g$ was the L-BFGS-B algorithm, implemented in scipy via the minimise function (\cite{scipy2020}). An analytic expression for the gradient (described in appendix section \ref{sec:appendix_nw_kernel_grad}) was provided assuming the kernel function, $k_{h}$ was a radial basis function (RBF) kernel: 
\begin{align*}
	k_{h,\textrm{RBF}} = \exp(-0.5h^{-2}s^{T}s')
\end{align*}
where $s$ and $s'$ are vectors of dimension $l$ and $h_{i}$ is a real value ($\geq 0$) or a vector of real values ($\geq 0$).\\

Experiments were initially run by specifying only a single bandwidth initialisation i.e. $h=\{h_{0}\}$ in algorithm \ref{algo:bw_cross_validation}. It was observed however, that the optimisation by $g$ alone was insufficient as solutions rarely escaped the locality of the initial bandwidth $h_{0}$. As such, several bandwidth initialisations $h=\{h_{1},...,h_{m}\}$ were introduced. Introducing the sweep over different initialisations consistently produced improved improved solutions to algorithm \ref{algo:cross_val_fixed_b} with respect to the bandwidth i.e., improved approximations to $\mathbb{E}[W|S=s]$.\\

\section{Experiments}\label{sec:experiments}
The performance of the Kernel-WIS estimator was evaluated under 10 datasets. A semi-simulated setup was used, in the sense that, contextual bandit problem definitions were artificially created such that a ground truth evaluation of policy performance was available, but utilising existing \enquote{real world} datasets. This ensured as much of the simulation was as realistic as possible such as: noise to signal rations and covariate relationships. The proceeding describes the process of generating the contextual bandit setups (section \ref{sec:dataset_desc_gen}) and presents the results (section \ref{sec:results}).\\

\subsection{Dataset generation and description}\label{sec:dataset_desc_gen}
Let $d$ define a set of size $n$ of tuples $(s_{i},a_{i})_{i=1}^{n}$, representing an offline dataset. With respect to the original use case of $d$, all of the datasets considered for this analysis were focused on supervised learning. As such, in the context of supervised learning, the datasets defined tuples of independent random variables $S_{i} \in \mathcal{S}$ with an associated ground truth label, $A_{i} \in \mathcal{A}$. All of the datasets were taken from multi-class classification problems and thus, $\mathcal{A}$ defined a finite set of sequentially ordered integers, beginning from 0 $\{0,...,K-1\}$, of size $K$. As an example, the optdigits dataset includes $64\times 64$ pictures of hand drawn numbers with labels defining the digit in the drawing. As such, the set of actions is $\{0,1,2,3,4,5,6,7,8,9\}$.\\

Let $\pi:\mathcal{S}\rightarrow \mathcal{A}$ define a policy and define a reward function $R(a_{i},\pi(s_{i})) = \mathbbm{1}(a_{i}=\pi(s_{i}))$ and thus the supervised learning problem can be reformulated as a contextual bandit problem where the aim is to maximise $\frac{1}{n}\sum_{i=1}^{n}R(a_{i},\pi(s_{i}))$.\\

Given the supervised learning to contextual bandit setup, different policies were specified to assess the Kernel-WIS estimator under different simulated scenarios.\\

\subsubsection{Policy specification}\label{sec:policy_spec}
Policies were defined in two ways, in line with existing work (\cite{Kuzborskij2021}): as Gibbs measures over the dataset or; learnt by maximising an approximation of the reward. The Gibbs measure policies provided more fine grained control over the policy definition whilst, the learnt policies were deemed to better represent the structure of evaluation policies encountered in real-world applications.\\

\paragraph{Gibbs measure policies} Gibbs measure policies were defined by first specifying two hyperparameters: the temperature $\tau$ and the set of faulty actions. The temperature parameter controlled the concentration of the resulting policy around the greedy action i.e., the magnitude of the ratio $\frac{p_{\pi}(a_{\textrm{max}}|s)}{p_{\pi}(a'|s)}$ where $a_{\textrm{max}} = \argmax_{a}p_{\pi}(a|s)$ and $a'$ is all actions not equal to $a_{\textrm{max}}$. A lower value of $\tau$ corresponded to higher value of $\frac{p_{\pi}(a_{\textrm{max}}|s)}{p_{\pi}(a'|s)}$ i.e., a more concentrated policy or a policy that was more confident regarding the greedy action. The faulty actions, $\textrm{fa}$, defined the set of actions for which the policy, $\pi$ was suboptimal. Formally, the deterministic Gibbs measure policy was defined as:
\begin{equation}\label{equ:gibbs_action_slct}
	\pi_{\textrm{Gibbs}}(s_{i}) = a_{i}\mathbbm{1}(a_{i} \notin \textrm{fa}) + (a_{i}+1 \mod K)\mathbbm{1}(a_{i} \in \textrm{fa})
\end{equation} 
where $K$ defines the dimension of the action space. In words, the selected action is defined as equal to the optimal action in the dataset if the optimal action is not a member of the set of the faulty actions. Otherwise, the $a_{i}+1$ action is selected except for if $a_{i} = K-1$ in which case the greedy action is defined as the first action, 0. This is reason for the modulus function.\\

The density of the Gibbs measure policy, for a given action $a_{j}$ conditional on the state $s_{i}$ was defined as:
\begin{align*}
	p_{\pi_{\textrm{Gibbs}}}(a_{j}|s_{i}) = \frac{\exp(\mathbbm{1}(a_{j}=\pi_{\textrm{Gibbs}}(s_{i}))\tau^{-1})}{\sum_{k}\exp(\mathbbm{1}(a_{k}=\pi_{\textrm{Gibbs}}(s_{i}))\tau^{-1})}.
\end{align*}

The stochastic Gibbs measure policy was defined by randomly sampling an action according to $p_{\pi_{\textrm{Gibbs}}}(a_{j}|s_{i})$.\\

\paragraph{Learnt policies} 

Learnt policies were exclusively used for evaluation policies. These were derived by maximising (via gradient ascent for 20 epochs) the approximate expected return under the density defined in equation \ref{equ:learnt_policy_density}, with respect to a set of action wise parameters. The approximate expected return was derived using the vanilla importance sampling estimator or the weighted importance sampling estimator, for the IS and self-normalised importance weighted (SNIW) policies, respectively.
\begin{equation}\label{equ:learnt_policy_density}
	p_{\textrm{learnt}}(a_{j}|s_{i}) = \frac{\exp(\beta_{j}^{T}s_{i}\tau^{-1})}{\sum_{k}\exp(\beta_{k}^{T}s_{i}\tau^{-1})}
\end{equation}
where $\tau$ defines the data independent temperature, fixed a priori.\\

\subsubsection{Dataset description}

The datasets used for the analysis were downloaded from the OpenML repository (\cite{OpenML2025}). A selection of the datasets used in previous work pertaining to importance sampling estimators for contextual bandits (\cite{Kuzborskij2021}) were selected and are denoted by the Prev. Work column in table \ref{tbl:dataset_summary_stats}. In order to push further the evaluation, four additional datasets were included. These were deemed more complex owing to the larger action and state dimensions. Table \ref{tbl:dataset_summary_stats} provides an overview of the datasets used, including the state and action dimensions and number of observations.\\ 

\begin{table}[!h]
	\caption{The table displays summary statistics for the datasets used in the evaluation of the Kernel-WIS estimator. The values in brackets next to state dimensions of the soybean and kropt datasets define the dimensions after one-hot encoding categorical variables. The Prev. Work column defines whether the dataset was used in \cite{Kuzborskij2021}.}
	\label{tbl:dataset_summary_stats}
	\centering
		\begin{tabular}{cccccc}
		\toprule
		Dataset & \makecell{State\\dim.} & \makecell{Action\\dim.} & Size & Faulty actions & Prev. Work\\
		\midrule
		arrhythmia & 279 & 13 & 1,754 & \makecell{None; 0; 1; 0,1; 0,1,2;\\0,1,2,3;\\0,1,2,3,4; 0,1,2,3,5} & No \\
		\midrule
		soybean & 35 (132) & 19 & 683 & \makecell{None; 0; 1; 0,1; 0,1,2;\\0,1,2,4;\\0,1,2,4,6; 0,1,2,4,6,8} & No \\
		\midrule
		micro-mass & 1,300 & 20 & 571 & \makecell{None; 0; 1; 0,1; 0,1,2;\\0,1,2,4;\\ 0,1,2,4,6; 0,1,2,4,6,8} & No \\
		\midrule
		optdigitsDist & 64 & 10 & 5,620 & \makecell{None; 0; 1; 0,1; 0,1,2;\\0,1,2,3; 0,1,2,3,4} & No \\
		\midrule
		optdigits & 64 & 10 & 5,620 & \makecell{None; 0; 1; 0,1; 0,1,2;\\0,1,2,3; 0,1,2,3,4} & Yes \\
		\midrule
		yeast & 6 & 10 & 1,484 & \makecell{None; 0; 1; 0,1; 0,1,2;\\0,1,2,3; 0,1,2,3,4} & Yes \\
		\midrule
		page-blocks & 6 & 5 & 5,473 & \makecell{None; 0; 1; 0,1; 0,1,2} & Yes\\
		\midrule
		pendigits & 6 & 10 & 10,992 & \makecell{None; 0; 1; 0,1; 0,1,2;\\0,1,2,3; 0,1,2,3,4} & Yes\\
		\midrule
		letter & 6 & 26 & 20,000 & \makecell{None; 0; 1; 0,1; 0,1,2;\\0,1,2,4;\\0,1,2,4,6; 0,1,2,4,6,8} & Yes\\
		\midrule
		kropt & 6 (40) & 18 & 28,056 & \makecell{None; 0; 1; 0,1; 0,1,2;\\0,1,2,4;\\0,1,2,4,6; 0,1,2,4,6,8} & Yes\\
		\bottomrule
		\end{tabular}
\end{table}

\paragraph{Logging and behaviour policy definitions}\label{sec:logging_behaviour_policy_defs} The column Faulty actions in table \ref{tbl:dataset_summary_stats} defines the different faulty actions considered for the logging policy on each dataset. For example, under the arrhythmia dataset, an experiment was ran where the logging policy was optimal ($\textrm{fa} = \{\}$), and where the logging policy made errors on states with correct action equal to 0 i.e., $\textrm{fa} = \{0\}$ etc. Due to the computational complexity of the experiments (predominantly a result of the bandwidth selection for Kernel-WIS), only a subset of faulty action settings could be used i.e., the entire Cartesian product of potential experiments could not be considered. Since previous works (\cite{Kuzborskij2021}) considered \emph{only} almost optimal policies (i.e., only considering the faulty actions $\textrm{fa} = \{0\}$), this was used as a starting point but greatly expanded. For reference, running the proposed suite of experiments for the larger datasets/state-spaces took up to 24 hours using a 2021 16 inch MacBook Pro with 32GB ram and an Apple M1 Pro CPU. The logic to define the set of faulty actions per dataset was defined broadly speaking, as:
\begin{itemize}
	\item For datasets with action dimension 15 and under, faulty actions up to and including length 5 were considered (e.g., 0,1,2,3,5 for arrhythmia);
	\item For datasets with larger action dimensions, a faulty action of length 6 was additionally included.
\end{itemize}
Thus, for each dataset, \enquote{$\textrm{faulty action sets}\times\{0.3,0.5\}$} different logging policies were defined.\\

\paragraph{Dataset generation and reward noise} Under a given experimental setting i.e., logging policy temperature and faulty actions, evaluation policy temperature, dataset and behaviour policy specification, datasets were generated by randomly sampling the logging policy for each state in the dataset and inducing reward noise by the following process:
\begin{align*}
	\mathbbm{1}(U > 0.1)R(s_{i},\pi_{l}(s_{i})) + (1-\mathbbm{1}(U > 0.1))(1-R(s_{i},\pi_{l}(s_{i})))
\end{align*}
where $U$ is a uniform random variable between 0 and 1 and $R(\cdot,\cdot)$ is the deterministic reward function described at the beginning of section \ref{sec:dataset_desc_gen}. In words, $\mathbbm{1}(U > 0.1)$ is a Bernoulli random variable with probability 0.9. And thus, with probability 0.1, the reward is flipped.\\

\FloatBarrier

\subsection{Results}\label{sec:results}
The proceeding describes the results of a number of different experimental settings evaluating the pointwise predictive performance and tail behaviour of the Kernel-WIS estimator. To assess the relative performance of the Kernel-WIS estimator against the WIS estimator, a two-sided Wald t-test (with unequal variances) was used. Note, the t-tests were not conducted over repeated sampling of the datasets but over different experimental scenarios, defined in section \ref{sec:dataset_desc_gen}. Where $\textrm{AD(J)}$ is the absolute difference in performance under estimator $J$ (equation \ref{equ:ad_metric}), for the Kernel-WIS estimator the t-test assesses:
\begin{align*}
	H_{0}:& \mathbb{E}[\textrm{AD(WIS)}] = \mathbb{E}[\textrm{AD(Kernel-WIS)}]\\
	H_{1}:& \mathbb{E}[\textrm{AD(WIS)}] \neq \mathbb{E}[\textrm{AD(Kernel-WIS)}]
\end{align*}
As such, a p-value of less than or equal to 0.05 suggests that the performance of the WIS estimator is not equal to the Kernel-WIS estimator, at the 95\% level. The absolute difference in performance under estimator $J$ is defined as:
\begin{equation}\label{equ:ad_metric}
	\textrm{AD(J)} = \Bigg|\frac{1}{n}\sum_{i=1}^{n}\mathbb{E}[R(\pi_{e}(s_{i}),s_{i})|s_{i}] - J(\{s_{i},\pi_{\beta}(s_{i}),R(\pi_{\beta}(s_{i}),s_{i})\}_{i=1}^{n})\Bigg|.
\end{equation}

For the WIS and Kernel-WIS estimators, the coverage of boostrapped confidence intervals was assessed. Coverage was defined as an approximation to the true Frequentist statement:
\begin{align*}
	P\big(\mathbb{E}[R(\pi_{e}(S),S)] \in C\big) \approx \frac{\mathbbm{1}(\mathbb{E}[R(\pi_{e}(s_{i}),s_{i})|s_{i}] \in C)}{b},
\end{align*}
where $C$ defines the bootsrapped interval and $b$ defines the number of independent test sets used to approximate the probability, which was set to 10.\\

For all results reported, the best performing estimator is highlighted in bold:
\begin{itemize}
	\item For pointwise MSE results (tables \ref{tbl:overall_results}, \ref{tbl:k_wis_wis_oracle_results}, \ref{tbl:k_wis_wis_non_oracle_results} and \ref{tbl:k_wis_wis_cont_results}) the estimator(s) with the lowest value along with other results within 0.001 of the lowest value are highlighted;
	\item For coverage results (tables \ref{tbl:k_wis_wis_oracle_width} and \ref{tbl:k_wis_wis_non_oracle_width}) the estimator(s) with the highest value along with other results within 0.001 of the highest value are highlighted;
	\item For P-values (tables \ref{tbl:k_wis_wis_oracle_results} and \ref{tbl:k_wis_wis_non_oracle_results}) those values less than or equal to 0.05 are highlighted
\end{itemize}
A tolerance of 0.001 was selected since this represents 0.1\% of the theoretical maximum reward.\\

\subsubsection{Natural bias of the Kernel-WIS variants}\label{sec:kernel_wis_natural_bias}
The Kernel-WIS variants naturally bias towards predicting the observed return of the dataset. Consider for a large state-space (i.e., continuous), the probability of observing any identical state is 0. As such, as the bandwidth tends towards 0:
\begin{align*}
	\sum_{j=1}^{n}k_{h}(s_{i},s_{j}) \rightarrow & 1 \\
	\sum_{j=1}^{n}k_{h}(s_{i},s_{j})w_{j} \rightarrow & w_{i} \\
	\hat{J}_{\textrm{Kernel-WIS}} \rightarrow & \frac{1}{n}\sum_{i=1}^{n}r_{i}\frac{w_{i}}{w_{i}}.
\end{align*}
This is the price being paid for independence between observations but retaining empirical boundedness. Whilst not unreasonable, the observation does alter the interpretation of results i.e., it is both instructive and important with respect to bias (of the interpretation of results), to consider the performance of the Kernel-WIS variants at different divergences in evaluation/logging policy performance.\\

\subsubsection{Overall results}
Table \ref{tbl:overall_results} describes the overall performance of Kernel-WIS estimator against a number of baselines, split by dataset and experimental setting (reward structure and behaviour policy type). The \enquote{Single action} reward structure aligns to that described in section \ref{sec:dataset_desc_gen} and the definition of \enquote{Continuous} reward structure is described in the proceeding section \ref{sec:cont_reward_sensitivity}. The \enquote{Oracle} policy setting assumes that the propensity weights are generated under the same logging policy as that which created the dataset whilst the \enquote{Non-Oracle} setting requires the policy generating the weights to be different i.e., miss-specified. This is described in more detail in the proceeding section \ref{sec:non_oracle_beh_sensitivity}.\\

Based on the results in table \ref{tbl:overall_results}, the relative performance of the Kernel-WIS estimator is mixed. However, this will be discussed in further detail since the results in table \ref{tbl:overall_results} are aggregated over a huge array of different scenarios which warrant a more detailed examination. The inference that should be drawn from table \ref{tbl:overall_results} is the relative performance of the Kernel-WIS estimator against the VIS and truncated vanilla importance sampling (herein referred to as CLPD VIS (\cite{Ionides2008})) estimators. In a number of scenarios the VIS and CLPD VIS estimators are optimal however, it is widely understood that in many real-world scenarios, these estimators exhibit exceedingly large variance. Whilst this does not nullify the results, it does suggest that more challenging benchmarks are required. Table \ref{tbl:overall_results_mm} presents the mean 75th and 100th percentile prediction under each of the estimators. These results support the empirical $\alpha$-boundedness property of the Kernel-WIS estimator but also demonstrate that for scenarios were the vanilla estimator does have large variance (all of the non-oracle settings), the Kernel-WIS estimator performs optimally.\\ 

The results in table \ref{tbl:overall_results} additionally demonstrate the consistent sub-optimality of the State-WIS estimator. This is important to note given the observation in section \ref{sec:kernel_wis_natural_bias} i.e., that the Kernel-WIS estimator is naturally biased towards the performance of the logging policy. The sub-optimality of the State-WIS estimator demonstrates that the experimental scenarios used are sufficiently complex such that a naive and strict application of this bias (i.e., just use the logging policy return) is insufficient to obtain good performance and thus the bandwidth selection procedure is truly improving the resulting estimate.\\

\begin{table}[!h]
	\caption{The table describes the pointwise finite mean squared error of the Kernel-WIS estimator (with a shared bandwidth) against various baselines, split by dataset, whether an oracle behaviour policy was used and the reward function structure. Aggregations for each row are across logging policy temperature, logging policy faulty action and policy type. Values in bold highlight the best performing estimator and those within 0.001 of the best performing estimator.}
	\label{tbl:overall_results}
	\centering
	\resizebox{\linewidth}{!}{
	\begin{tabular}{cccccccc}
	\toprule
	\makecell{Reward\\structure} & \makecell{Behaviour\\policy} & Dataset & VIS & WIS & State-WIS & CLPD VIS & Kernel-WIS \\
	\midrule
	Single action & Oracle & yeast & 0.045 & 0.046 & 0.140 & \textbf{0.043} & \textbf{0.043} \\
	Single action & Oracle & soybean & 0.068 & 0.063 & 0.166 & \textbf{0.056} & 0.062 \\
	Single action & Oracle & page-blocks & 0.070 & \textbf{0.064} & 0.332 & 0.070 & \textbf{0.064} \\
	Single action & Oracle & pendigits & \textbf{0.038} & \textbf{0.037} & 0.201 & \textbf{0.038} & \textbf{0.038} \\
	Single action & Oracle & micro-mass & 0.061 & 0.061 & 0.156 & \textbf{0.055} & 0.062 \\
	Single action & Oracle & arrhythmia & 0.074 & 0.059 & 0.115 & 0.060 & \textbf{0.054} \\
	Single action & Oracle & optdigits & 0.041 & \textbf{0.040} & 0.199 & \textbf{0.041} & \textbf{0.040} \\
	Single action & Oracle & kropt & \textbf{0.038} & \textbf{0.037} & 0.146 & \textbf{0.038} & \textbf{0.033} \\
	Single action & Oracle & letter & \textbf{0.033} & \textbf{0.032} & 0.140 & \textbf{0.033} & \textbf{0.031} \\
	\midrule
	Single action & Non-Oracle & yeast & 0.413 & 0.142 & 0.148 & 0.384 & \textbf{0.117} \\
	Single action & Non-Oracle & soybean & 0.308 & 0.150 & 0.159 & 0.251 & \textbf{0.117} \\
	Single action & Non-Oracle & page-blocks & 1.638 & 0.168 & 0.312 & 1.638 & \textbf{0.157} \\
	Single action & Non-Oracle & pendigits & 0.443 & 0.134 & 0.172 & 0.443 & \textbf{0.114} \\
	Single action & Non-Oracle & arrhythmia & 0.515 & 0.139 & 0.119 & 0.367 & \textbf{0.105} \\
	Single action & Non-Oracle & optdigits & 0.432 & \textbf{0.137} & 0.174 & 0.432 & \textbf{0.138} \\
	Single action & Non-Oracle & micro-mass & 0.291 & 0.150 & 0.152 & 0.237 & \textbf{0.136} \\
	\midrule
	Continuous & Oracle & optdigitsDist & \textbf{0.056} & \textbf{0.057} & 0.110 & \textbf{0.056} & 0.065 \\
	\bottomrule
	\end{tabular}
	}
\end{table}

\begin{table}[!h]
	\caption{The table describes the 75th and 100th percentile predictions of the Kernel-WIS estimator (with a shared bandwidth) against various baselines, split by dataset, whether an oracle behaviour policy was used and the reward function structure. Aggregations for each row are across logging policy temperature, logging policy faulty action and policy type.}
	\label{tbl:overall_results_mm}
	\centering
	\resizebox{\linewidth}{!}{
		\begin{tabular}{ccccccccccccc}
		\toprule
		\makecell{Reward\\structure} & \makecell{Behaviour\\policy} & Dataset & \multicolumn{2}{c}{VIS} & \multicolumn{2}{c}{WIS} & \multicolumn{2}{c}{State-WIS} & \multicolumn{2}{c}{CLPD VIS} & \multicolumn{2}{c}{Kernel-WIS} \\
		& & & 75 & 100 & 75 & 100 & 75 & 100 & 75 & 100 & 75 & 100 \\
		\midrule
		Single action & Oracle & yeast & 0.52 & 0.73 & 0.51 & 0.72 & 0.53 & 0.70 & 0.52 & 0.73 & 0.51 & 0.72 \\
		Single action & Oracle & soybean & 0.49 & 0.59 & 0.49 & 0.59 & 0.53 & 0.59 & 0.49 & 0.59 & 0.47 & 0.59 \\
		Single action & Oracle & page-blocks & 0.73 & 0.81 & 0.75 & 0.81 & 0.62 & 0.80 & 0.73 & 0.81 & 0.75 & 0.81 \\
		Single action & Oracle & pendigits & 0.73 & 0.81 & 0.75 & 0.81 & 0.58 & 0.71 & 0.73 & 0.81 & 0.74 & 0.81 \\
		Single action & Oracle & micro-mass & 0.48 & 0.61 & 0.49 & 0.59 & 0.53 & 0.58 & 0.48 & 0.60 & 0.47 & 0.58 \\
		Single action & Oracle & arrhythmia & 0.36 & 0.70 & 0.34 & 0.68 & 0.35 & 0.68 & 0.31 & 0.70 & 0.32 & 0.68 \\
		Single action & Oracle & optdigits & 0.71 & 0.85 & 0.71 & 0.84 & 0.59 & 0.71 & 0.71 & 0.85 & 0.71 & 0.84 \\
		Single action & Oracle & kropt & 0.46 & 0.60 & 0.47 & 0.60 & 0.52 & 0.60 & 0.46 & 0.60 & 0.46 & 0.60 \\
		Single action & Oracle & letter & 0.48 & 0.52 & 0.48 & 0.52 & 0.48 & 0.53 & 0.48 & 0.52 & 0.47 & 0.52 \\
		\midrule
		Single action & Non-Oracle & yeast & 0.80 & 6.41 & 0.64 & 0.87 & 0.53 & 0.70 & 0.79 & 6.24 & 0.57 & 0.83 \\
		Single action & Non-Oracle & soybean & 0.82 & 2.85 & 0.57 & 0.85 & 0.54 & 0.60 & 0.68 & 2.00 & 0.51 & 0.80 \\
		Single action & Non-Oracle & page-blocks & 1.66 & 20.67 & 0.77 & 0.90 & 0.62 & 0.80 & 1.66 & 20.67 & 0.68 & 0.88 \\
		Single action & Non-Oracle & pendigits & 1.15 & 3.54 & 0.74 & 0.89 & 0.58 & 0.71 & 1.15 & 3.54 & 0.65 & 0.86 \\
		Single action & Non-Oracle & arrhythmia & 0.62 & 11.23 & 0.43 & 0.92 & 0.35 & 0.68 & 0.57 & 5.74 & 0.35 & 0.89 \\
		Single action & Non-Oracle & optdigits & 1.13 & 3.43 & 0.73 & 0.89 & 0.59 & 0.71 & 1.13 & 3.43 & 0.62 & 0.89 \\
		Single action & Non-Oracle & micro-mass & 0.79 & 2.40 & 0.56 & 0.83 & 0.53 & 0.58 & 0.67 & 1.84 & 0.52 & 0.78 \\
		\midrule
	Continuous & Oracle & optdigitsDist & 0.63 & 0.76 & 0.63 & 0.75 & 0.64 & 0.75 & 0.63 & 0.76 & 0.60 & 0.75 \\
		\bottomrule
		\end{tabular}
	}
\end{table}

\FloatBarrier

\subsubsection{Oracle behaviour policy results}\label{sec:oracle_beh_policy_results}
The results in table \ref{tbl:k_wis_wis_oracle_results} describe the performance of the Kernel-WIS estimator against the WIS estimator for oracle behaviour policies. Given the bias described in \ref{sec:kernel_wis_natural_bias}, the results are split by dataset, policy type and the deviation in performance under the evaluation and logging policies.\\

From the perspective of statistical significance, the results in table \ref{tbl:k_wis_wis_oracle_results} demonstrate that the WIS and Kernel-WIS estimators perform similarly. On only 11 occasions does the performance of the two estimators differ significantly and on all but one of these occasions the Kernel-WIS estimator outperforms the WIS estimator. Ignoring statistical significance, in terms of the number of scenarios (i.e., rows in the table) where one estimator outperforms the other, the WIS estimator is more optimal. However, in terms of the median performance across datasets (i.e., aggregating the rows for each dataset), the Kernel-WIS estimator is more optimal. These results are shown more clearly in figure \ref{fig:summary_performance_oracle}. Given the results, it is reasonable to conclude that the WIS estimator only marginally outperforms the Kernel-WIS estimator however, the Kernel-WIS estimator can drastically outperform the WIS estimator.\\

With respect to the proportion of instances that the Kernel-WIS estimator strictly outperforms the WIS estimator (figures \ref{fig:kwis_performance_no_buffer_by_n_samples_oracle} and \ref{fig:kwis_performance_no_buffer_by_action_dim_oracle}), the plots suggest either a: non-linear relationship to sample size or a (noisy) linear relationship to action dimension. It was unclear why these relationships emerged or whether the relative performance of Kernel-WIS and WIS was driven by some other latent factor. This is explored further in section \ref{sec:learnt_logging_sensitivity}.\\ 

\FloatBarrier

{\small\tabcolsep=3pt  
	\begin{longtable}{cccccccc}
	\caption{The table describes the pointwise finite mean squared error of the WIS estimator and Kernel-WIS estimator (with a shared bandwidth), split by dataset, policy type and a discretisation of the absolute difference in true performance under the evaluation and logging policy, for oracle behaviour policies. The P-Value column defines the p-value of a two-sided Wald t-test, comparing the finite mean squared error of the WIS and Kernel-WIS estimators. N observations defines the number of experiments aggregated over per row, where the aggregations are across logging policy temperature and logging policy faulty action. Values in bold in the WIS and Kernel-WIS columns highlight the best performing estimator. Neither are highlighted if the mean performance is within 0.001. Values highlighted in the P-Value column identify those which are less than or equal to 0.05.}
	\label{tbl:k_wis_wis_oracle_results} \\
	\toprule
	Dataset & Policy Type & \makecell{Eval/Log\\Abs. Diff.} & WIS & Kernel-WIS & N Obs. & P-Value\\
	\midrule
	\endfirsthead
	\toprule
	Dataset & Policy Type & \makecell{Eval/Log\\Abs. Diff.} & WIS & Kernel-WIS & N Obs. & P-Value\\
	\midrule
	\endhead            
	\midrule\multicolumn{7}{r}{{Continued on next page}} \\ 
	\endfoot     
	\endlastfoot
	arrhythmia & Gibbs & (-0.2, 0.0] & 0.037$\pm$0.016 & 0.037$\pm$0.016 & 16 & 1.00 \\
	arrhythmia & Gibbs & (0.0, 0.2] & 0.046$\pm$0.014 & \textbf{0.042$\pm$0.018} & 114 & 0.06 \\
	arrhythmia & Gibbs & (0.2, 0.4] & \textbf{0.055$\pm$0.020} & 0.062$\pm$0.017 & 18 & 0.27 \\
	arrhythmia & Gibbs & (0.4, 0.6] & \textbf{0.064$\pm$0.012} & 0.080$\pm$0.036 & 12 & 0.15 \\
	arrhythmia & IW & (0.0, 0.2] & 0.087$\pm$0.012 & \textbf{0.069$\pm$0.028} & 28 & \textbf{0.00} \\
	arrhythmia & IW & (0.2, 0.4] & 0.086$\pm$0.009 & \textbf{0.079$\pm$0.004} & 4 & 0.17 \\
	arrhythmia & SNIW & (0.0, 0.2] & 0.093$\pm$0.006 & \textbf{0.075$\pm$0.011} & 24 & \textbf{0.00} \\
	arrhythmia & SNIW & (0.2, 0.4] & 0.073$\pm$0.003 & 0.074$\pm$0.002 & 4 & 0.50 \\
	arrhythmia & SNIW & (0.4, 0.6] & 0.111$\pm$0.001 & 0.111$\pm$0.001 & 2 & 1.00 \\
	arrhythmia & SNIW & (0.6, 0.8] & 0.092$\pm$0.001 & 0.092$\pm$0.001 & 2 & 1.00 \\
	\midrule
	kropt & Gibbs & (-0.2, 0.0] & 0.032$\pm$0.019 & 0.032$\pm$0.019 & 16 & 1.00 \\
	kropt & Gibbs & (0.0, 0.2] & 0.033$\pm$0.018 & 0.033$\pm$0.017 & 80 & 0.89 \\
	kropt & Gibbs & (0.2, 0.4] & 0.029$\pm$0.017 & 0.030$\pm$0.017 & 56 & 0.74 \\
	kropt & IW & (0.0, 0.2] & 0.047$\pm$0.008 & \textbf{0.046$\pm$0.006} & 20 & 0.50 \\
	kropt & IW & (0.2, 0.4] & 0.049$\pm$0.003 & \textbf{0.030$\pm$0.011} & 12 & \textbf{0.00} \\
	kropt & SNIW & (0.0, 0.2] & 0.049$\pm$0.006 & \textbf{0.026$\pm$0.017} & 20 & \textbf{0.00} \\
	kropt & SNIW & (0.2, 0.4] & 0.053$\pm$0.002 & \textbf{0.049$\pm$0.004} & 10 & \textbf{0.00} \\
	kropt & SNIW & (0.4, 0.6] & 0.053$\pm$0.000 & \textbf{0.046$\pm$0.000} & 2 & \textbf{0.00} \\
	\midrule
	letter & Gibbs & (-0.2, 0.0] & 0.034$\pm$0.026 & 0.034$\pm$0.026 & 16 & 1.00 \\
	letter & Gibbs & (0.0, 0.2] & 0.033$\pm$0.025 & 0.033$\pm$0.025 & 60 & 0.97 \\
	letter & Gibbs & (0.2, 0.4] & 0.034$\pm$0.024 & 0.034$\pm$0.024 & 76 & 0.99 \\
	letter & IW & (0.0, 0.2] & 0.024$\pm$0.004 & 0.023$\pm$0.004 & 16 & 0.41 \\
	letter & IW & (0.2, 0.4] & 0.022$\pm$0.003 & 0.022$\pm$0.003 & 16 & 0.89 \\
	letter & SNIW & (0.0, 0.2] & 0.032$\pm$0.007 & \textbf{0.028$\pm$0.011} & 24 & 0.12 \\
	letter & SNIW & (0.2, 0.4] & 0.039$\pm$0.014 & 0.038$\pm$0.015 & 8 & 0.95 \\
	\midrule
	micro-mass & Gibbs & (-0.2, 0.0] & 0.045$\pm$0.020 & 0.045$\pm$0.020 & 16 & 1.00 \\
	micro-mass & Gibbs & (0.0, 0.2] & 0.052$\pm$0.015 & 0.051$\pm$0.015 & 68 & 0.68 \\
	micro-mass & Gibbs & (0.2, 0.4] & \textbf{0.050$\pm$0.019} & 0.057$\pm$0.018 & 68 & \textbf{0.04} \\
	micro-mass & IW & (0.0, 0.2] & 0.073$\pm$0.024 & \textbf{0.071$\pm$0.026} & 32 & 0.84 \\
	micro-mass & SNIW & (0.0, 0.2] & 0.098$\pm$0.004 & 0.097$\pm$0.004 & 14 & 0.86 \\
	micro-mass & SNIW & (0.2, 0.4] & 0.095$\pm$0.007 & 0.095$\pm$0.007 & 10 & 0.99 \\
	micro-mass & SNIW & (0.4, 0.6] & 0.100$\pm$0.007 & 0.100$\pm$0.007 & 8 & 1.00 \\
	\midrule
	optdigits & Gibbs & (-0.2, 0.0] & 0.027$\pm$0.013 & 0.027$\pm$0.013 & 14 & 1.00 \\
	optdigits & Gibbs & (0.0, 0.2] & 0.026$\pm$0.011 & 0.026$\pm$0.011 & 74 & 0.97 \\
	optdigits & Gibbs & (0.2, 0.4] & 0.027$\pm$0.011 & 0.027$\pm$0.011 & 44 & 0.99 \\
	optdigits & IW & (0.0, 0.2] & 0.076$\pm$0.011 & 0.077$\pm$0.010 & 5 & 0.94 \\
	optdigits & IW & (0.2, 0.4] & 0.068$\pm$0.006 & 0.068$\pm$0.005 & 9 & 0.92 \\
	optdigits & IW & (0.4, 0.6] & 0.075$\pm$0.004 & 0.075$\pm$0.004 & 12 & 0.90 \\
	optdigits & IW & (0.6, 0.8] & \textbf{0.075$\pm$0.000} & 0.081$\pm$0.002 & 2 & 0.10 \\
	optdigits & SNIW & (0.0, 0.2] & \textbf{0.066$\pm$0.007} & 0.069$\pm$0.005 & 8 & 0.42 \\
	optdigits & SNIW & (0.2, 0.4] & \textbf{0.076$\pm$0.006} & 0.080$\pm$0.006 & 5 & 0.32 \\
	optdigits & SNIW & (0.4, 0.6] & 0.071$\pm$0.006 & 0.071$\pm$0.006 & 15 & 0.94 \\
	\midrule
	page-blocks & Gibbs & (-0.2, 0.0] & 0.067$\pm$0.024 & 0.067$\pm$0.024 & 10 & 1.00 \\
	page-blocks & Gibbs & (0.0, 0.2] & 0.075$\pm$0.017 & 0.074$\pm$0.017 & 36 & 0.92 \\
	page-blocks & Gibbs & (0.2, 0.4] & 0.047$\pm$0.022 & 0.048$\pm$0.023 & 6 & 0.96 \\
	page-blocks & Gibbs & (0.4, 0.6] & 0.055$\pm$0.026 & 0.055$\pm$0.026 & 20 & 0.98 \\
	page-blocks & Gibbs & (0.6, 0.8] & 0.076$\pm$0.011 & 0.077$\pm$0.010 & 18 & 0.9 \\
	page-blocks & Gibbs & (0.8, 1.0] & \textbf{0.080$\pm$0.014} & 0.085$\pm$0.007 & 2 & 0.72 \\
	page-blocks & IW & (0.0, 0.2] & 0.048$\pm$0.013 & 0.048$\pm$0.014 & 6 & 0.98 \\
	page-blocks & IW & (0.2, 0.4] & 0.075$\pm$0.003 & 0.075$\pm$0.003 & 2 & 1.00 \\
	page-blocks & IW & (0.4, 0.6] & 0.049$\pm$nan & 0.049$\pm$nan & 1 & nan \\
	page-blocks & IW & (0.6, 0.8] & \textbf{0.066$\pm$0.012} & 0.068$\pm$0.014 & 10 & 0.63 \\
	page-blocks & IW & (0.8, 1.0] & 0.077$\pm$nan & 0.078$\pm$nan & 1 & nan \\
	page-blocks & SNIW & (0.0, 0.2] & 0.036$\pm$0.005 & 0.037$\pm$0.006 & 6 & 0.89 \\
	page-blocks & SNIW & (0.2, 0.4] & 0.031$\pm$0.002 & \textbf{0.023$\pm$0.001} & 2 & \textbf{0.05} \\
	page-blocks & SNIW & (0.4, 0.6] & \textbf{0.044$\pm$0.002} & 0.056$\pm$0.014 & 3 & 0.27 \\
	page-blocks & SNIW & (0.6, 0.8] & 0.054$\pm$0.014 & \textbf{0.052$\pm$0.013} & 9 & 0.78 \\
	\midrule
	pendigits & Gibbs & (-0.2, 0.0] & 0.028$\pm$0.014 & 0.028$\pm$0.014 & 14 & 1.00 \\
	pendigits & Gibbs & (0.0, 0.2] & 0.026$\pm$0.012 & 0.025$\pm$0.012 & 78 & 0.85 \\
	pendigits & Gibbs & (0.2, 0.4] & 0.027$\pm$0.012 & 0.027$\pm$0.012 & 40 & 0.98 \\
	pendigits & IW & (0.0, 0.2] & 0.071$\pm$0.003 & 0.071$\pm$0.003 & 6 & 0.81 \\
	pendigits & IW & (0.2, 0.4] & \textbf{0.062$\pm$0.004} & 0.065$\pm$0.002 & 6 & 0.10 \\
	pendigits & IW & (0.4, 0.6] & \textbf{0.065$\pm$0.005} & 0.067$\pm$0.004 & 14 & 0.28 \\
	pendigits & IW & (0.6, 0.8] & 0.062$\pm$0.002 & 0.062$\pm$0.002 & 2 & 1.00 \\
	pendigits & SNIW & (0.0, 0.2] & \textbf{0.050$\pm$0.005} & 0.057$\pm$0.009 & 8 & 0.10 \\
	pendigits & SNIW & (0.2, 0.4] & \textbf{0.045$\pm$0.005} & 0.068$\pm$0.022 & 5 & 0.08 \\
	pendigits & SNIW & (0.4, 0.6] & \textbf{0.064$\pm$0.003} & 0.069$\pm$0.009 & 14 & 0.06 \\
	pendigits & SNIW & (0.6, 0.8] & 0.064$\pm$nan & 0.064$\pm$nan & 1 & nan \\
	\midrule
	soybean & Gibbs & (-0.2, 0.0] & 0.044$\pm$0.020 & 0.044$\pm$0.020 & 16 & 1.00 \\
	soybean & Gibbs & (0.0, 0.2] & 0.049$\pm$0.016 & 0.048$\pm$0.019 & 68 & 0.72 \\
	soybean & Gibbs & (0.2, 0.4] & 0.050$\pm$0.019 & 0.051$\pm$0.019 & 68 & 0.77 \\
	soybean & IW & (0.0, 0.2] & 0.074$\pm$0.008 & \textbf{0.072$\pm$0.008} & 26 & 0.50 \\
	soybean & IW & (0.2, 0.4] & 0.072$\pm$0.008 & \textbf{0.068$\pm$0.007} & 6 & 0.35 \\
	soybean & SNIW & (0.0, 0.2] & 0.089$\pm$0.000 & \textbf{0.082$\pm$0.000} & 16 & \textbf{0.00} \\
	soybean & SNIW & (0.2, 0.4] & 0.152$\pm$0.001 & 0.152$\pm$0.001 & 4 & 1.00 \\
	soybean & SNIW & (0.4, 0.6] & 0.149$\pm$0.001 & 0.149$\pm$0.001 & 12 & 1.00 \\
	\midrule
	yeast & Gibbs & (-0.2, 0.0] & 0.035$\pm$0.019 & 0.035$\pm$0.019 & 14 & 1.00 \\
	yeast & Gibbs & (0.0, 0.2] & 0.035$\pm$0.016 & 0.034$\pm$0.016 & 80 & 0.82 \\
	yeast & Gibbs & (0.2, 0.4] & 0.038$\pm$0.015 & 0.039$\pm$0.015 & 28 & 0.89 \\
	yeast & Gibbs & (0.4, 0.6] & 0.048$\pm$0.006 & 0.047$\pm$0.006 & 10 & 0.92 \\
	yeast & IW & (0.0, 0.2] & 0.060$\pm$0.014 & \textbf{0.054$\pm$0.010} & 24 & 0.09 \\
	yeast & IW & (0.2, 0.4] & 0.071$\pm$0.006 & 0.071$\pm$0.006 & 4 & 1.00 \\
	yeast & SNIW & (0.0, 0.2] & 0.071$\pm$0.010 & \textbf{0.063$\pm$0.009} & 16 & \textbf{0.03} \\
	yeast & SNIW & (0.2, 0.4] & 0.084$\pm$0.004 & \textbf{0.064$\pm$0.004} & 8 & \textbf{0.00} \\
	yeast & SNIW & (0.4, 0.6] & 0.059$\pm$0.003 & 0.059$\pm$0.003 & 4 & 1.00 \\
		\bottomrule
	\end{longtable}
}

\FloatBarrier

\begin{figure}
	\centering
	\begin{subfigure}{0.49\linewidth}
		\centering
		\includegraphics[width=\linewidth]{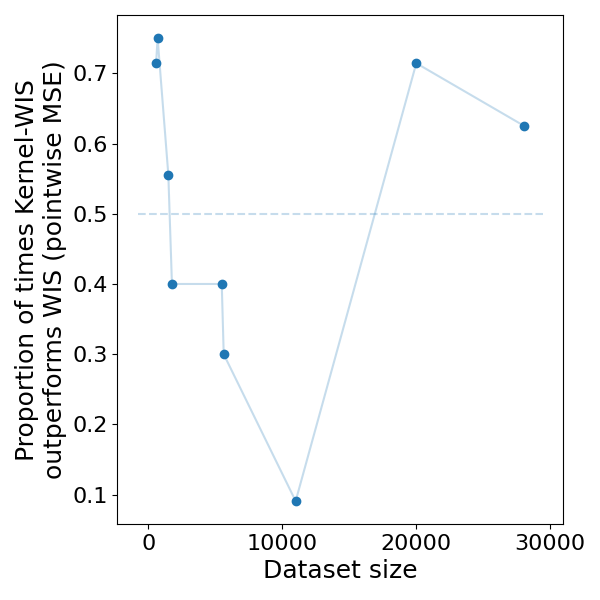}
		\caption{Proportion}
		\label{fig:kwis_performance_no_buffer_by_n_samples_oracle}
	\end{subfigure}
	\begin{subfigure}{0.49\linewidth}
		\centering
		\includegraphics[width=\linewidth]{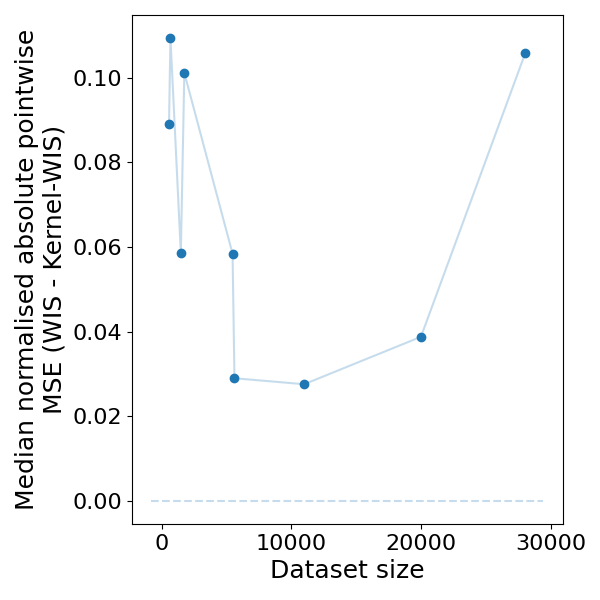}
		\caption{Median normalised difference}
		\label{fig:norm_performance_by_n_samples_oracle}
	\end{subfigure}
	\begin{subfigure}{0.49\linewidth}
		\centering
		\includegraphics[width=\linewidth]{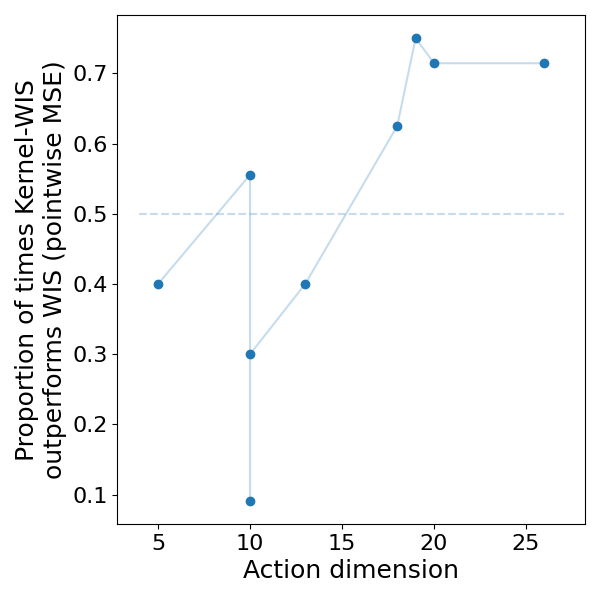}
		\caption{Proportion}
		\label{fig:kwis_performance_no_buffer_by_action_dim_oracle}
	\end{subfigure}
	\begin{subfigure}{0.49\linewidth}
		\centering
		\includegraphics[width=\linewidth]{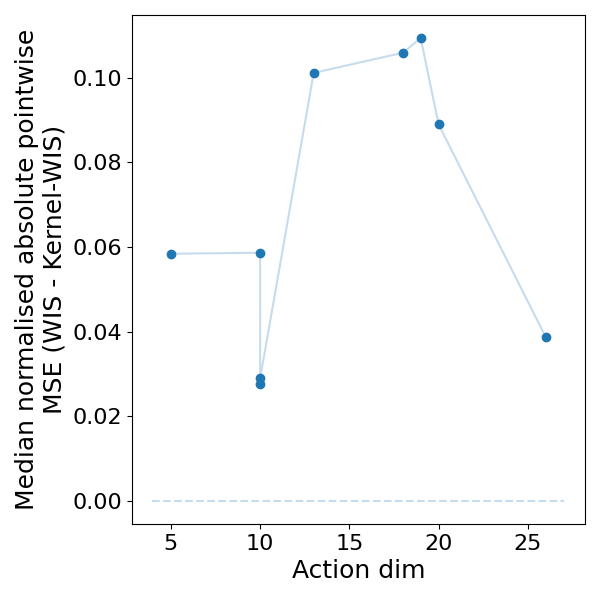}
		\caption{Median normalised difference}
		\label{fig:norm_performance_by_action_dim_oracle}
	\end{subfigure}
	\caption{The figures describe two views on the difference in pointwise mean squared error for the Kernel-WIS and WIS estimators against dataset size (top) and action dimension (bottom) on the x-axis. Data are shown for the single action reward, oracle behaviour policy setting. The left plots describe the proportion of scenarios (i.e., unique combinations of policy type and buckets differences in true evaluation and logging policy return) where the Kernel-WIS estimator strictly outperforms the WIS estimator. The right plots describe the median normalised difference in pointwise mean squared error ($\textrm{AD}(\textrm{WIS})-\textrm{AD}(\textrm{Kernel-WIS}/(\textrm{AD}(\textrm{WIS})+\textrm{AD}(\textrm{Kernel-WIS}))$).}
	\label{fig:summary_performance_oracle}
\end{figure}

Table \ref{tbl:k_wis_wis_oracle_width} again displays the relative performance of the WIS and Kernel-WIS estimators but includes the coverage of the bootstrapped confidence intervals and a summary of these results is displayed in figures \ref{fig:both_cov_prop_oracle} and \ref{fig:cov_diff_oracle}. Figure \ref{fig:both_cov_prop_oracle} describes the proportion of times that the coverage under each estimator is strictly larger than the other. Arguably, the two estimators perform similarly except for under the pendigits dataset where the coverage under the WIS estimator is noticeably improved. The results in figure \ref{fig:both_cov_prop_oracle} demonstrate however, that the magnitude of the difference in coverage values is small, especially when considered in the context of the theoretical range of the reward. The reward under each dataset is between 0 and 1 meaning that the coverage of the two estimators differs by at most 6\% of this range.\\

\FloatBarrier

{\small\tabcolsep=3pt  
	\begin{longtable}{*{8}{>{\footnotesize}c}}
	\caption{The table describes the pointwise mean squared error and the width of the bootstrapped confidence intervals of the WIS estimator and Kernel-WIS estimator (with a shared bandwidth), split by dataset, policy type and a discretisation of the absolute difference in true performance under the evaluation and logging policy, for oracle behaviour policies. Aggregations for each row are across logging policy temperature and logging policy faulty action. Values in bold in the WIS and Kernel-WIS columns highlight the best performing estimator. Neither are highlighted if the mean performance is within 0.001.}
	\label{tbl:k_wis_wis_oracle_width}\\
	\toprule
	& & & \multicolumn{2}{c}{Estimate} & \multicolumn{2}{c}{Coverage} \\ 
	Dataset & Policy Type & \makecell{Eval/Log\\Abs. Diff.} & WIS & Kernel-WIS & WIS & Kernel-WIS \\
	\midrule
	\endfirsthead            	
	\toprule
	& & & \multicolumn{2}{c}{Estimate} & \multicolumn{2}{c}{Coverage} \\ 
	Dataset & Policy Type & \makecell{Eval/Log\\Abs. Diff.} & WIS & Kernel-WIS & WIS & Kernel-WIS \\
	\midrule
	\endhead            
	\midrule\multicolumn{7}{r}{{Continued on next page}} \\ 
	\endfoot     
	\endlastfoot 
	arrhythmia & Gibbs & (-0.2, 0.0] & 0.037$\pm$0.016 & 0.037$\pm$0.016 & 0.618$\pm$0.356 & 0.618$\pm$0.356 \\
	arrhythmia & Gibbs & (0.0, 0.2] & 0.046$\pm$0.014 & \textbf{0.042$\pm$0.018} & \textbf{0.642$\pm$0.362} & 0.628$\pm$0.393 \\
	arrhythmia & Gibbs & (0.2, 0.4] & \textbf{0.055$\pm$0.020} & 0.062$\pm$0.017 & \textbf{0.802$\pm$0.234} & 0.772$\pm$0.230 \\
	arrhythmia & Gibbs & (0.4, 0.6] & \textbf{0.064$\pm$0.012} & 0.080$\pm$0.036 & \textbf{0.667$\pm$0.361} & 0.648$\pm$0.348 \\
	arrhythmia & IW & (0.0, 0.2] & 0.087$\pm$0.012 & \textbf{0.069$\pm$0.028} & 0.853$\pm$0.068 & \textbf{0.877$\pm$0.087} \\
	arrhythmia & IW & (0.2, 0.4] & 0.086$\pm$0.009 & \textbf{0.079$\pm$0.004} & 0.500$\pm$0.111 & \textbf{0.583$\pm$0.140} \\
	arrhythmia & SNIW & (0.0, 0.2] & 0.093$\pm$0.006 & \textbf{0.075$\pm$0.011} & 0.870$\pm$0.071 & \textbf{0.898$\pm$0.098} \\
	arrhythmia & SNIW & (0.2, 0.4] & 0.073$\pm$0.003 & 0.074$\pm$0.002 & 0.444$\pm$0.000 & 0.444$\pm$0.000 \\
	arrhythmia & SNIW & (0.4, 0.6] & 0.111$\pm$0.001 & 0.111$\pm$0.001 & 0.667$\pm$0.000 & 0.667$\pm$0.000 \\
	arrhythmia & SNIW & (0.6, 0.8] & 0.092$\pm$0.001 & 0.092$\pm$0.001 & 0.444$\pm$0.000 & 0.444$\pm$0.000 \\
	\midrule
	kropt & Gibbs & (-0.2, 0.0] & 0.032$\pm$0.019 & 0.032$\pm$0.019 & 0.125$\pm$0.284 & 0.125$\pm$0.284 \\
	kropt & Gibbs & (0.0, 0.2] & 0.033$\pm$0.018 & 0.033$\pm$0.017 & \textbf{0.210$\pm$0.350} & 0.167$\pm$0.315 \\
	kropt & Gibbs & (0.2, 0.4] & 0.029$\pm$0.017 & 0.030$\pm$0.017 & \textbf{0.288$\pm$0.391} & 0.254$\pm$0.373 \\
	kropt & IW & (0.0, 0.2] & 0.047$\pm$0.008 & 0.046$\pm$0.006 & 0.100$\pm$0.080 & 0.100$\pm$0.080 \\
	kropt & IW & (0.2, 0.4] & 0.049$\pm$0.003 & \textbf{0.030$\pm$0.011} & 0.037$\pm$0.055 & \textbf{0.204$\pm$0.200} \\
	kropt & SNIW & (0.0, 0.2] & 0.049$\pm$0.006 & \textbf{0.026$\pm$0.017} & 0.061$\pm$0.057 & \textbf{0.478$\pm$0.237} \\
	kropt & SNIW & (0.2, 0.4] & 0.053$\pm$0.002 & \textbf{0.049$\pm$0.004} & 0.000$\pm$0.000 & 0.000$\pm$0.000 \\
	kropt & SNIW & (0.4, 0.6] & 0.053$\pm$0.000 & \textbf{0.046$\pm$0.000} & 0.000$\pm$0.000 & 0.000$\pm$0.000 \\
	\midrule
	letter & Gibbs & (-0.2, 0.0] & 0.034$\pm$0.026 & 0.034$\pm$0.026 & 0.271$\pm$0.349 & 0.271$\pm$0.349 \\
	letter & Gibbs & (0.0, 0.2] & 0.033$\pm$0.025 & 0.033$\pm$0.025 & 0.359$\pm$0.409 & 0.359$\pm$0.409 \\
	letter & Gibbs & (0.2, 0.4] & 0.034$\pm$0.024 & 0.034$\pm$0.024 & 0.396$\pm$0.416 & 0.395$\pm$0.415 \\
	letter & IW & (0.0, 0.2] & 0.024$\pm$0.004 & 0.023$\pm$0.004 & 0.708$\pm$0.151 & 0.708$\pm$0.151 \\
	letter & IW & (0.2, 0.4] & 0.022$\pm$0.003 & 0.022$\pm$0.003 & \textbf{0.715$\pm$0.107} & 0.701$\pm$0.113 \\
	letter & SNIW & (0.0, 0.2] & 0.032$\pm$0.007 & \textbf{0.028$\pm$0.011} & 0.528$\pm$0.119 & \textbf{0.634$\pm$0.201} \\
	letter & SNIW & (0.2, 0.4] & 0.039$\pm$0.014 & 0.038$\pm$0.015 & 0.389$\pm$0.341 & \textbf{0.403$\pm$0.366} \\
	\midrule
	micro-mass & Gibbs & (-0.2, 0.0] & 0.045$\pm$0.020 & 0.045$\pm$0.020 & 0.528$\pm$0.375 & 0.528$\pm$0.375 \\
	micro-mass & Gibbs & (0.0, 0.2] & 0.052$\pm$0.015 & 0.051$\pm$0.015 & 0.520$\pm$0.316 & \textbf{0.526$\pm$0.316} \\
	micro-mass & Gibbs & (0.2, 0.4] & \textbf{0.050$\pm$0.019} & 0.057$\pm$0.018 & \textbf{0.554$\pm$0.426} & 0.516$\pm$0.397 \\
	micro-mass & IW & (0.0, 0.2] & 0.073$\pm$0.024 & \textbf{0.071$\pm$0.026} & 0.708$\pm$0.242 & \textbf{0.719$\pm$0.251} \\
	micro-mass & SNIW & (0.0, 0.2] & 0.098$\pm$0.004 & 0.097$\pm$0.004 & 0.476$\pm$0.166 & \textbf{0.484$\pm$0.161} \\
	micro-mass & SNIW & (0.2, 0.4] & 0.095$\pm$0.007 & 0.095$\pm$0.007 & 0.344$\pm$0.152 & 0.344$\pm$0.152 \\
	micro-mass & SNIW & (0.4, 0.6] & 0.100$\pm$0.007 & 0.100$\pm$0.007 & 0.167$\pm$0.103 & 0.167$\pm$0.103 \\
	\midrule
	optdigits & Gibbs & (-0.2, 0.0] & 0.027$\pm$0.013 & 0.027$\pm$0.013 & 0.238$\pm$0.337 & 0.238$\pm$0.337 \\
	optdigits & Gibbs & (0.0, 0.2] & 0.026$\pm$0.011 & 0.026$\pm$0.011 & 0.562$\pm$0.337 & \textbf{0.569$\pm$0.344} \\
	optdigits & Gibbs & (0.2, 0.4] & 0.027$\pm$0.011 & 0.027$\pm$0.011 & 0.472$\pm$0.399 & \textbf{0.477$\pm$0.398} \\
	optdigits & IW & (0.0, 0.2] & 0.076$\pm$0.011 & 0.077$\pm$0.010 & 0.178$\pm$0.186 & 0.178$\pm$0.186 \\
	optdigits & IW & (0.2, 0.4] & 0.068$\pm$0.006 & 0.068$\pm$0.005 & 0.605$\pm$0.273 & \textbf{0.617$\pm$0.243} \\
	optdigits & IW & (0.4, 0.6] & 0.075$\pm$0.004 & 0.075$\pm$0.004 & 0.231$\pm$0.146 & \textbf{0.241$\pm$0.133} \\
	optdigits & IW & (0.6, 0.8] & \textbf{0.075$\pm$0.000} & 0.081$\pm$0.002 & \textbf{0.556$\pm$0.000} & 0.389$\pm$0.079 \\
	optdigits & SNIW & (0.0, 0.2] & \textbf{0.066$\pm$0.007} & 0.069$\pm$0.005 & \textbf{0.514$\pm$0.258} & 0.444$\pm$0.279 \\
	optdigits & SNIW & (0.2, 0.4] & \textbf{0.076$\pm$0.006} & 0.080$\pm$0.006 & \textbf{0.756$\pm$0.050} & 0.733$\pm$0.061 \\
	optdigits & SNIW & (0.4, 0.6] & 0.071$\pm$0.006 & 0.071$\pm$0.006 & 0.393$\pm$0.258 & 0.393$\pm$0.258 \\
	\midrule
	page-blocks & Gibbs & (-0.2, 0.0] & 0.067$\pm$0.024 & 0.067$\pm$0.024 & 0.056$\pm$0.120 & 0.056$\pm$0.120 \\
	page-blocks & Gibbs & (0.0, 0.2] & 0.075$\pm$0.017 & 0.074$\pm$0.017 & \textbf{0.043$\pm$0.146} & 0.040$\pm$0.146 \\
	page-blocks & Gibbs & (0.2, 0.4] & 0.047$\pm$0.022 & 0.048$\pm$0.023 & 0.019$\pm$0.045 & 0.019$\pm$0.045 \\
	page-blocks & Gibbs & (0.4, 0.6] & 0.055$\pm$0.026 & 0.055$\pm$0.026 & 0.500$\pm$0.513 & 0.500$\pm$0.513 \\
	page-blocks & Gibbs & (0.6, 0.8] & 0.076$\pm$0.011 & 0.077$\pm$0.010 & 0.432$\pm$0.459 & 0.432$\pm$0.459 \\
	page-blocks & Gibbs & (0.8, 1.0] & \textbf{0.080$\pm$0.014} & 0.085$\pm$0.007 & 0.500$\pm$0.707 & 0.500$\pm$0.707 \\
	page-blocks & IW & (0.0, 0.2] & 0.048$\pm$0.013 & 0.048$\pm$0.014 & \textbf{0.315$\pm$0.333} & 0.296$\pm$0.327 \\
	page-blocks & IW & (0.2, 0.4] & 0.075$\pm$0.003 & 0.075$\pm$0.003 & 0.000$\pm$0.000 & 0.000$\pm$0.000 \\
	page-blocks & IW & (0.4, 0.6] & 0.049$\pm$nan & 0.049$\pm$nan & 0.889$\pm$nan & 0.889$\pm$nan \\
	page-blocks & IW & (0.6, 0.8] & \textbf{0.066$\pm$0.012} & 0.068$\pm$0.014 & 0.900$\pm$0.161 & 0.900$\pm$0.161 \\
	page-blocks & IW & (0.8, 1.0] & 0.077$\pm$nan & 0.078$\pm$nan & 1.000$\pm$nan & 1.000$\pm$nan \\
	page-blocks & SNIW & (0.0, 0.2] & 0.036$\pm$0.005 & 0.037$\pm$0.006 & \textbf{0.481$\pm$0.335} & 0.444$\pm$0.344 \\
	page-blocks & SNIW & (0.2, 0.4] & 0.031$\pm$0.002 & \textbf{0.023$\pm$0.001} & \textbf{0.667$\pm$0.000} & 0.556$\pm$0.000 \\
	page-blocks & SNIW & (0.4, 0.6] & \textbf{0.044$\pm$0.002} & 0.056$\pm$0.014 & 0.963$\pm$0.064 & 0.963$\pm$0.064 \\
	page-blocks & SNIW & (0.6, 0.8] & 0.054$\pm$0.014 & \textbf{0.052$\pm$0.013} & 0.901$\pm$0.103 & \textbf{0.926$\pm$0.111} \\
	\midrule
	pendigits & Gibbs & (-0.2, 0.0] & 0.028$\pm$0.014 & 0.028$\pm$0.014 & 0.143$\pm$0.297 & 0.143$\pm$0.297 \\
	pendigits & Gibbs & (0.0, 0.2] & 0.026$\pm$0.012 & 0.025$\pm$0.012 & 0.410$\pm$0.337 & \textbf{0.422$\pm$0.353} \\
	pendigits & Gibbs & (0.2, 0.4] & 0.027$\pm$0.012 & 0.027$\pm$0.012 & \textbf{0.328$\pm$0.345} & 0.319$\pm$0.348 \\
	pendigits & IW & (0.0, 0.2] & 0.071$\pm$0.003 & 0.071$\pm$0.003 & \textbf{0.037$\pm$0.057} & 0.000$\pm$0.000 \\
	pendigits & IW & (0.2, 0.4] & \textbf{0.062$\pm$0.004} & 0.065$\pm$0.002 & \textbf{0.574$\pm$0.204} & 0.500$\pm$0.153 \\
	pendigits & IW & (0.4, 0.6] & \textbf{0.065$\pm$0.005} & 0.067$\pm$0.004 & 0.190$\pm$0.284 & 0.190$\pm$0.284 \\
	pendigits & IW & (0.6, 0.8] & 0.062$\pm$0.002 & 0.062$\pm$0.002 & 0.556$\pm$0.000 & 0.556$\pm$0.000 \\
	pendigits & SNIW & (0.0, 0.2] & \textbf{0.050$\pm$0.005} & 0.057$\pm$0.009 & \textbf{0.458$\pm$0.288} & 0.319$\pm$0.338 \\
	pendigits & SNIW & (0.2, 0.4] & \textbf{0.045$\pm$0.005} & 0.068$\pm$0.022 & \textbf{0.911$\pm$0.093} & 0.600$\pm$0.268 \\
	pendigits & SNIW & (0.4, 0.6] & \textbf{0.064$\pm$0.003} & 0.069$\pm$0.009 & 0.190$\pm$0.252 & 0.190$\pm$0.232 \\
	pendigits & SNIW & (0.6, 0.8] & 0.064$\pm$nan & 0.064$\pm$nan & 0.444$\pm$nan & 0.444$\pm$nan \\
	\midrule
	soybean & Gibbs & (-0.2, 0.0] & 0.044$\pm$0.020 & 0.044$\pm$0.020 & 0.535$\pm$0.370 & 0.535$\pm$0.370 \\
	soybean & Gibbs & (0.0, 0.2] & 0.049$\pm$0.016 & 0.048$\pm$0.019 & \textbf{0.482$\pm$0.343} & 0.453$\pm$0.333 \\
	soybean & Gibbs & (0.2, 0.4] & 0.050$\pm$0.019 & 0.051$\pm$0.019 & \textbf{0.531$\pm$0.445} & 0.525$\pm$0.443 \\
	soybean & IW & (0.0, 0.2] & 0.074$\pm$0.008 & \textbf{0.072$\pm$0.008} & 0.667$\pm$0.109 & \textbf{0.671$\pm$0.115} \\
	soybean & IW & (0.2, 0.4] & 0.072$\pm$0.008 & \textbf{0.068$\pm$0.007} & 0.519$\pm$0.152 & \textbf{0.593$\pm$0.057} \\
	soybean & SNIW & (0.0, 0.2] & 0.089$\pm$0.000 & \textbf{0.082$\pm$0.000} & 0.500$\pm$0.057 & \textbf{0.556$\pm$0.000} \\
	soybean & SNIW & (0.2, 0.4] & 0.152$\pm$0.001 & 0.152$\pm$0.001 & 0.778$\pm$0.000 & 0.778$\pm$0.000 \\
	soybean & SNIW & (0.4, 0.6] & 0.149$\pm$0.001 & 0.149$\pm$0.001 & 0.500$\pm$0.058 & 0.500$\pm$0.058 \\
	\midrule
	yeast & Gibbs & (-0.2, 0.0] & 0.035$\pm$0.019 & 0.035$\pm$0.019 & 0.468$\pm$0.383 & 0.468$\pm$0.383 \\
	yeast & Gibbs & (0.0, 0.2] & 0.035$\pm$0.016 & \textbf{0.034$\pm$0.016} & 0.579$\pm$0.401 & \textbf{0.581$\pm$0.393} \\
	yeast & Gibbs & (0.2, 0.4] & \textbf{0.038$\pm$0.015} & 0.039$\pm$0.015 & \textbf{0.647$\pm$0.319} & 0.623$\pm$0.330 \\
	yeast & Gibbs & (0.4, 0.6] & 0.048$\pm$0.006 & 0.047$\pm$0.006 & 0.611$\pm$0.410 & \textbf{0.633$\pm$0.389} \\
	yeast & IW & (0.0, 0.2] & 0.060$\pm$0.014 & \textbf{0.054$\pm$0.010} & 0.819$\pm$0.097 & \textbf{0.852$\pm$0.102} \\
	yeast & IW & (0.2, 0.4] & 0.071$\pm$0.006 & 0.071$\pm$0.006 & 0.444$\pm$0.000 & 0.444$\pm$0.000 \\
	yeast & SNIW & (0.0, 0.2] & 0.071$\pm$0.010 & \textbf{0.063$\pm$0.009} & 0.521$\pm$0.113 & \textbf{0.583$\pm$0.111} \\
	yeast & SNIW & (0.2, 0.4] & 0.084$\pm$0.004 & \textbf{0.064$\pm$0.004} & \textbf{0.500$\pm$0.059} & 0.444$\pm$0.059 \\
	yeast & SNIW & (0.4, 0.6] & 0.059$\pm$0.003 & 0.059$\pm$0.003 & 0.556$\pm$0.000 & 0.556$\pm$0.000 \\
		\bottomrule
	\end{longtable}
}

\FloatBarrier

\begin{figure}[!h]
	\centering
	\begin{subfigure}{0.49\linewidth}
		\centering
		\includegraphics[width=\linewidth]{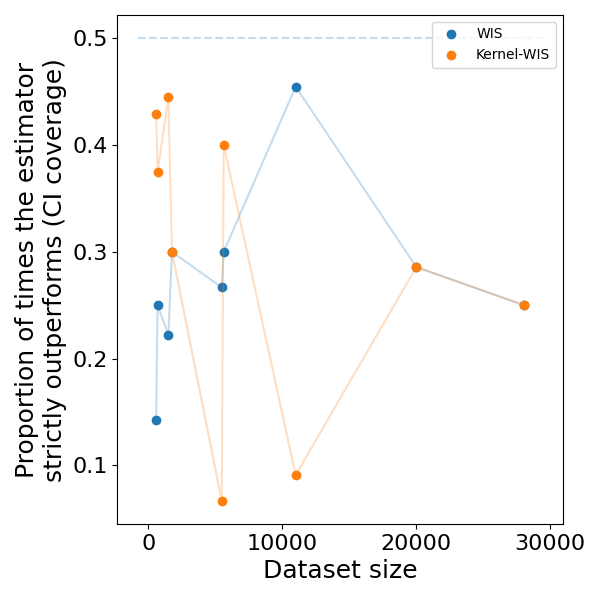}
		\caption{Proportion}
		\label{fig:both_cov_prop_oracle}
	\end{subfigure}
	\begin{subfigure}{0.49\linewidth}
		\centering
		\includegraphics[width=\linewidth]{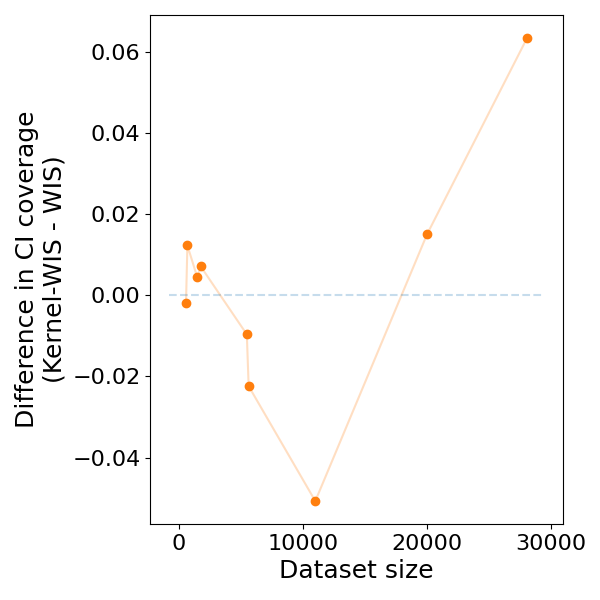}
		\caption{Difference}
		\label{fig:cov_diff_oracle}
	\end{subfigure}
	\caption{The figures describe two views of the difference in bootstrapped interval coverage results for the Kernel-WIS and WIS estimators against dataset size on the x-axis. Data are shown for the single action reward, oracle behaviour policy setting. The left plot describes the proportion of times the coverage under each estimator is strictly larger than the other. The right plot describes the difference in coverage values.}
\end{figure}

\FloatBarrier

\subsubsection{Non-oracle behaviour policy sensitivity}\label{sec:non_oracle_beh_sensitivity}

A sensitivity analysis of the relative performance of the WIS and Kernel-WIS estimators was performed, under miss-specification of the behaviour policy. The sensitivity analysis was performed for a subset of datasets due to computational constraints. For each logging policy, the set of miss-specified behaviour policies evaluated was based on the faulty actions of the logging policy. Specifically, the set of miss-specified behaviour policies included those that were within length 1 of the logging policy faulty actions. For example, under the arrythmia dataset, for logging policy with temperature 0.3 and $\textrm{fa} = 1$, the following behaviour policies were considered: 
\begin{itemize}
	\item Temperature = 0.5, $\textrm{fa} = \{1\}$;
	\item Temperature = 0.3, $\textrm{fa} = \{\}$;
	\item Temperature = 0.5, $\textrm{fa} = \{\}$;
	\item Temperature = 0.3, $\textrm{fa} = \{0\}$;
	\item Temperature = 0.5, $\textrm{fa} = \{0\}$;
	\item Temperature = 0.3, $\textrm{fa} = \{0,1\}$;
	\item Temperature = 0.5, $\textrm{fa} = \{0,1\}$.
\end{itemize}
Similarly, under the optdigits dataset, for logging policy with temperature equal to 0.5 and $\textrm{fa} = \{0,1,2,3,4\}$, the following behaviour policies were considered:
\begin{itemize}
	\item Temperature = 0.3, $\textrm{fa} = \{0,1,2,3,4\}$;
	\item Temperature = 0.3, $\textrm{fa} = \{0,1,2,3\}$;
	\item Temperature = 0.5, $\textrm{fa} = \{0,1,2,3\}$.
\end{itemize}

The relative performance of the Kernel-WIS estimator clearly improved, both in terms of the number of significant results and when not considering statistical significance. Similarly to the oracle behaviour setting (section \ref{sec:oracle_beh_policy_results}), figure \ref{fig:summary_performance_non_oracle} displays aggregated results, across the scenarios displayed in table \ref{tbl:k_wis_wis_non_oracle_results} (i.e., combinations of policy type and bucketed values of the difference between the true evaluation and logging policy performance). Examining the proportion of times the Kernel-WIS estimator strictly outperforms the WIS estimator (figure \ref{fig:kwis_performance_no_buffer_by_n_samples_non_oracle}), the results remain mixed however, examining the median performance (figure \ref{fig:norm_performance_by_n_samples_non_oracle}) suggests the Kernel-WIS estimator strongly outperforms the WIS estimator. Similarly to the oracle setting: 
\begin{itemize}
	\item The Kernel-WIS estimator drastically outperforms the WIS estimator whilst the WIS estimator only marginally outperforms the Kernel-WIS estimator and;
	\item The relative performance of the estimators is potentially dependent on sample size and/or action dimension (note that the two largest datasets: kropt and letter were excluded from the non-oracle analysis due to computational constraints).\\
\end{itemize}

\FloatBarrier

{\small\tabcolsep=3pt  
	\begin{longtable}{cccccccc}
	\caption{The table describes the pointwise finite mean squared error of the WIS estimator and Kernel-WIS estimator (with a shared bandwidth), split by dataset, policy type and a discretisation of the absolute difference in true performance under the evaluation and logging policy, for non-oracle behaviour policies. The P-Value column defines the p-value of a two-sided Wald t-test, comparing the finite mean squared error of the WIS and Kernel-WIS estimators. N observations defines the number of experiments aggregated over per row, where the aggregations are across logging policy temperature and logging policy faulty action. Values in bold in the WIS and Kernel-WIS columns highlight the best performing estimator. Neither are highlighted if the mean performance is within 0.001. Values highlighted in the P-Value column identify those which are less than or equal to 0.05.}
	\label{tbl:k_wis_wis_non_oracle_results} \\
	\toprule
	Dataset & Policy Type & \makecell{Eval/Log\\Abs. Diff.} & WIS & Kernel-WIS & N Obs. & P-Value\\
	\midrule
	\endfirsthead            
	\toprule
	Dataset & Policy Type & \makecell{Eval/Log\\Abs. Diff.} & WIS & Kernel-WIS & N Obs. & P-Value\\
	\midrule
	\endhead            
	\midrule\multicolumn{7}{r}{{Continued on next page}} \\ 
	\endfoot     
	\endlastfoot
	arrhythmia & Gibbs & (-0.2, 0.0] & 0.133$\pm$0.101 & \textbf{0.098$\pm$0.093} & 96 & \textbf{0.01} \\
	arrhythmia & Gibbs & (0.0, 0.2] & 0.097$\pm$0.074 & \textbf{0.074$\pm$0.056} & 684 & \textbf{0.00} \\
	arrhythmia & Gibbs & (0.2, 0.4] & 0.218$\pm$0.148 & \textbf{0.201$\pm$0.122} & 112 & 0.35 \\
	arrhythmia & Gibbs & (0.4, 0.6] & \textbf{0.221$\pm$0.150} & 0.249$\pm$0.139 & 76 & 0.23 \\
	arrhythmia & IW & (0.0, 0.2] & 0.155$\pm$0.115 & \textbf{0.095$\pm$0.084} & 181 & \textbf{0.00} \\
	arrhythmia & IW & (0.2, 0.4] & 0.284$\pm$0.117 & \textbf{0.221$\pm$0.109} & 11 & 0.21 \\
	arrhythmia & SNIW & (0.0, 0.2] & 0.157$\pm$0.110 & \textbf{0.092$\pm$0.050} & 153 & \textbf{0.00} \\
	arrhythmia & SNIW & (0.2, 0.4] & 0.082$\pm$0.045 & \textbf{0.078$\pm$0.046} & 15 & 0.81 \\
	arrhythmia & SNIW & (0.4, 0.6] & 0.455$\pm$0.210 & \textbf{0.257$\pm$0.121} & 24 & \textbf{0.00} \\
	\midrule
	micro-mass & Gibbs & (-0.2, 0.0] & 0.111$\pm$0.072 & \textbf{0.076$\pm$0.055} & 88 & \textbf{0.00} \\
	micro-mass & Gibbs & (0.0, 0.2] & 0.105$\pm$0.071 & \textbf{0.088$\pm$0.063} & 378 & \textbf{0.00} \\
	micro-mass & Gibbs & (0.2, 0.4] & \textbf{0.169$\pm$0.104} & 0.179$\pm$0.089 & 398 & 0.14 \\
	micro-mass & IW & (0.0, 0.2] & 0.151$\pm$0.085 & \textbf{0.114$\pm$0.076} & 176 & \textbf{0.00} \\
	micro-mass & SNIW & (0.0, 0.2] & 0.143$\pm$0.086 & \textbf{0.115$\pm$0.044} & 72 & \textbf{0.01} \\
	micro-mass & SNIW & (0.2, 0.4] & 0.267$\pm$0.138 & \textbf{0.241$\pm$0.119} & 81 & 0.21 \\
	micro-mass & SNIW & (0.4, 0.6] & 0.328$\pm$0.194 & \textbf{0.251$\pm$0.124} & 23 & 0.12 \\
	\midrule
	optdigits & Gibbs & (-0.2, 0.0] & 0.134$\pm$0.087 & \textbf{0.058$\pm$0.050} & 78 & \textbf{0.00} \\
	optdigits & Gibbs & (0.0, 0.2] & 0.125$\pm$0.085 & \textbf{0.087$\pm$0.059} & 420 & \textbf{0.00} \\
	optdigits & Gibbs & (0.2, 0.4] & \textbf{0.164$\pm$0.104} & 0.185$\pm$0.082 & 266 & \textbf{0.01} \\
	optdigits & IW & (0.0, 0.2] & 0.188$\pm$0.158 & \textbf{0.177$\pm$0.065} & 78 & 0.58 \\
	optdigits & IW & (0.2, 0.4] & \textbf{0.148$\pm$0.108} & 0.196$\pm$0.089 & 51 & \textbf{0.02} \\
	optdigits & IW & (0.4, 0.6] & \textbf{0.104$\pm$0.094} & 0.227$\pm$0.063 & 27 & \textbf{0.00} \\
	optdigits & SNIW & (0.0, 0.2] & 0.120$\pm$0.046 & \textbf{0.105$\pm$0.056} & 53 & 0.13 \\
	optdigits & SNIW & (0.2, 0.4] & \textbf{0.078$\pm$0.069} & 0.229$\pm$0.105 & 43 & \textbf{0.00} \\
	optdigits & SNIW & (0.4, 0.6] & \textbf{0.107$\pm$0.088} & 0.212$\pm$0.096 & 60 & \textbf{0.00} \\
	\midrule
	page-blocks & Gibbs & (-0.2, 0.0] & 0.114$\pm$0.117 & \textbf{0.085$\pm$0.073} & 58 & 0.11 \\
	page-blocks & Gibbs & (0.0, 0.2] & 0.083$\pm$0.078 & \textbf{0.078$\pm$0.059} & 206 & 0.48 \\
	page-blocks & Gibbs & (0.2, 0.4] & 0.152$\pm$0.118 & \textbf{0.132$\pm$0.094} & 36 & 0.43 \\
	page-blocks & Gibbs & (0.4, 0.6] & \textbf{0.238$\pm$0.199} & 0.247$\pm$0.210 & 124 & 0.74 \\
	page-blocks & Gibbs & (0.6, 0.8] & 0.336$\pm$0.248 & 0.366$\pm$0.251 & 112 & 0.38 \\
	page-blocks & Gibbs & (0.8, 1.0] & \textbf{0.310$\pm$0.259} & 0.313$\pm$0.263 & 12 & 0.97 \\
	page-blocks & IW & (0.0, 0.2] & 0.106$\pm$0.099 & \textbf{0.053$\pm$0.049} & 65 & \textbf{0.00} \\
	page-blocks & IW & (0.2, 0.4] & \textbf{0.021$\pm$0.009} & 0.067$\pm$0.014 & 7 & \textbf{0.00} \\
	page-blocks & IW & (0.4, 0.6] & 0.076$\pm$0.066 & \textbf{0.065$\pm$0.038} & 6 & 0.73 \\
	page-blocks & IW & (0.6, 0.8] & 0.125$\pm$0.121 & 0.124$\pm$0.082 & 38 & 0.96 \\
	page-blocks & SNIW & (0.0, 0.2] & 0.189$\pm$0.083 & \textbf{0.066$\pm$0.085} & 39 & \textbf{0.00} \\
	page-blocks & SNIW & (0.2, 0.4] & 0.202$\pm$0.050 & \textbf{0.027$\pm$0.021} & 9 & \textbf{0.00} \\
	page-blocks & SNIW & (0.4, 0.6] & \textbf{0.179$\pm$0.149} & 0.198$\pm$0.118 & 39 & 0.53 \\
	page-blocks & SNIW & (0.6, 0.8] & 0.071$\pm$0.076 & \textbf{0.062$\pm$0.054} & 29 & 0.60 \\
	\midrule
	pendigits & Gibbs & (-0.2, 0.0] & 0.134$\pm$0.087 & \textbf{0.067$\pm$0.046} & 78 & \textbf{0.00} \\
	pendigits & Gibbs & (0.0, 0.2] & 0.127$\pm$0.087 & \textbf{0.084$\pm$0.059} & 444 & \textbf{0.00} \\
	pendigits & Gibbs & (0.2, 0.4] & \textbf{0.159$\pm$0.104} & 0.164$\pm$0.088 & 242 & 0.53 \\
	pendigits & IW & (0.0, 0.2] & 0.169$\pm$0.129 & \textbf{0.102$\pm$0.039} & 94 & \textbf{0.00} \\
	pendigits & IW & (0.2, 0.4] & 0.156$\pm$0.140 & \textbf{0.136$\pm$0.054} & 38 & 0.41 \\
	pendigits & IW & (0.4, 0.6] & \textbf{0.113$\pm$0.121} & 0.149$\pm$0.026 & 24 & 0.17 \\
	pendigits & SNIW & (0.0, 0.2] & 0.097$\pm$0.035 & 0.098$\pm$0.046 & 53 & 0.90 \\
	pendigits & SNIW & (0.2, 0.4] & \textbf{0.101$\pm$0.084} & 0.150$\pm$0.055 & 38 & \textbf{0.00} \\
	pendigits & SNIW & (0.4, 0.6] & \textbf{0.088$\pm$0.069} & 0.165$\pm$0.044 & 63 & \textbf{0.00} \\
	pendigits & SNIW & (0.6, 0.8] & \textbf{0.075$\pm$0.002} & 0.112$\pm$0.001 & 2 & \textbf{0.00} \\
	\midrule
	soybean & Gibbs & (-0.2, 0.0] & 0.102$\pm$0.063 & \textbf{0.072$\pm$0.050} & 88 & \textbf{0.00} \\
	soybean & Gibbs & (0.0, 0.2] & 0.104$\pm$0.070 & \textbf{0.086$\pm$0.060} & 384 & \textbf{0.00} \\
	soybean & Gibbs & (0.2, 0.4] & \textbf{0.165$\pm$0.104} & 0.178$\pm$0.092 & 392 & 0.07 \\
	soybean & IW & (0.0, 0.2] & 0.139$\pm$0.077 & \textbf{0.120$\pm$0.067} & 149 & \textbf{0.02} \\
	soybean & IW & (0.2, 0.4] & 0.241$\pm$0.175 & \textbf{0.066$\pm$0.063} & 21 & \textbf{0.00} \\
	soybean & IW & (0.4, 0.6] & 0.493$\pm$0.007 & \textbf{0.009$\pm$0.002} & 6 & \textbf{0.00} \\
	soybean & SNIW & (0.0, 0.2] & 0.141$\pm$0.088 & \textbf{0.078$\pm$0.027} & 96 & \textbf{0.00} \\
	soybean & SNIW & (0.2, 0.4] & 0.320$\pm$0.059 & \textbf{0.039$\pm$0.034} & 36 & \textbf{0.00} \\
	soybean & SNIW & (0.4, 0.6] & 0.336$\pm$0.151 & \textbf{0.116$\pm$0.055} & 44 & \textbf{0.00} \\
	\midrule
	yeast & Gibbs & (-0.2, 0.0] & 0.112$\pm$0.076 & \textbf{0.089$\pm$0.069} & 78 & 0.06 \\
	yeast & Gibbs & (0.0, 0.2] & 0.116$\pm$0.080 & \textbf{0.108$\pm$0.078} & 456 & 0.10 \\
	yeast & Gibbs & (0.2, 0.4] & 0.143$\pm$0.109 & \textbf{0.142$\pm$0.088} & 166 & 0.95 \\
	yeast & Gibbs & (0.4, 0.6] & \textbf{0.211$\pm$0.123} & 0.225$\pm$0.116 & 64 & 0.51 \\
	yeast & IW & (0.0, 0.2] & 0.147$\pm$0.118 & \textbf{0.099$\pm$0.085} & 119 & \textbf{0.00} \\
	yeast & IW & (0.2, 0.4] & 0.188$\pm$0.157 & \textbf{0.084$\pm$0.062} & 37 & \textbf{0.00} \\
	yeast & SNIW & (0.0, 0.2] & 0.148$\pm$0.124 & \textbf{0.094$\pm$0.070} & 98 & \textbf{0.00} \\
	yeast & SNIW & (0.2, 0.4] & 0.220$\pm$0.140 & \textbf{0.110$\pm$0.067} & 44 & \textbf{0.00} \\
	yeast & SNIW & (0.4, 0.6] & 0.378$\pm$0.211 & \textbf{0.160$\pm$0.071} & 14 & \textbf{0.00} \\
	\bottomrule
	\end{longtable}
}

\FloatBarrier

\begin{figure}
	\centering
	\begin{subfigure}{0.49\linewidth}
		\centering
		\includegraphics[width=\linewidth]{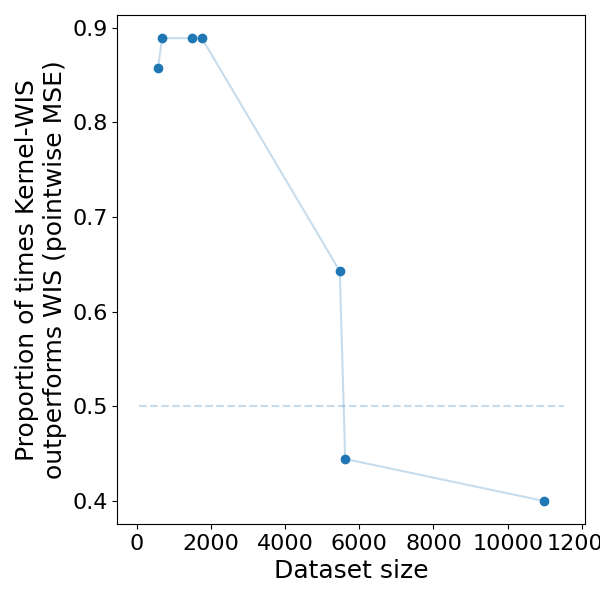}
		\caption{Proportion}
		\label{fig:kwis_performance_no_buffer_by_n_samples_non_oracle}
	\end{subfigure}
	\begin{subfigure}{0.49\linewidth}
		\centering
		\includegraphics[width=\linewidth]{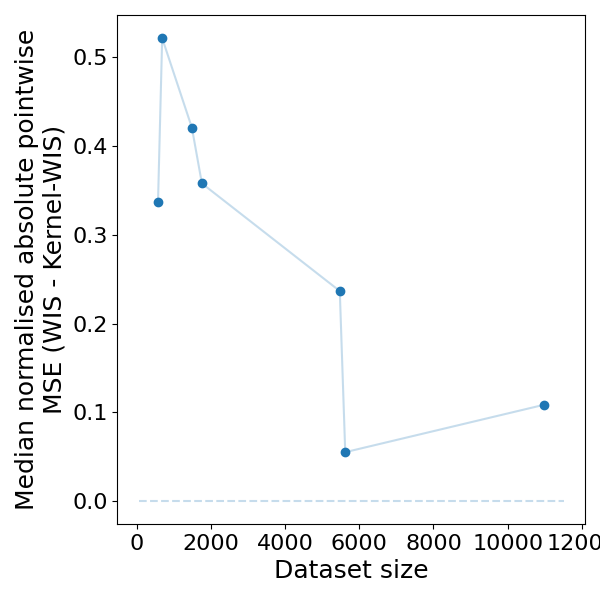}
		\caption{Median normalised difference}
		\label{fig:norm_performance_by_n_samples_non_oracle}
	\end{subfigure}
	\begin{subfigure}{0.49\linewidth}
		\centering
		\includegraphics[width=\linewidth]{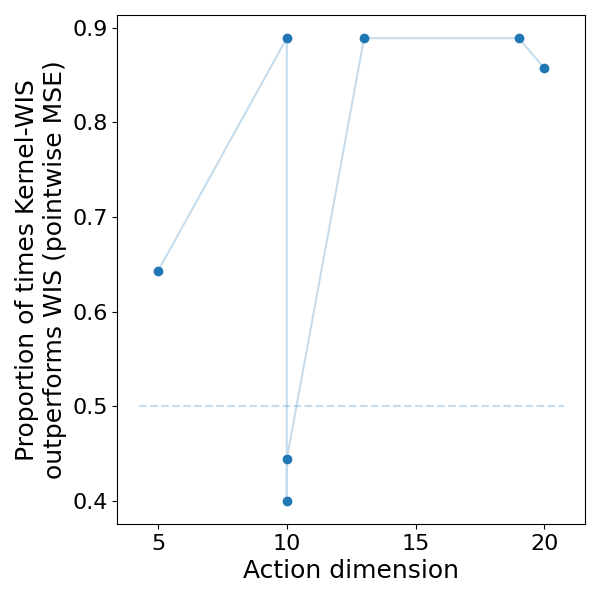}
		\caption{Proportion}
		\label{fig:kwis_performance_no_buffer_by_action_dim_non_oracle}
	\end{subfigure}
	\begin{subfigure}{0.49\linewidth}
		\centering
		\includegraphics[width=\linewidth]{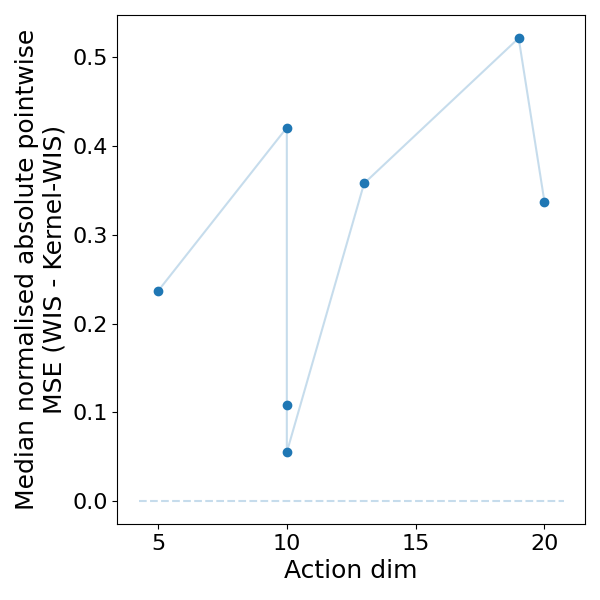}
		\caption{Median normalised difference}
		\label{fig:norm_performance_by_action_dim_non_oracle}
	\end{subfigure}
	\caption{The figures describe two views on the difference in pointwise mean squared error for the Kernel-WIS and WIS estimators against dataset size (top) and action dimension (bottom) on the x-axis. Data are shown for the single action reward, non-oracle behaviour policy setting. The left plots describe the proportion of scenarios (i.e., unique combinations of policy type and buckets differences in true evaluation and logging policy return) where the Kernel-WIS estimator strictly outperforms the WIS estimator. The right plots describe the median normalised difference in pointwise mean squared error ($\textrm{AD}(\textrm{WIS})-\textrm{AD}(\textrm{Kernel-WIS}/(\textrm{AD}(\textrm{WIS})+\textrm{AD}(\textrm{Kernel-WIS}))$).}
	\label{fig:summary_performance_non_oracle}
\end{figure}

Table \ref{tbl:k_wis_wis_non_oracle_width} displays the same pointwise MSE performance as displayed in table \ref{tbl:k_wis_wis_non_oracle_results}, but similarly to oracle setting, the coverage of the bootstrapped confidence intervals is also included. These results are summarised in figures \ref{fig:both_cov_prop_non_oracle} and \ref{fig:cov_diff_non_oracle}. These figures demonstrate that the coverage of the boostrapped intervals is strongly related to the pointwise MSE performance (i.e., figure \ref{fig:kwis_performance_no_buffer_by_n_samples_non_oracle}), which is understandable given the confidence intervals themselves are biased. Arguably, the Kernel-WIS estimator trades off some coverage for the increased accuracy since, by comparing figure \ref{fig:kwis_performance_no_buffer_by_n_samples_non_oracle} with figure \ref{fig:both_cov_prop_oracle}, the coverage performance of the Kernel-WIS estimator degrades more rapidly than the pointwise performance.\\

\FloatBarrier

{\small\tabcolsep=3pt  
	\begin{longtable}{*{7}{>{\footnotesize}c}}
	\caption{The table describes the pointwise mean squared error and the width of the bootstrapped confidence intervals of the WIS estimator and Kernel-WIS estimator (with a shared bandwidth), split by dataset, policy type and a discretisation of the absolute difference in true performance under the evaluation and logging policy, for non-oracle behaviour policies. Aggregations for each row are across logging policy temperature and logging policy faulty action. Values in bold in the WIS and Kernel-WIS columns highlight the best performing estimator. Neither are highlighted if the mean performance is within 0.001.}
	\label{tbl:k_wis_wis_non_oracle_width} \\
	\toprule
	& & & \multicolumn{2}{c}{Estimate} & \multicolumn{2}{c}{Coverage} \\ 
	Dataset & Policy Type & \makecell{Eval/Log\\Abs. Diff.} & WIS & Kernel-WIS & WIS & Kernel-WIS \\
	\midrule
	\endfirsthead
	\toprule
	& & & \multicolumn{2}{c}{Estimate} & \multicolumn{2}{c}{Coverage} \\ 
	Dataset & Policy Type & \makecell{Eval/Log\\Abs. Diff.} & WIS & Kernel-WIS & WIS & Kernel-WIS \\
	\midrule
	\endhead            
	\midrule\multicolumn{7}{r}{{Continued on next page}} \\ 
	\endfoot     
	\endlastfoot 
	arrhythmia & Gibbs & (-0.2, 0.0] & 0.133$\pm$0.101 & \textbf{0.098$\pm$0.093} & 0.306$\pm$0.366 & \textbf{0.433$\pm$0.405} \\
	arrhythmia & Gibbs & (0.0, 0.2] & 0.097$\pm$0.074 & \textbf{0.074$\pm$0.056} & 0.417$\pm$0.400 & \textbf{0.463$\pm$0.416} \\
	arrhythmia & Gibbs & (0.2, 0.4] & 0.218$\pm$0.148 & \textbf{0.201$\pm$0.122} & \textbf{0.342$\pm$0.427} & 0.317$\pm$0.414 \\
	arrhythmia & Gibbs & (0.4, 0.6] & \textbf{0.221$\pm$0.150} & 0.249$\pm$0.139 & \textbf{0.408$\pm$0.466} & 0.327$\pm$0.439 \\
	arrhythmia & IW & (0.0, 0.2] & 0.155$\pm$0.115 & \textbf{0.095$\pm$0.084} & 0.559$\pm$0.368 & \textbf{0.681$\pm$0.346} \\
	arrhythmia & IW & (0.2, 0.4] & 0.284$\pm$0.117 & \textbf{0.221$\pm$0.109} & 0.131$\pm$0.293 & \textbf{0.202$\pm$0.325} \\
	arrhythmia & SNIW & (0.0, 0.2] & 0.157$\pm$0.110 & \textbf{0.092$\pm$0.050} & 0.622$\pm$0.312 & \textbf{0.718$\pm$0.284} \\
	arrhythmia & SNIW & (0.2, 0.4] & 0.082$\pm$0.045 & \textbf{0.078$\pm$0.046} & 0.400$\pm$0.110 & \textbf{0.422$\pm$0.134} \\
	arrhythmia & SNIW & (0.4, 0.6] & 0.455$\pm$0.210 & \textbf{0.257$\pm$0.121} & 0.074$\pm$0.251 & 0.074$\pm$0.251 \\
	\midrule
	micro-mass & Gibbs & (-0.2, 0.0] & 0.111$\pm$0.072 & \textbf{0.076$\pm$0.055} & 0.381$\pm$0.380 & \textbf{0.442$\pm$0.378} \\
	micro-mass & Gibbs & (0.0, 0.2] & 0.105$\pm$0.071 & \textbf{0.088$\pm$0.063} & 0.367$\pm$0.365 & \textbf{0.397$\pm$0.360} \\
	micro-mass & Gibbs & (0.2, 0.4] & \textbf{0.169$\pm$0.104} & 0.179$\pm$0.089 & \textbf{0.362$\pm$0.414} & 0.239$\pm$0.354 \\
	micro-mass & IW & (0.0, 0.2] & 0.151$\pm$0.085 & \textbf{0.114$\pm$0.076} & 0.494$\pm$0.370 & \textbf{0.615$\pm$0.386} \\
	micro-mass & SNIW & (0.0, 0.2] & 0.143$\pm$0.086 & \textbf{0.115$\pm$0.044} & 0.420$\pm$0.259 & \textbf{0.423$\pm$0.262} \\
	micro-mass & SNIW & (0.2, 0.4] & 0.267$\pm$0.138 & \textbf{0.241$\pm$0.119} & \textbf{0.140$\pm$0.238} & 0.133$\pm$0.222 \\
	micro-mass & SNIW & (0.4, 0.6] & 0.328$\pm$0.194 & \textbf{0.251$\pm$0.124} & 0.068$\pm$0.115 & 0.068$\pm$0.115 \\
	\midrule
	optdigits & Gibbs & (-0.2, 0.0] & 0.134$\pm$0.087 & \textbf{0.058$\pm$0.050} & 0.066$\pm$0.210 & \textbf{0.239$\pm$0.378} \\
	optdigits & Gibbs & (0.0, 0.2] & 0.125$\pm$0.085 & \textbf{0.087$\pm$0.059} & 0.089$\pm$0.241 & \textbf{0.142$\pm$0.305} \\
	optdigits & Gibbs & (0.2, 0.4] & \textbf{0.164$\pm$0.104} & 0.185$\pm$0.082 & \textbf{0.125$\pm$0.308} & 0.024$\pm$0.140 \\
	optdigits & IW & (0.0, 0.2] & 0.188$\pm$0.158 & \textbf{0.177$\pm$0.065} & \textbf{0.256$\pm$0.387} & 0.057$\pm$0.208 \\
	optdigits & IW & (0.2, 0.4] & \textbf{0.148$\pm$0.108} & 0.196$\pm$0.089 & \textbf{0.261$\pm$0.348} & 0.115$\pm$0.265 \\
	optdigits & IW & (0.4, 0.6] & \textbf{0.104$\pm$0.094} & 0.227$\pm$0.063 & \textbf{0.593$\pm$0.441} & 0.049$\pm$0.181 \\
	optdigits & SNIW & (0.0, 0.2] & 0.120$\pm$0.046 & \textbf{0.105$\pm$0.056} & 0.059$\pm$0.139 & \textbf{0.119$\pm$0.272} \\
	optdigits & SNIW & (0.2, 0.4] & \textbf{0.078$\pm$0.069} & 0.229$\pm$0.105 & \textbf{0.545$\pm$0.423} & 0.080$\pm$0.191 \\
	optdigits & SNIW & (0.4, 0.6] & \textbf{0.107$\pm$0.088} & 0.212$\pm$0.096 & \textbf{0.444$\pm$0.444} & 0.059$\pm$0.142 \\
	\midrule
	page-blocks & Gibbs & (-0.2, 0.0] & 0.114$\pm$0.117 & \textbf{0.085$\pm$0.073} & 0.159$\pm$0.353 & \textbf{0.195$\pm$0.382} \\
	page-blocks & Gibbs & (0.0, 0.2] & 0.083$\pm$0.078 & \textbf{0.078$\pm$0.059} & \textbf{0.161$\pm$0.347} & 0.099$\pm$0.276 \\
	page-blocks & Gibbs & (0.2, 0.4] & 0.152$\pm$0.118 & \textbf{0.132$\pm$0.094} & \textbf{0.244$\pm$0.417} & 0.170$\pm$0.340 \\
	page-blocks & Gibbs & (0.4, 0.6] & \textbf{0.238$\pm$0.199} & 0.247$\pm$0.210 & 0.259$\pm$0.401 & 0.259$\pm$0.396 \\
	page-blocks & Gibbs & (0.6, 0.8] & 0.336$\pm$0.248 & 0.366$\pm$0.251 & \textbf{0.231$\pm$0.404} & 0.184$\pm$0.364 \\
	page-blocks & Gibbs & (0.8, 1.0] & \textbf{0.310$\pm$0.259} & 0.313$\pm$0.263 & 0.324$\pm$0.422 & \textbf{0.343$\pm$0.438} \\
	page-blocks & IW & (0.0, 0.2] & 0.106$\pm$0.099 & \textbf{0.053$\pm$0.049} & 0.137$\pm$0.323 & \textbf{0.232$\pm$0.374} \\
	page-blocks & IW & (0.2, 0.4] & \textbf{0.021$\pm$0.009} & 0.067$\pm$0.014 & \textbf{0.921$\pm$0.124} & 0.000$\pm$0.000 \\
	page-blocks & IW & (0.4, 0.6] & 0.076$\pm$0.066 & \textbf{0.065$\pm$0.038} & \textbf{0.648$\pm$0.504} & 0.630$\pm$0.414 \\
	page-blocks & IW & (0.6, 0.8] & 0.125$\pm$0.121 & 0.124$\pm$0.082 & \textbf{0.594$\pm$0.435} & 0.570$\pm$0.421 \\
	page-blocks & SNIW & (0.0, 0.2] & 0.189$\pm$0.083 & \textbf{0.066$\pm$0.085} & 0.083$\pm$0.213 & \textbf{0.368$\pm$0.406} \\
	page-blocks & SNIW & (0.2, 0.4] & 0.202$\pm$0.050 & \textbf{0.027$\pm$0.021} & 0.000$\pm$0.000 & \textbf{0.469$\pm$0.471} \\
	page-blocks & SNIW & (0.4, 0.6] & \textbf{0.179$\pm$0.149} & 0.198$\pm$0.118 & \textbf{0.385$\pm$0.442} & 0.236$\pm$0.384 \\
	page-blocks & SNIW & (0.6, 0.8] & 0.071$\pm$0.076 & \textbf{0.062$\pm$0.054} & 0.866$\pm$0.258 & \textbf{0.874$\pm$0.273} \\
	\midrule
	pendigits & Gibbs & (-0.2, 0.0] & 0.134$\pm$0.087 & \textbf{0.067$\pm$0.046} & 0.044$\pm$0.175 & \textbf{0.078$\pm$0.236} \\
	pendigits & Gibbs & (0.0, 0.2] & 0.127$\pm$0.087 & \textbf{0.084$\pm$0.059} & 0.050$\pm$0.182 & \textbf{0.123$\pm$0.278} \\
	pendigits & Gibbs & (0.2, 0.4] & \textbf{0.159$\pm$0.104} & 0.164$\pm$0.088 & \textbf{0.110$\pm$0.295} & 0.022$\pm$0.139 \\
	pendigits & IW & (0.0, 0.2] & 0.169$\pm$0.129 & \textbf{0.102$\pm$0.039} & \textbf{0.092$\pm$0.283} & 0.008$\pm$0.034 \\
	pendigits & IW & (0.2, 0.4] & 0.156$\pm$0.140 & \textbf{0.136$\pm$0.054} & \textbf{0.269$\pm$0.420} & 0.000$\pm$0.000 \\
	pendigits & IW & (0.4, 0.6] & \textbf{0.113$\pm$0.121} & 0.149$\pm$0.026 & \textbf{0.597$\pm$0.456} & 0.000$\pm$0.000 \\
	pendigits & SNIW & (0.0, 0.2] & \textbf{0.097$\pm$0.035} & 0.098$\pm$0.046 & \textbf{0.040$\pm$0.093} & 0.000$\pm$0.000 \\
	pendigits & SNIW & (0.2, 0.4] & \textbf{0.101$\pm$0.084} & 0.150$\pm$0.055 & \textbf{0.316$\pm$0.382} & 0.023$\pm$0.090 \\
	pendigits & SNIW & (0.4, 0.6] & \textbf{0.088$\pm$0.069} & 0.165$\pm$0.044 & \textbf{0.460$\pm$0.452} & 0.000$\pm$0.000 \\
	pendigits & SNIW & (0.6, 0.8] & \textbf{0.075$\pm$0.002} & 0.112$\pm$0.001 & \textbf{0.278$\pm$0.079} & 0.000$\pm$0.000 \\
	\midrule
	soybean & Gibbs & (-0.2, 0.0] & 0.102$\pm$0.063 & \textbf{0.072$\pm$0.050} & \textbf{0.362$\pm$0.408} & 0.359$\pm$0.401 \\
	soybean & Gibbs & (0.0, 0.2] & 0.104$\pm$0.070 & \textbf{0.086$\pm$0.060} & \textbf{0.326$\pm$0.363} & 0.297$\pm$0.345 \\
	soybean & Gibbs & (0.2, 0.4] & \textbf{0.165$\pm$0.104} & 0.178$\pm$0.092 & \textbf{0.364$\pm$0.421} & 0.187$\pm$0.336 \\
	soybean & IW & (0.0, 0.2] & 0.139$\pm$0.077 & \textbf{0.120$\pm$0.067} & 0.477$\pm$0.353 & \textbf{0.509$\pm$0.343} \\
	soybean & IW & (0.2, 0.4] & 0.241$\pm$0.175 & \textbf{0.066$\pm$0.063} & 0.302$\pm$0.382 & \textbf{0.725$\pm$0.319} \\
	soybean & IW & (0.4, 0.6] & 0.493$\pm$0.007 & \textbf{0.009$\pm$0.002} & 0.000$\pm$0.000 & \textbf{1.000$\pm$0.000} \\
	soybean & SNIW & (0.0, 0.2] & 0.141$\pm$0.088 & \textbf{0.078$\pm$0.027} & 0.451$\pm$0.263 & \textbf{0.644$\pm$0.160} \\
	soybean & SNIW & (0.2, 0.4] & 0.320$\pm$0.059 & \textbf{0.039$\pm$0.034} & 0.043$\pm$0.181 & \textbf{0.849$\pm$0.219} \\
	soybean & SNIW & (0.4, 0.6] & 0.336$\pm$0.151 & \textbf{0.116$\pm$0.055} & 0.146$\pm$0.221 & \textbf{0.184$\pm$0.210} \\
	\midrule
	yeast & Gibbs & (-0.2, 0.0] & 0.112$\pm$0.076 & \textbf{0.089$\pm$0.069} & 0.147$\pm$0.297 & \textbf{0.215$\pm$0.361} \\
	yeast & Gibbs & (0.0, 0.2] & 0.116$\pm$0.080 & \textbf{0.108$\pm$0.078} & \textbf{0.220$\pm$0.349} & 0.213$\pm$0.350 \\
	yeast & Gibbs & (0.2, 0.4] & 0.143$\pm$0.109 & 0.142$\pm$0.088 & \textbf{0.379$\pm$0.434} & 0.261$\pm$0.356 \\
	yeast & Gibbs & (0.4, 0.6] & \textbf{0.211$\pm$0.123} & 0.225$\pm$0.116 & \textbf{0.306$\pm$0.412} & 0.224$\pm$0.349 \\
	yeast & IW & (0.0, 0.2] & 0.147$\pm$0.118 & \textbf{0.099$\pm$0.085} & 0.492$\pm$0.460 & \textbf{0.556$\pm$0.444} \\
	yeast & IW & (0.2, 0.4] & 0.188$\pm$0.157 & \textbf{0.084$\pm$0.062} & 0.342$\pm$0.452 & \textbf{0.348$\pm$0.439} \\
	yeast & SNIW & (0.0, 0.2] & 0.148$\pm$0.124 & \textbf{0.094$\pm$0.070} & 0.471$\pm$0.399 & \textbf{0.512$\pm$0.410} \\
	yeast & SNIW & (0.2, 0.4] & 0.220$\pm$0.140 & \textbf{0.110$\pm$0.067} & 0.316$\pm$0.407 & \textbf{0.351$\pm$0.382} \\
	yeast & SNIW & (0.4, 0.6] & 0.378$\pm$0.211 & \textbf{0.160$\pm$0.071} & \textbf{0.151$\pm$0.249} & 0.143$\pm$0.240 \\
	\bottomrule
	\end{longtable}
}

\FloatBarrier

\begin{figure}[!h]
	\centering
	\begin{subfigure}{0.49\linewidth}
		\centering
		\includegraphics[width=\linewidth]{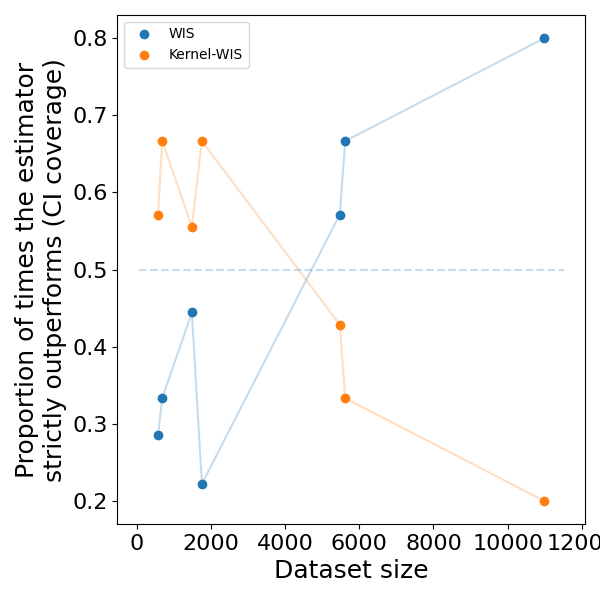}
		\caption{Proportion}
		\label{fig:both_cov_prop_non_oracle}
	\end{subfigure}
	\begin{subfigure}{0.49\linewidth}
		\centering
		\includegraphics[width=\linewidth]{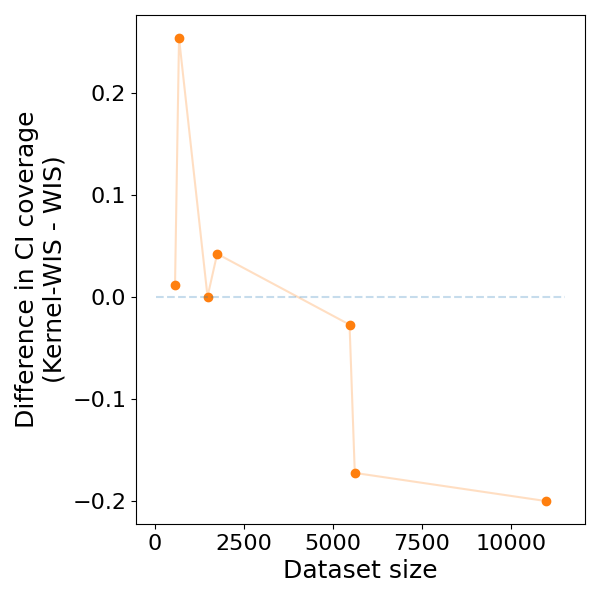}
		\caption{Difference}
		\label{fig:cov_diff_non_oracle}
	\end{subfigure}
	\caption{The figures describe two views of the difference in bootstrapped interval coverage results for the Kernel-WIS and WIS estimators against dataset size on the x-axis. Data are shown for the single action reward, non-oracle behaviour policy setting. The left plot describes the proportion of times the coverage under each estimator is strictly larger than the other. The right plot describes the difference in coverage values.}
\end{figure}

\FloatBarrier

\subsubsection{Continuous reward sensitivity}\label{sec:cont_reward_sensitivity}
All of the results described in sections \ref{sec:oracle_beh_policy_results} and \ref{sec:non_oracle_beh_sensitivity} pertained to contextual bandit definitions where all non-optimal actions achieved a reward of $0$ i.e., $\forall s \in \mathcal{S}, \exists !a \in \mathcal{A}: R(a,s) = 1, \forall a' \neq a, R(a,s) = 0$. Clearly, this is not representative of all contextual bandit definitions. To try and assess the stability of the results in section \ref{sec:oracle_beh_policy_results} (i.e., under oracle behaviour policies only), a continuous reward analogue of the experiments performed on the optdigits dataset was created. To define the continuous rewards, a variational autoencoder (VAE) (\cite{kingma_auto-encoding_2022}) was trained on the dataset and the reward of suboptimal actions was defined by the distance to the mean of the suboptimal action under the latent space of the VAE. Concretely, consider a VAE, defined as $g\cdot f(s)$. For a given state $s_{i}$, there exists an optimal action $a_{\textrm{max}} = \argmax_{a'}R(a',s_{i})$, the associated reward of which is 1, where $R$ is the reward function under the single action reward setup. For all other actions, $a'\neq a_{\textrm{max}}$, define the unnormalised reward as: 
\begin{align*}
	R_{\textrm{Cont-U}}(a',s_{i}) = \Bigg|f(s_{i}) - \frac{1}{n_{\mathcal{H}}}\sum_{j \in \mathcal{H}}f(s_{j})\Bigg|,	
\end{align*}
where $\mathcal{H}$ is the set of indices such that $a'$ is optimal under the single action reward setting and $n_{\mathcal{H}} = \sum_{i=1}^{n}\mathbbm{1}(i \in \mathcal{H})$. The reward function used for the continuous setting, $R_{\textrm{Cont}}$, is derived from $R_{\textrm{Cont-U}}(a',s_{i})$ by normalising as:
\begin{align*}
	R_{\textrm{Cont}}(a',s_{i}) = \begin{cases}
		1 & \textrm{if } a' = \argmax_{a'}R(a',s_{i}) \\
		R_{\textrm{Cont-U}}(a',s_{i}) / \max_{a''}R_{\textrm{Cont-U}}(a'',s_{i}) & \textrm{otherwise.}
	\end{cases}
\end{align*}
This approach was inspired by \cite{kingma_auto-encoding_2022} who originally proposed the VAE architecture and demonstrated the ability to naturally interpolate between digits. Evidence of convergence of the model is provided in appendix section \ref{sec:optdigits_gen_appendix} along with other training details. Figure \ref{fig:mean_dists_cont_reward} presents a heatmap of the rewards for non-optimal actions under the optdigitDist dataset, which displays some intuitive results. For example, predicting the number 9 when the target is 3 receives a relatively high reward whilst predicting the number 6 receives a low reward. Figure \ref{fig:optdigit_data_examps} displays two examples of a digit 3, 6 and 9 chosen randomly from the optdigits dataset. Given the similar orientation of 3 and 9 in comparison to 3 and 6, the proposed reward structure is not unreasonable.\\

\begin{figure}[!h]
	\centering
	\includegraphics[width=\linewidth]{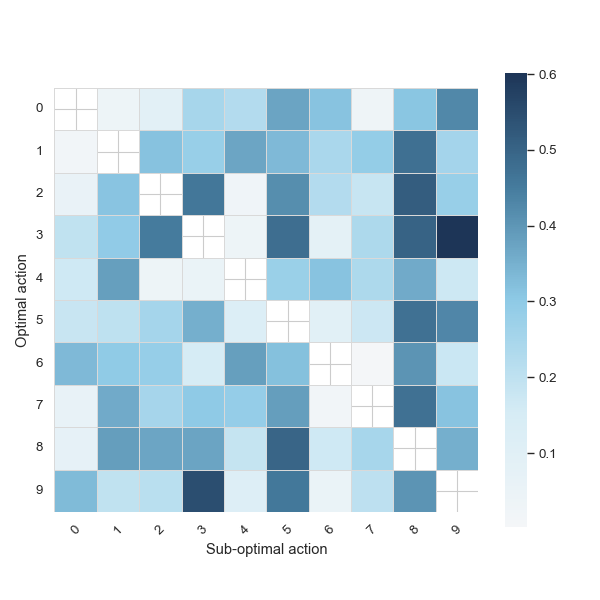}
	\caption{The figure displays a heatmap of the mean reward under sub-optimal actions. The y-axis defines the optimal action and the x-axis defines the sub-optimal action selected as such, the figure should be read from left to right, not up and down.}
	\label{fig:mean_dists_cont_reward}
\end{figure}

\begin{figure}[h!]
	\centering
	\begin{subfigure}{0.49\linewidth}
		\includegraphics[width=\linewidth]{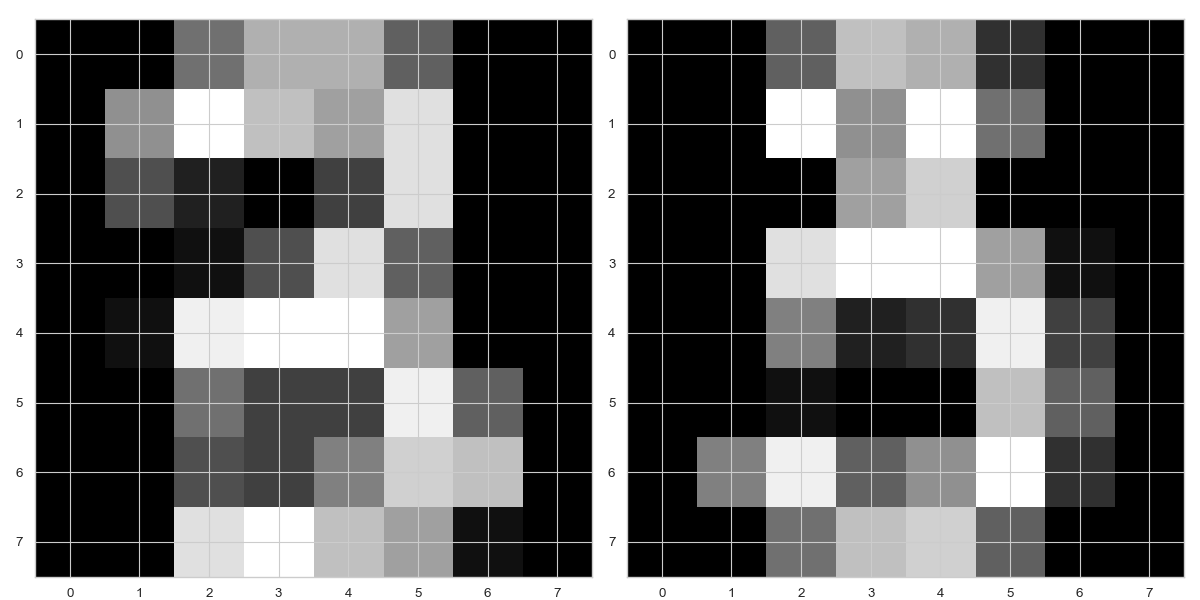}
		\caption{Random observations depicting 3}
	\end{subfigure}
	\begin{subfigure}{0.49\linewidth}
		\includegraphics[width=\linewidth]{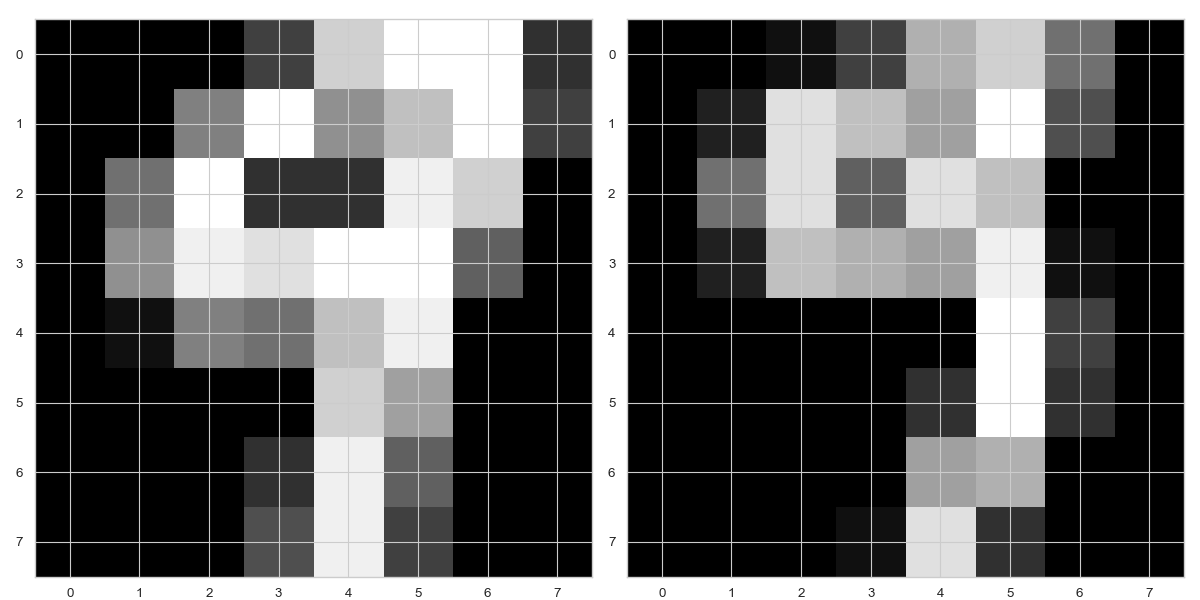}
		\caption{Random observations depicting 9}
	\end{subfigure}
	\begin{subfigure}{0.49\linewidth}
		\includegraphics[width=\linewidth]{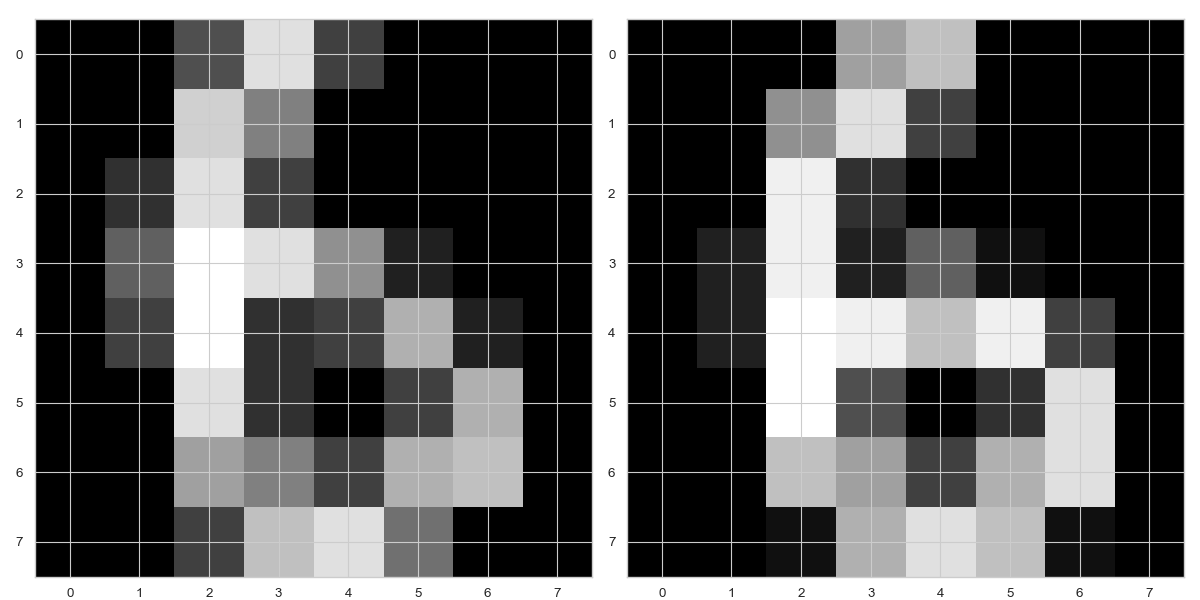}
		\caption{Random observations depicting 6}
	\end{subfigure}
	\caption{The figure depicts randomly selected observations from the optdigits dataset.}
	\label{fig:optdigit_data_examps}
\end{figure}

Clearly, under the continuous reward setting, the relative performance of the Kernel-WIS estimator significantly degraded. However, understanding the precise reasons for this were left for future work.\\

\begin{table}
	\caption{The table compares the performance of the WIS and Kernel-WIS estimators under a single action reward (optdigits) and continuous reward setting (optdigitsDist), using the optdigits dataset. The table describes the pointwise finite mean squared error of the WIS estimator and Kernel-WIS estimator (with a shared bandwidth), split by dataset, policy type and a discretisation of the absolute difference in true performance under the evaluation and logging policy, for oracle behaviour policies. The P-Value column defines the p-value of a two-sided Wald t-test, comparing the finite mean squared error of the WIS and Kernel-WIS estimators. N observations defines the number of experiments aggregated over per row, where the aggregations are across logging policy temperature and logging policy faulty action. Values in bold in the WIS and Kernel-WIS columns highlight the best performing estimator. Neither are highlighted if the mean performance is within 0.001. Values highlighted in the P-Value column identify those which are less than or equal to 0.05.}
	\label{tbl:k_wis_wis_cont_results}
	\resizebox{\linewidth}{!}{
		\begin{tabular}{ccccccc}
		\toprule
		Dataset & Policy Type & Eval/Log Abs. Diff. & WIS & Kernel-WIS & P-Value & N Observations \\
		\midrule
		optdigits & Gibbs & (-0.2, 0.0] & 0.027$\pm$0.013 & 0.027$\pm$0.013 & 14 & 1.00 \\
		optdigits & Gibbs & (0.0, 0.2] & 0.026$\pm$0.011 & 0.026$\pm$0.011 & 74 & 0.97 \\
		optdigits & Gibbs & (0.2, 0.4] & 0.027$\pm$0.011 & 0.027$\pm$0.011 & 44 & 0.99 \\
		optdigits & IW & (0.0, 0.2] & 0.076$\pm$0.011 & 0.077$\pm$0.010 & 5 & 0.94 \\
		optdigits & IW & (0.2, 0.4] & 0.068$\pm$0.006 & 0.068$\pm$0.005 & 9 & 0.92 \\
		optdigits & IW & (0.4, 0.6] & 0.075$\pm$0.004 & 0.075$\pm$0.004 & 12 & 0.90 \\
		optdigits & IW & (0.6, 0.8] & \textbf{0.075$\pm$0.000} & 0.081$\pm$0.002 & 2 & 0.10 \\
		optdigits & SNIW & (0.0, 0.2] & \textbf{0.066$\pm$0.007} & 0.069$\pm$0.005 & 8 & 0.42 \\
		optdigits & SNIW & (0.2, 0.4] & \textbf{0.076$\pm$0.006} & 0.080$\pm$0.006 & 5 & 0.32 \\
		optdigits & SNIW & (0.4, 0.6] & 0.071$\pm$0.006 & 0.071$\pm$0.006 & 15 & 0.94 \\
		\midrule
		optdigitsDist & Gibbs & (-0.2, 0.0] & 0.056$\pm$0.011 & 0.056$\pm$0.011 & 14 & 1.00 \\
		optdigitsDist & Gibbs & (0.0, 0.2] & \textbf{0.054$\pm$0.009} & 0.056$\pm$0.011 & 96 & 0.14 \\
		optdigitsDist & Gibbs & (0.2, 0.4] & \textbf{0.059$\pm$0.015} & 0.064$\pm$0.017 & 22 & 0.38 \\
		optdigitsDist & IW & (0.0, 0.2] & \textbf{0.061$\pm$0.007} & 0.079$\pm$0.025 & 28 & \textbf{0.00} \\
		optdigitsDist & SNIW & (0.0, 0.2] & \textbf{0.062$\pm$0.003} & 0.084$\pm$0.025 & 26 & \textbf{0.00} \\
		optdigitsDist & SNIW & (0.2, 0.4] & \textbf{0.070$\pm$0.001} & 0.133$\pm$0.002 & 2 & \textbf{0.01} \\
		\bottomrule
		\end{tabular}
	}
\end{table}

\FloatBarrier

\subsubsection{Bandwidth sensitivity}\label{sec:bandwidth_sensitivity}
For the results presented in the previous sections, the Kernel-WIS estimator was optimised using a single bandwidth value for every dimension of the feature space i.e., $h$ in equation \ref{equ:kwis_estimator} is a real number and not a vector, herein referred to as Kernel-WIS (Shared). Every experiment was also run by optimising unique bandwidth values for each dimension of the state, herein referred to as Kernel-WIS (Unique). However, this produced worse (or equivalent) results in all but one instance. The aggregate results are provided in table \ref{tbl:overall_results_wis_optim}.\\

\begin{table}[!h]
	\caption{The table describes the pointwise finite mean squared error of the Kernel-WIS estimator with a shared and unique bandwidth (denoted Kernel-WIS (Shared) and Kernel-WIS (Unique)) against the WIS estimator. The results are split by dataset, whether an oracle behaviour policy was used and the reward function structure. Aggregations for each row are across logging policy temperature, logging policy faulty action and policy type.}
	\label{tbl:overall_results_wis_optim}
	\centering
		\begin{tabular}{cccccc}
		\toprule
		\makecell{Reward\\Structure} & \makecell{Behaviour\\Policy} & Dataset & WIS & \makecell{Kernel-WIS\\(Shared)} & \makecell{Kernel-WIS\\(Unique)} \\
		\midrule
		Single action & Oracle & yeast & 0.046$\pm$0.021 & 0.043$\pm$0.018 & 0.043$\pm$0.032 \\
		Single action & Oracle & soybean & 0.063$\pm$0.032 & 0.062$\pm$0.032 & 0.100$\pm$0.066 \\
		Single action & Oracle & page-blocks & 0.064$\pm$0.021 & 0.064$\pm$0.021 & 0.070$\pm$0.024 \\
		Single action & Oracle & pendigits & 0.037$\pm$0.019 & 0.038$\pm$0.022 & 0.057$\pm$0.045 \\
		Single action & Oracle & micro-mass & 0.061$\pm$0.025 & 0.062$\pm$0.024 & 0.126$\pm$0.100 \\
		Single action & Oracle & arrhythmia & 0.059$\pm$0.024 & 0.054$\pm$0.026 & 0.096$\pm$0.113 \\
		Single action & Oracle & optdigits & 0.040$\pm$0.023 & 0.040$\pm$0.024 & 0.130$\pm$0.119 \\
		Single action & Oracle & kropt & 0.037$\pm$0.017 & 0.033$\pm$0.017 & 0.034$\pm$0.017 \\
		Single action & Oracle & letter & 0.032$\pm$0.021 & 0.031$\pm$0.021 & 0.031$\pm$0.021 \\
		\midrule
		Single action & Non-Oracle & yeast & 0.142$\pm$0.112 & 0.117$\pm$0.087 & 0.113$\pm$0.086 \\
		Single action & Non-Oracle & soybean & 0.150$\pm$0.109 & 0.117$\pm$0.083 & 0.128$\pm$0.094 \\
		Single action & Non-Oracle & page-blocks & 0.168$\pm$0.175 & 0.157$\pm$0.180 & 0.157$\pm$0.179 \\
		Single action & Non-Oracle & pendigits & 0.134$\pm$0.098 & 0.114$\pm$0.072 & 0.119$\pm$0.075 \\
		Single action & Non-Oracle & arrhythmia & 0.139$\pm$0.120 & 0.105$\pm$0.095 & 0.107$\pm$0.116 \\
		Single action & Non-Oracle & optdigits & 0.137$\pm$0.100 & 0.138$\pm$0.091 & 0.166$\pm$0.117 \\
		Single action & Non-Oracle & micro-mass & 0.150$\pm$0.106 & 0.136$\pm$0.094 & 0.136$\pm$0.104 \\
		\midrule
		Continuous & Oracle & optdigitsDist & 0.057$\pm$0.010 & 0.065$\pm$0.021 & 0.143$\pm$0.108 \\
		\bottomrule
		\end{tabular}
\end{table}

Figure \ref{fig:kwis_optim_unique_bw_mean_ovr_std_oracle} displays boxplots of the standard deviation of bandwidth values over the mean bandwidth value per experiment, calculated for the Kernel-WIS (Unique) estimator. As can be seen on the plot, that variation of bandwidth values under the Kernel-WIS (Unique) estimator was considerable and it is hypothesised that this lead to a kind of \enquote{overfitting} behaviour. Figure \ref{fig:mean_mean_dist_by_ds_oracle} displays the average kernel output i.e., the average value of $k_{h}(s,s_{j})$ of the  Kernel-WIS (Shared) and Kernel-WIS (Unique) estimators side by side. Clearly, under the Kernel-WIS (Unique) estimator, the distribution of average distances is smaller. This is a result of the overfitting behaviour causing a single or subset of bandwidth values to be extremely large and all others to be small (demonstrated by ratio of standard deviation to mean on figure \ref{fig:kwis_optim_unique_bw_mean_ovr_std_oracle}). In particular, for high dimensional states (i.e., not yeast and page-blocks), the resulting distance was very small, causing the Kernel-WIS (Unique) estimator to behave more like the State-WIS estimator. Both figures \ref{fig:kwis_optim_unique_bw_mean_ovr_std_oracle} and \ref{fig:mean_mean_dist_by_ds_oracle} are displayed for the single action reward, oracle behaviour policies however, the results transfer.\\

\begin{figure}[!h]
	\centering
	\includegraphics[width=\linewidth]{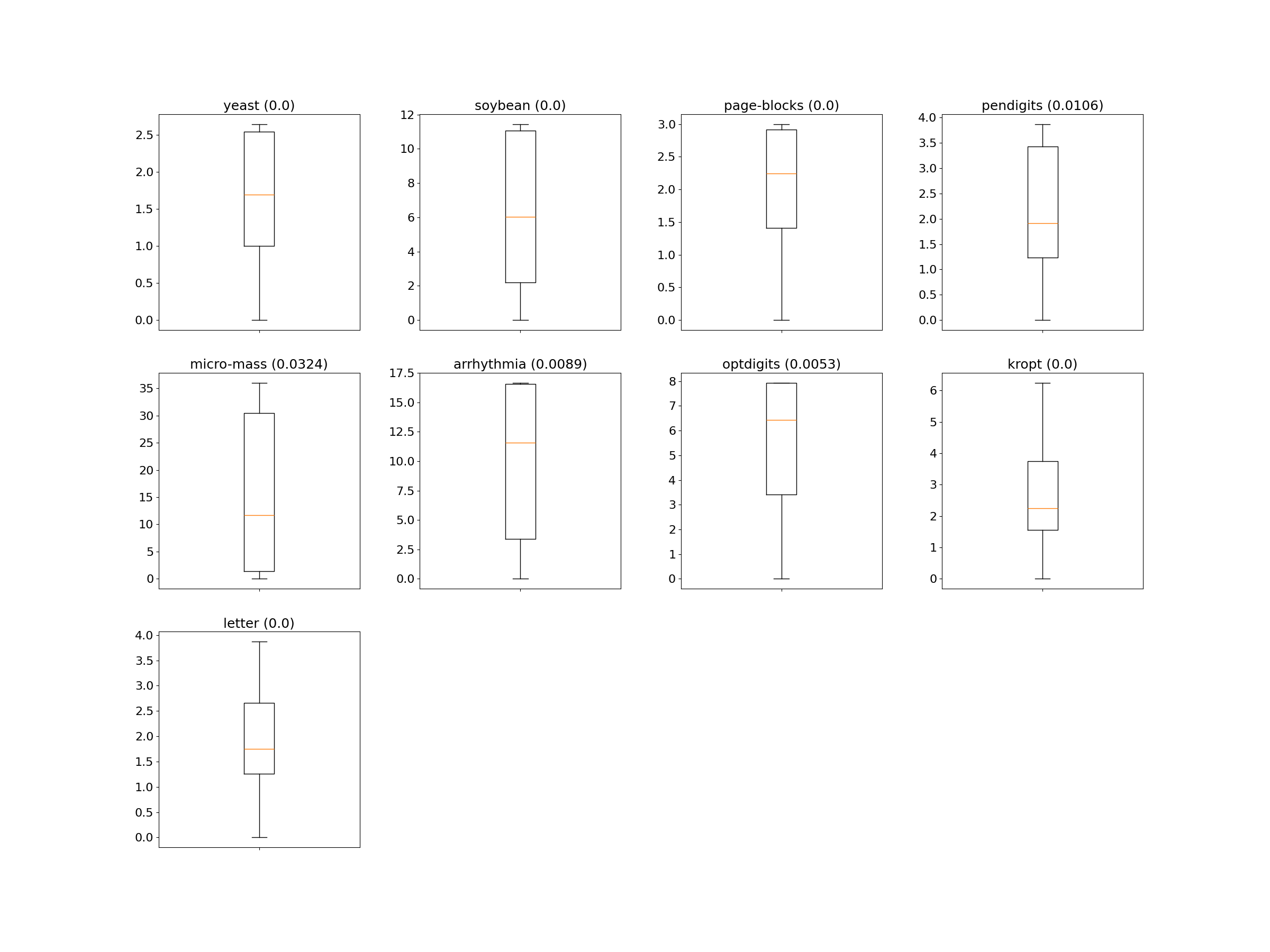}
	\caption{The figure depicts boxplots of the mean bandwidth value over the standard deviation of bandwidth values for the Kernel-WIS (Unique) estimator, split by datasets and calculated under the single action reward, oracle behaviour policy setting. The values in brackets next to the titles define the proportion of standard deviation value that overflowed (due to already extremely large bandwidth values).}
	\label{fig:kwis_optim_unique_bw_mean_ovr_std_oracle}
\end{figure}

\begin{figure}[!h]
	\centering
	\includegraphics[width=\linewidth]{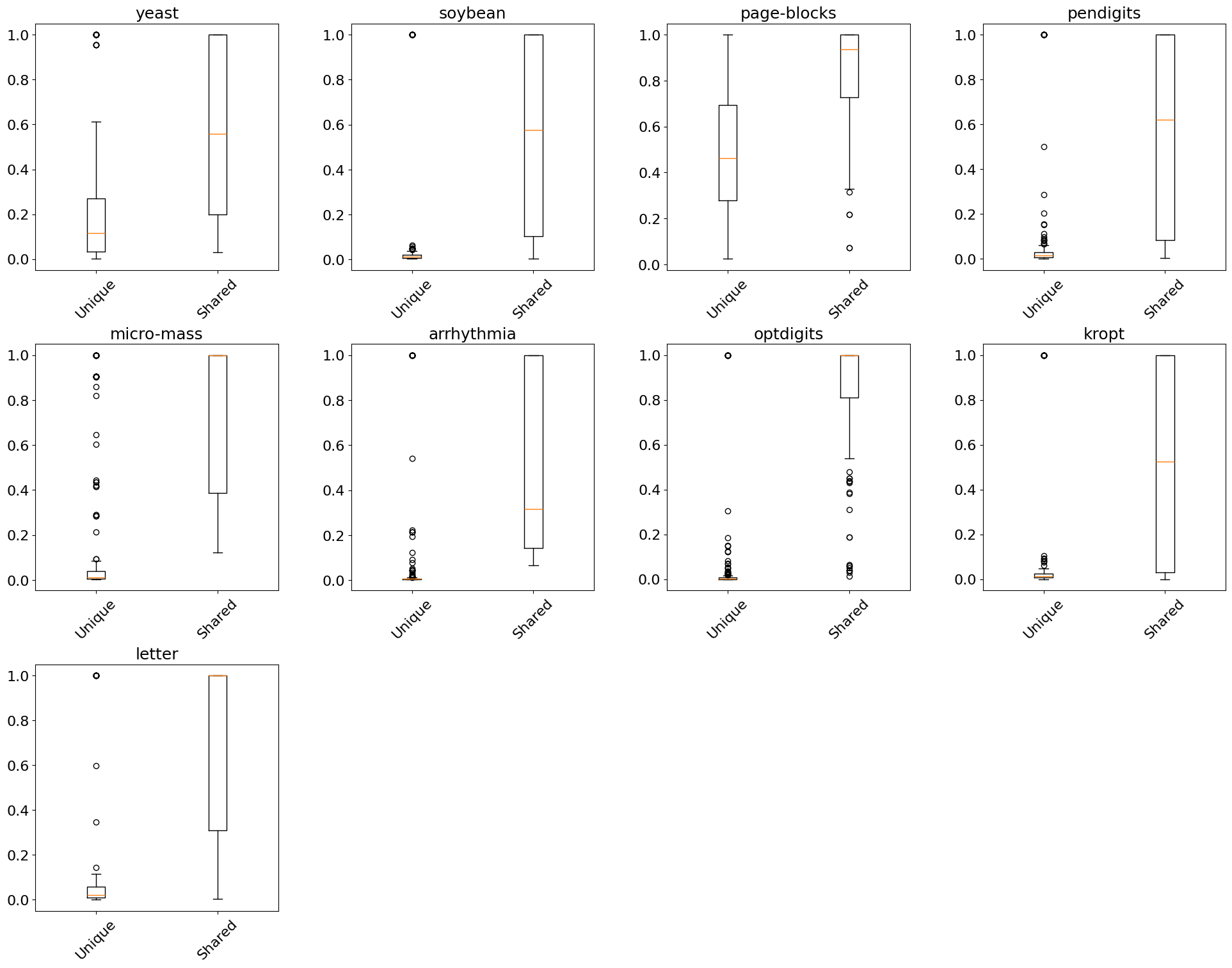}
	\caption[The figure depicts boxplots of the average kernel value, split by Kernel-WIS (Shared) and Kernel-WIS (Unique) and dataset and calculated under the single action reward, oracle behaviour policy setting.]{The figure depicts boxplots of the average kernel value (i.e., the average value of $k_{h}(s,s_{j})$), split by Kernel-WIS (Shared) and Kernel-WIS (Unique) and dataset and calculated under the single action reward, oracle behaviour policy setting.}
	\label{fig:mean_mean_dist_by_ds_oracle}
\end{figure}

An interesting direction of future research would be to consider whether unique bandwidths could be used with an appropriate regularisation e.g., penalising for a large standard deviation. However, given the bandwidth selection procedure is already tangental to the ultimate estimand of interest, this was not explored any further.\\

\FloatBarrier

\subsubsection{Learnt logging policy sensitivity}\label{sec:learnt_logging_sensitivity}
The results in sections \ref{sec:oracle_beh_policy_results} and \ref{sec:non_oracle_beh_sensitivity} suggested the presence of a relationship between the relative performance of Kernel-WIS and WIS and sample-size or action dimension. It was hypothesised that the dependence was on action dimension and this was an artefact of the experimental setup. The construction of the logging and behaviour policies with respect to faulty actions (described in section \ref{sec:logging_behaviour_policy_defs}) was hypothesised to have lead to irregular (with respect to smoothness) policy functions. Policies with different levels of performance were defined by forcing errors across an action, across all states. For datasets with small action spaces (where Kernel-WIS performed worse), this irregularity was thus hypothesised to have a greater effect.\\

To assess this hypothesis, logging and behaviour policies were trained on each of the datasets. Different performance levels were achieved by selecting different epochs in the training process. Figure \ref{fig:summary_performance_cont_policy} describes the median number of times the Kernel-WIS estimator strictly outperforms the WIS estimator and the median difference in mean squared error values under oracle and non-oracle behaviour policies against action dimension (i.e., the same views as displayed in figures \ref{fig:summary_performance_oracle} and \ref{fig:summary_performance_non_oracle}). Arguably, in figure \ref{fig:kwis_performance_no_buffer_by_action_dim_oracle_learnt_policies}, there is a noisy linear relationship between performance and action dimension however, it is certainly not clear. The relative performance of the two estimators is thus not necessarily dependent on action dimension and this needs to be investigated further. It is clear however, that using smoother policies improved the relative performance of the Kernel-WIS estimator. Demonstrated by figures \ref{fig:kwis_performance_no_buffer_by_action_dim_non_oracle_learnt_policies} and \ref{fig:norm_performance_by_action_dim_non_oracle_learnt_policies}, the Kernel-WIS estimator (in aggregation), uniformly outperforms the WIS estimator under non-oracle behaviour policies. Furthermore, under the oracle setting, the worst case and best case Kernel-WIS performance is higher in comparison to the experiments in section \ref{sec:oracle_beh_policy_results}. The results suggest that policy smoothness may affect the performance of the Kernel-WIS estimator, an observation that would be useful to capture within a learning bound (discussed in section \ref{sec:conc}). The itemised results along with discussion on model training and epoch selection are provided in appendix section \ref{sec:appendix_learnt_logging_sensitivity}.\\

\begin{figure}[!h]
	\centering
	\begin{subfigure}{0.49\linewidth}
		\includegraphics[width=\linewidth]{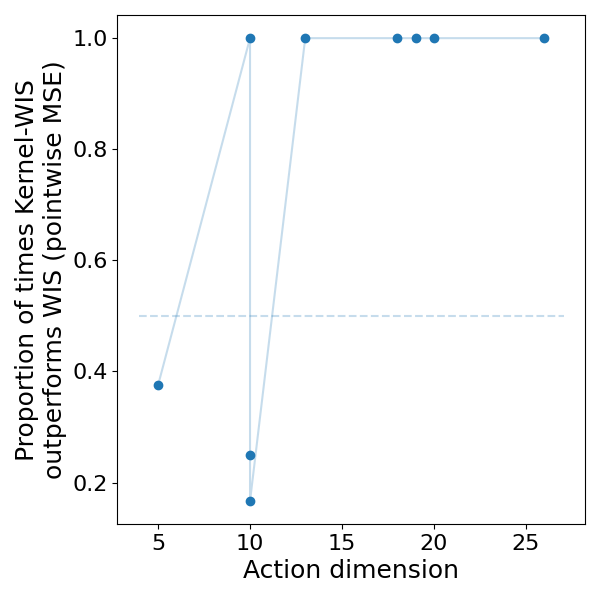}
		\caption{Proportion}
		\label{fig:kwis_performance_no_buffer_by_action_dim_oracle_learnt_policies}
	\end{subfigure}
	\begin{subfigure}{0.49\linewidth}
		\includegraphics[width=\linewidth]{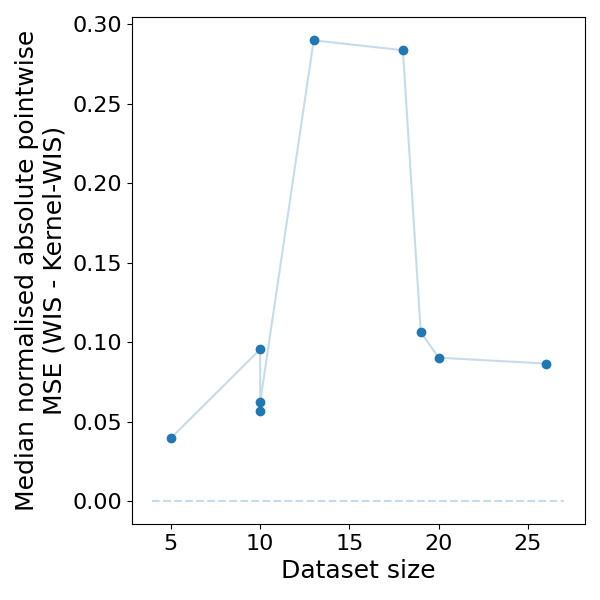}
		\caption{Median normalised difference}
		\label{fig:norm_performance_by_action_dim_oracle_learnt_policies}
	\end{subfigure}
	\begin{subfigure}{0.49\linewidth}
		\includegraphics[width=\linewidth]{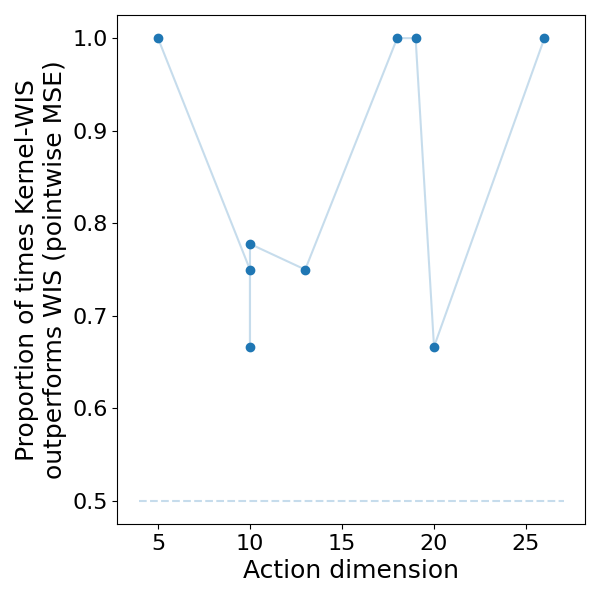}
		\caption{Proportion}
		\label{fig:kwis_performance_no_buffer_by_action_dim_non_oracle_learnt_policies}
	\end{subfigure}
	\begin{subfigure}{0.49\linewidth}
		\includegraphics[width=\linewidth]{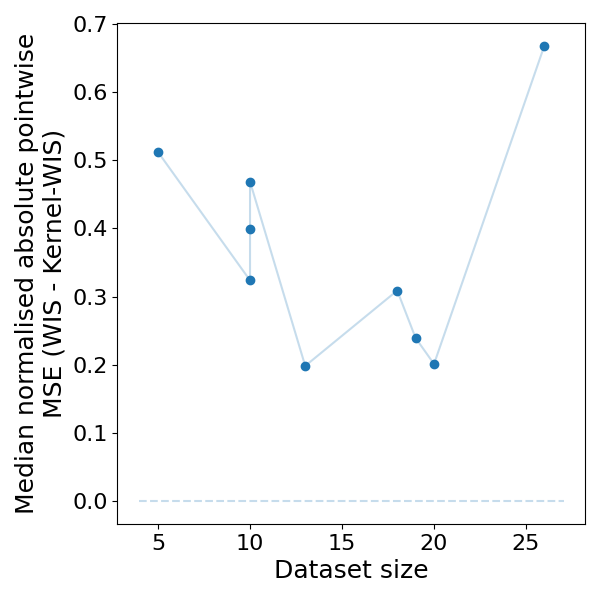}
		\caption{Median normalised difference}
		\label{fig:norm_performance_by_action_dim_non_oracle_learnt_policies}
	\end{subfigure}
	\caption{The figures describe two views on the difference in pointwise mean squared error for the Kernel-WIS and WIS estimators against action dimensionon the x-axis, for oracle (top) and non-oracle (bottom) behaviour policies. Data are shown for the single action reward, learnt logging/behaviour policy setting. The left plots describe the proportion of scenarios (i.e., unique combinations of policy type and buckets differences in true evaluation and logging policy return) where the Kernel-WIS estimator strictly outperforms the WIS estimator. The right plots describe the median normalised difference in pointwise mean squared error ($\textrm{AD}(\textrm{WIS})-\textrm{AD}(\textrm{Kernel-WIS}/(\textrm{AD}(\textrm{WIS})+\textrm{AD}(\textrm{Kernel-WIS}))$).}
	\label{fig:summary_performance_cont_policy}
\end{figure}

\FloatBarrier

\section{Limitations}\label{sec:limitations}

\subsection{Non-uniformity of performance}
The Kernel-WIS estimator does not uniformly outperform the WIS estimator which leaves the analyst with the difficult task of performing estimator selection. The analysis provided some relatively clear boundaries: under discrete rewards, the Kernel-WIS estimator can be used to trade-off variance reduction for a (statistically insignificant) drop in performance under oracle behaviour policies and for a (statistically significant) boost in performance under non-oracle behaviour policies. However, again this is not entirely uniform, demonstrated by the results on the Optdigits dataset in particular. Since it is reasonable to assume that behaviour policies are never oracle, always choosing the Kernel-WIS estimator might be a reasonable approach. Improving performance via stronger bandwidth selection algorithms is anticipated to strengthen this recommendation.\\

\subsection{Reduced coverage}
The experiments using boostrapped confidence intervals suggest coverage under the Kernel-WIS estimator is potentially reduced. The poor coverage results under the Kernel-WIS estimator are most pronounced under non-oracle behaviour policies, summarised in figure \ref{fig:both_cov_prop_non_oracle}. Whilst the coverage under Kernel-WIS does become extremely low, this behaviour does not appear drastically different to the WIS estimator. Indeed, when the WIS estimator performs poorly in pointwise estimation, the coverage of the boostrapped intervals is similarly low. The correlation between pointwise performance and interval converage is expected since the intervals themselves are biased. The evidence presented does suggest a potential loss in coverage under the Kernel-WIS estimator however, an analysis considering unbiased intervals (e.g., PAC or asymptotically normal) is required before strong conclusions can be drawn.\\

\subsection{Increased computation}
The main practical limitation with Kernel-WIS is the challenge in optimising the bandwidth. Even with improved bandwidth selection procedures, it is unlikely that the resulting optimisation landscape will be convex, meaning selecting the bandwidth would still require repeated evaluations of the expensive kernel function. This was largely mitigated through the use of analytic derivatives for the cross-validation approach defined in section \ref{sec:bandwidth_selection} however, this manual derivation limits the generalisability of Kernel-WIS since gradients must be re-derived for any change in the kernel function. Since no research was conducted into the effect of kernel selection, it is unclear how extensive the lack of generalisability will be.\\

\subsection{Theoretical analysis}
The theoretical analysis presented is limited as it assumed fixed evaluation and behaviour policies, and assumes a bandwidth that tends to 0 as $n\rightarrow \infty$. This assumption is not unreasonable: practically, the assumption translates to the size of the test set, $d_{\textrm{test}}$, tending to infinity (assuming the behaviour policy is trained on the training set) and assumes, given a finite dataset of any size, that the bandwidth will be manually forced to 0 as $n \rightarrow \infty$, beginning from the bandwidth selected by the cross validation procedure. It would be beneficial to establish consistency results for a changing behaviour and evaluation policy as well as demonstrate that the sequence of cross validation procedures converges to a bandwidth equal to 0. Intuitively, it is expected that in the limit of infinite data, cross validation will enable an accurate approximation of the random variable $\mathbb{E}[W|S]$ and thus, it is not unrealistic for such a proof to exist. However, this should not be a priority of future work since, as discussed in section \ref{sec:conc}, it is anticipated that better bandwidth selection procedures exist.\\

\subsection{Experimental scenarios}
Finally, whilst the experimental scenarios were non-trivial and expansive in terms of parameters, it would be beneficial to assess the performance of the Kernel-WIS estimator with evaluation and behaviour policy parameterised by common function approximaters such as neural networks. Additionally, it would be beneficial to assess Kernel-WIS considering deterministic, rather than stochastic evaluation policies, since these are common place in off-policy evaluation.\\

\section{Conclusion}
\label{sec:conc}

The analysis presented describes a novel estimator for performing off-policy evaluation in time-independent settings. The estimator was derived under the hypothesis that, whilst the WIS estimator benefits from the $\alpha$-bounded property, the strong interactions between random variables result in the tail behaviour of the estimator being difficult to control. Unfortunately, there was not sufficient scope to theoretically analyse the tail behaviour of the proposed Kernel-WIS estimator however, the analyses provided has laid the foundations for the future in this direction:
\begin{itemize}
	\item The Kernel-WIS was demonstrated to be asymptotically consistent, almost a necessity for any (frequentist) estimator;
	\item With bandwidth selected by cross-validation, the Kernel-WIS estimator demonstrated almost (statistically) identical performance to the WIS estimator under an oracle behaviour policy and single action reward for a wide number of scenarios however;
	\item The Kernel-WIS estimator demonstrated for the majority of settings assessed, (statistically) superior performance to the WIS estimator under mild miss-specification of the behaviour policy and single action reward;
	\item Generally speaking, under the single action reward setting, the WIS estimator only ever marginally outperformed the Kernel-WIS estimator, demonstrated by figures \ref{fig:summary_performance_oracle} and \ref{fig:summary_performance_non_oracle}, however, the Kernel-WIS estimator drastically outperformed the WIS estimator (demonstrated by the same figures and the tests of statistical significance);
	\item Boostrapped confidence intervals under the Kernel-WIS estimator did demonstrate some loss of coverage however, this was minimal under correctly specified behaviour policies. Under miss-specified behaviour policies, the coverage was (understandably) strongly dependent on pointwise performance due to the intervals themselves being biased (discussed further below). Finally;
	\item Under a continuous reward setting, the performance of the Kernel-WIS estimator degraded significantly.
\end{itemize}

Overall, the analysis demonstrated that there is utility in considering the Kernel-WIS estimator over and above the WIS estimator. Arguably, the Kernel-WIS estimator should always be preferred due to the magnitude of performance improvement that can be attained and the superior performance under miss-specified behaviour policies. It is anticipated that more research into bandwidth selection would provide further performance improvements since the proposed approach via cross-validation is somewhat tangental to the actual goal of the estimator: the estimand is $\mathbb{E}_{p_{\pi_{e}}}[R]$ and not $\mathbb{E}_{p_{\pi_{\beta}}}[W|S]$.\\

It is anticipated that the most impactful next step for the Kernel-WIS estimator would be to establish a PAC-Bayes result. This would enable optimisation of the bandwidth hyperparameter to be directly associated with the tail performance of the estimator, rather than tangentially as in the process described in section \ref{sec:bandwidth_selection}. The drop in performance of the Kernel-WIS estimator on some of the datasets (e.g., optdigits) could not be explained however, a learning bound might provide some insight. Additional impactful next steps might be to extend the estimator to the full, time dependent, Markov decision process setting and establish asymptotic normality results. Future work should additionally explore the reasons behind the poor performance of the Kernel-WIS estimator under continuous reward settings.\\


\acks{Thank you to Professor Benjamin Guedj for discussions regarding the general approach to the consistency proof. Joshua Spear is funded by Great Ormond Street Hospital Charity.
}


\newpage

\appendix

\section{Theoretical results}\label{sec:ther_results_appendix}
The proceeding section provides supporting analysis for theorem \ref{ther:asympt_concis_bound_wsa}. Table \ref{tbl:assump_justi} describes each of the assumptions required to derive the result and their practical relevance. As can be seen, none of the assumptions are unusual or restrictive (outside of the common causal inference assumptions such as correct identification of the propensity score model). Theorem \ref{ther:asympt_concis_bound_wsa} is also restated and a full proof is given.\\

{\small\tabcolsep=3pt  
	\begin{longtable}{>{\centering\arraybackslash}p{0.05\linewidth}|>{\centering\arraybackslash}p{0.45\linewidth}|>{\centering\arraybackslash}p{0.45\linewidth}}
		\caption{The table describes the assumptions required for the theoretical analysis of the Kernel-WIS estimator, along with the justification for the applicability of the assumption to standard statistical analysis pipelines.}
		\label{tbl:assump_justi}\\
		\toprule
		\# & Assumption & Implication \\
		\midrule
		\endhead            
		\midrule\multicolumn{3}{r}{{Continued on next page}} \\ 
		\endfoot     
		\endlastfoot
			1 & $\|k_{h}(x)\|<\infty$ & Satisfied by many standard kernels e.g., RBF.\\
			\midrule
			2 & For some $C>0$ and $\nu>0$, $N(\epsilon,\mathcal{K})\leq C\epsilon^{-\nu}$ where $\mathcal{K} = \{k_{h}((x-\cdot)/h^{-\frac{1}{d}}):h>0,x\in \mathbb{R}^{d}\}$ & Satisfied by many standard kernels e.g., RBF. Refer to \cite{einmahl_uniform_2005} for a further explanation.\\
			\midrule
			3 & $\mathcal{K}$ is pointwise measurable & Standard measurability assumption. Pervasive in almost all statistical analyses.\\
			\midrule
			4 & The support of $k_{h}$ is $[-0.5,0.5]^{d}$ & This can be achieved through a simple rescaling of the input data, a process which is common in many statistical analysis pipelines.\\
			\midrule
			5 & $\int k_{h}(x)d(x) = 1$ & Specifies the kinds of kernels that are applicable and notably covers common exponential types e.g., RBF and Matern. Due to the self-normalising property of the estimator, this does not prohibit the use of kernels in their un-normalised form.\\
			\midrule
			6 & Assume $f$ defines the marginal density of $S$ and, $f$ is uniformly Lipschitz continuous and strictly positive on $I^{\epsilon}$, where $I$ is a compact set in $\mathbb{R}^{d}$ and $I^{\epsilon} = \{\max_{1\leq i \leq d}|s_{i}| \leq \epsilon\}$ & Impossible to validate but not particularly strict since no specific bound is placed in the Lipschitz constant.\\
			\midrule
			7 & There exists $M$ such that almost surely, $|W|\mathbbm{1}(S\in I^{\epsilon}) \leq M$ and $W\geq w_{\textrm{min}} >0$ & Bounded weights are implied by the positivity assumption of causal inference so the assumption is uncontroversial.\\
			\midrule 
			8 & $R \in [0,1]$ & Standard assumption for contextual bandits \\
			\midrule
			9 & Correct identification of the propensity model i.e., $\hat{p}_{\pi_{l}} = p_{\pi_{l}}$ & \\
		\bottomrule
	\end{longtable}
}
\newpage

\begin{theorem}[Asymptotic consistency of Kernel-WIS (Theorem \ref{ther:asympt_concis_bound_wsa} repeated)] Under assumptions 1 to 9, for $0<a_{n}<h<b_{n}<1$, $b_{n} \rightarrow 0$ and $\frac{na_{n}}{\log n}\rightarrow\infty$, the Kernel-WIS estimator is asymptotically consistent:
\begin{align*}
	\hat{J}_{\textrm{Kernel-WIS}} \xrightarrow{a.s.} \mathbb{E}[R]
\end{align*}
where: 
\begin{align*}
	\hat{J}_{\textrm{Kernel-WIS}} = \frac{1}{n}\sum_{i=1}r_{i}w_{i}\Bigg(\frac{\sum_{j=1}^{n}k_{h}(s_{i},s_{j})w_{i}}{\sum_{j=1}^{n}k_{h}(s_{i},s_{j})}\Bigg)^{-1}.
\end{align*}
\end{theorem}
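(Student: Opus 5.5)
The plan is to treat the denominator as a Nadaraya--Watson regression estimator and show it converges to $1$ uniformly; the estimator then collapses to the vanilla IS average, which is consistent by the strong law of large numbers. I read the denominator as $\sum_{j}k_{h}(s_{i},s_{j})w_{j}/\sum_{j}k_{h}(s_{i},s_{j})$, as in equation \ref{equ:kwis_estimator}; the $w_{i}$ in the displayed statement looks like a typo. Write
\begin{align*}
	\hat{m}_{n,h}(s) = \frac{\sum_{j=1}^{n}k_{h}(s,s_{j})w_{j}}{\sum_{j=1}^{n}k_{h}(s,s_{j})}, \qquad m(s)=\mathbb{E}[W|S=s].
\end{align*}
Under assumption 9 and positivity, $m(s)=\mathbb{E}_{P_{\pi_{e}}}[1|S=s]=1$. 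The target $\mathbb{E}[R]$ is read as $\mathbb{E}_{P_{\pi_{e}}}[R]=\mathbb{E}_{P_{\pi_{l}}}[RW]$.

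\textbf{Step 1 (uniform convergence of the denominator).} Write $\hat{m}_{n,h}(s)$ as the ratio of two kernel averages,
\begin{align*}
	\hat{m}_{n,h}(s) = \frac{\frac{1}{nh}\sum_{j}k_{h}(s,s_{j})w_{j}}{\frac{1}{nh}\sum_{j}k_{h}(s,s_{j})}.
\end{align*}
Apply Theorem 2 of \cite{einmahl_uniform_2005} (theorem \ref{ther:theorem_2_einmahl}) to numerator and denominator separately. Assumptions 1--7 are exactly its hypotheses, including bounded $W$ on $I^{\epsilon}$. For $a_{n}\le h\le b_{n}$ with $b_{n}\to0$ and $na_{n}/\log n\to\infty$, this gives almost surely
\begin{align*}
	\sup_{a_{n}\le h\le b_{n}}\sup_{s\in I}\big|\hat{m}_{n,h}(s)-m(s)\big|\to 0.
\end{align*}
Nonnegligible bias is not an issue here. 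Since $m\equiv 1$, we have $\mathbb{E}[k_{h}(s,S)W]=\mathbb{E}[k_{h}(s,S)m(S)]=\mathbb{E}[k_{h}(s,S)]$ exactly, so the centred numerator and denominator have the same mean. The Lipschitz and positivity conditions on $f$ in assumption 6 keep the denominator $\frac{1}{nh}\sum_{j}k_{h}(s,s_{j})$ bounded away from $0$ on $I$. The sup over $h$ makes the result hold for any data-dependent bandwidth in $[a_{n},b_{n}]$. The sup over $s\in I$ covers evaluation at the random sample points $s_{i}$, even though $s_{i}$ itself appears in its own sum.

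\textbf{Step 2 (reduce to vanilla IS).} Decompose
\begin{align*}
	\hat{J}_{\textrm{Kernel-WIS}} = \frac{1}{n}\sum_{i=1}^{n}r_{i}w_{i} + \frac{1}{n}\sum_{i=1}^{n}r_{i}w_{i}\Big(\hat{m}_{n,h}(s_{i})^{-1}-1\Big).
\end{align*}
The first term converges almost surely to $\mathbb{E}[RW]=\mathbb{E}_{P_{\pi_{e}}}[R]$ by the SLLN, using $R\in[0,1]$ and bounded $W$. For the second term, $\hat{m}_{n,h}(s_{i})$ is a convex combination of the $w_{j}$, so $\hat{m}_{n,h}(s_{i})\ge w_{\textrm{min}}>0$. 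Hence
\begin{align*}
	\big|\hat{m}_{n,h}(s_{i})^{-1}-1\big| \le w_{\textrm{min}}^{-1}\,\big|\hat{m}_{n,h}(s_{i})-1\big|.
\end{align*}
The second term is therefore bounded by $w_{\textrm{max}}w_{\textrm{min}}^{-1}\sup_{s\in I}|\hat{m}_{n,h}(s)-1|$, which tends to $0$ almost surely by Step 1.

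\textbf{Main obstacle.} The hard part is the mismatch between the compact set $I$, on which the uniform result holds, and the support of $S$. I would first take the support of $S$ to lie in $I$; assumption 4 and the rescaling remark suggest the authors intend this. Otherwise, split the sum into indices with $s_{i}\in I$ and $s_{i}\notin I$. The SLLN makes the second group contribute $\mathbb{E}[RW\mathbbm{1}(S\notin I)]$ in the limit, with only the crude bound $w_{\textrm{max}}/w_{\textrm{min}}$ on the ratio. That term is not small unless $P(S\notin I)=0$, so this restriction is genuinely needed. A secondary point is verifying the VC-type condition (assumption 2) for the weighted class $\{(x,w)\mapsto w\,k((x-\cdot)/h)\}$. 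This follows because multiplying by a single bounded function preserves polynomial covering numbers.
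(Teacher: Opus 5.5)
Your proposal is correct and follows essentially the same route as the paper's proof. Both arguments get uniform-in-$s$-and-$h$ consistency of the Nadaraya--Watson denominator from \cite{einmahl_uniform_2005}, use $\mathbb{E}[W|S=s]=1$ to collapse the estimator onto the vanilla IS average, and finish with the strong law of large numbers. Three of your points tighten the argument without changing it: the explicit bound $|x^{-1}-1|\le w_{\textrm{min}}^{-1}|x-1|$ replaces the paper's appeal to the continuous mapping theorem, the observation that $m\equiv 1$ makes the smoothing bias vanish exactly, and you make explicit that the sample points must lie in $I$, which the paper's proof assumes silently when it bounds the average by a supremum over $I$.
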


\begin{theorem}[Theorem 2, \cite{einmahl_uniform_2005}]\label{ther:theorem_2_einmahl} Assume conditions 1 to 8 are satisfied. Then, for large enough $c>0$ and any $b_{n}\downarrow 0$:
	\begin{align*}
		\limsup_{n\rightarrow \infty} \sup_{\frac{c\log n}{n\leq h \leq b_{n}}} \frac{\sqrt{nh}\|\hat{m}_{n,h}-\frac{\bar{q}(\cdot,h)}{\bar{f}(\cdot,h)}\|_{I}}{\sqrt{\log(h^{-1}) \vee \log\log n}} < \infty
	\end{align*}
	where:
	\begin{align*}
		\hat{m}_{n,h}(s) =& \frac{\sum_{i=1}^{n}W_{i}k_{h}((s-S_{i})h^{-\frac{1}{d}})}{\sum_{i=1}^{n}k_{h}((s-S_{i})h^{-\frac{1}{d}})} \\
		\bar{q}(\cdot,h) =& \mathbb{E}[Wk_{h}((s-S)h^{-\frac{1}{d}})]/h \\
		\bar{f}(\cdot,h) =& \mathbb{E}[k_{h}((s-S)h^{-\frac{1}{d}})]/h 
	\end{align*} 
\end{theorem}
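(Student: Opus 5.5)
The plan is to reduce the ratio statement to two uniform-in-bandwidth bounds on unnormalised kernel sums, and then to control the denominator. Write $K_h(s,S)=k_h((s-S)h^{-1/d})$, $\hat q_{n,h}(s)=\frac{1}{nh}\sum_{i}W_iK_h(s,S_i)$ and $\hat f_{n,h}(s)=\frac{1}{nh}\sum_iK_h(s,S_i)$. Then $\hat m_{n,h}=\hat q_{n,h}/\hat f_{n,h}$, $\mathbb{E}\hat q_{n,h}=\bar q(\cdot,h)$ and $\mathbb{E}\hat f_{n,h}=\bar f(\cdot,h)$. The algebraic identity
\begin{align*}
\hat m_{n,h}-\frac{\bar q}{\bar f}=\frac{\hat q_{n,h}-\bar q}{\hat f_{n,h}}-\frac{\bar q}{\bar f}\cdot\frac{\hat f_{n,h}-\bar f}{\hat f_{n,h}}
\end{align*}
shows that three ingredients suffice:
\begin{enumerate}
\item a uniform bound of order $\sqrt{(\log h^{-1}\vee\log\log n)/(nh)}$ on $\|\hat q_{n,h}-\bar q\|_I$ and $\|\hat f_{n,h}-\bar f\|_I$, holding simultaneously over $c\log n/n\le h\le b_n$;
\item a bound on $\|\bar q/\bar f\|_I$;
\item a strictly positive lower bound on $\inf_{s\in I}\hat f_{n,h}(s)$, eventually and uniformly in $h$.
\end{enumerate}

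Ingredient 2 is immediate. Since $W\mathbbm{1}(S\in I^\epsilon)\le w_{\max}$, and $K_h(s,\cdot)$ is supported in a cube of volume $h$ around $s$ (assumption 4), that cube lies in $I^\epsilon$ once $b_n$ is small. Hence $|\bar q|\le w_{\max}\bar f$.

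For ingredient 3, compute $\bar f(s,h)=\int k(u)f(s-uh^{1/d})\,du$ after the change of variables. The Lipschitz continuity and strict positivity of $f$ on $I^\epsilon$ (assumption 6), together with $\int k=1$, give $\bar f(s,h)\ge \tfrac12\inf_{I^\epsilon}f>0$ for all $h\le b_n$ with $n$ large. Ingredient 1 applied to $\hat f$ then gives $\|\hat f_{n,h}-\bar f\|_I\to0$ uniformly over the bandwidth range, because $nh/\log n\ge c$ and $c$ is large. So $\hat f_{n,h}$ is eventually bounded below on $I$, and plugging ingredients 1–3 into the identity yields the $\limsup<\infty$ claim.

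The main obstacle is ingredient 1, the uniform-in-bandwidth empirical process bound. I would treat it as follows for the class $\mathcal{G}=\{(w,x)\mapsto w\,\mathbbm{1}(x\in I^\epsilon)\,k((s-x)h^{-1/d}) : s\in I,\ h\le b_n\}$; the $\hat f$ case is the special case $w\equiv1$.
\begin{enumerate}
\item \emph{Entropy and envelope.} By assumption 2 and the bounded multiplier $w_{\max}$, $\mathcal{G}$ has polynomial uniform covering numbers. Assumption 3 gives pointwise measurability, so suprema are measurable. The envelope is bounded by $w_{\max}\|k\|_\infty$ (assumption 1). For $h$ in a dyadic band, the second moment satisfies $\sup_{g}\mathbb{E}g^2\lesssim h\,\|f\|_{I^\epsilon}$.
\item \emph{Expected supremum.} A moment bound for VC-type classes (Einmahl–Mason, Proposition 1 style) gives $\mathbb{E}\|\sum_{i\le n}\epsilon_i g(W_i,S_i)\|_{\mathcal{G}_j}\lesssim\sqrt{nh_j\log(1/h_j)}$ on each band $\mathcal{G}_j=\{h_j\le h\le 2h_j\}$. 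This requires $nh_j\gtrsim\log(1/h_j)$, which is exactly the role of the lower bandwidth limit $c\log n/n$.
\item \emph{Concentration.} Talagrand's inequality upgrades the expected supremum to an exponential tail. The deviation level is chosen as $A\sqrt{nh_j(\log(1/h_j)\vee\log\log n)}$.
\item \emph{Almost sure statement.} Block $n$ into $n_k=2^k$ and use the Montgomery-Smith maximal inequality to pass from $n_k$ to $\max_{n_{k-1}<n\le n_k}$. Take a union bound over the $O(\log n_k)$ dyadic bands $h_j=2^jc\log n_k/n_k$ up to $b_{n_k}$. The resulting probabilities are summable in $k$ provided $A$ (equivalently $c$) is large, and Borel–Cantelli finishes the argument.
\end{enumerate}
The delicate point I expect is calibrating the $\log\log n$ term against the number of bands and blocks so that the tail sum converges. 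That calibration is what forces "large enough $c$".
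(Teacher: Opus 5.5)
The paper gives no proof of this statement. It quotes it from Einmahl and Mason (2005) and uses it as a black box. Your outline (ratio identity, a uniform-in-bandwidth bound for the unnormalised sums via a VC-type moment bound, Talagrand's inequality, dyadic bandwidth bands, geometric blocks and Borel--Cantelli, then a lower bound on the denominator) is essentially their route.

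There is, however, a real gap in ingredient 3, and it is exactly where the hypothesis ``for large enough $c$'' enters. Ingredient 1 does not give $\|\hat f_{n,h}-\bar f\|_I\to 0$ uniformly over $c\log n/n\le h\le b_n$. What it gives, eventually almost surely, is $\sup_h\|\hat f_{n,h}-\bar f(\cdot,h)\|_I\le K(c)\sup_h\sqrt{(\log h^{-1}\vee\log\log n)/(nh)}$. At $h=c\log n/n$ one has $\log(h^{-1})/(nh)\to 1/c$, so the right-hand side tends to $K(c)/\sqrt{c}$, not to $0$. Vanishing would need $nh/\log n\to\infty$, which is the regime of the later corollary with $a_n$, not a range extending down to $c\log n/n$.

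To keep $\hat f_{n,h}$ away from zero you therefore need $K(c)/\sqrt{c}<\tfrac12\inf_{I^\epsilon}f$ for some $c$. In other words, the constant delivered by ingredient 1 must grow strictly more slowly than $\sqrt{c}$. Your sketch never shows this, and the phrase ``provided $A$ (equivalently $c$) is large'' points the other way. If the deviation constant had to grow with $c$ like $\sqrt{c}$ or faster, the denominator argument would collapse.

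Relatedly, your closing diagnosis is misplaced. Calibrating the $\log\log n$ term against bands and blocks is done by enlarging the deviation constant, and it works for every fixed $c>0$. ``Large enough $c$'' is forced only by the denominator.

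The repair is to track the $c$-dependence of the constants.
\begin{itemize}
\item Moment bound. With envelope $U=w_{\max}\|k\|_\infty$ and $\sigma^2\le Dh_j$ on the band $[h_j,2h_j]$, the bound is of order $\sqrt{nh_j\log(1/h_j)}+U\log(1/h_j)$. Since $nh_j\ge c\log n\ge c\log(1/h_j)$, the second term is at most $Uc^{-1/2}\sqrt{nh_j\log(1/h_j)}$.
\item Talagrand's inequality. Take $z=\rho\sqrt{n_kh_j\lambda_{j,k}}$ with $\lambda_{j,k}=\log(1/h_j)\vee\log\log n_k$. The Gaussian term $\exp(-A_2\rho^2\lambda_{j,k}/D)\le(\log n_k)^{-A_2\rho^2/D}$ does not involve $c$. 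The Bernstein term $\exp(-A_2z/U)$, with $z\ge\rho\sqrt{c\log n_k\log\log n_k}$, only improves as $c$ grows.
\end{itemize}
Hence $\rho$, and with it $K(c)$, can be taken uniform over $c\ge 1$, say $K(c)\le K_0$. Choosing $c$ with $K_0/\sqrt{c}<\tfrac14\inf_{I^\epsilon}f$ gives $\inf_I\hat f_{n,h}\ge\tfrac14\inf_{I^\epsilon}f$ eventually, uniformly in $h$. Your ratio identity then closes the argument.
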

As the authors state, a clear corollary to this is that, almost surely: 
\begin{align*}
	\limsup_{n\rightarrow \infty} \sup_{\frac{c\log n}{n\leq h \leq b_{n}}} \Bigg\|\hat{m}_{n,h}-\frac{\bar{q}(\cdot,h)}{\bar{f}(\cdot,h)}\Bigg\|_{I} = 0.
\end{align*}

\begin{corollary}[Corollary 2, \cite{einmahl_uniform_2005}]\label{corol:corol_2_einmahl}
	Under assumption 1 to 8 and assuming $p(w,s)$ is the joint density of $(W,S)$ with the marginal density given by:
	\begin{align*}
		f(s) = \int p(w,s)dw
	\end{align*}
	then, for $0<a_{n}<b_{n}<1$, $b_{n} \rightarrow 0$ and $\frac{na_{n}}{\log n}\rightarrow\infty$:
	\begin{align*}
	\lim_{n\rightarrow \infty}\sup_{a_{n}\leq h \leq b_{n}}\|\hat{m}_{n,h} - \mathbb{E}[W|S=s]\|_{I} \xrightarrow{a.s.} 0.
	\end{align*}
\end{corollary}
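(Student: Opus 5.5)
The plan is to split the error with the triangle inequality into a stochastic term and a deterministic bias term. Write $m(s)=\mathbb{E}[W|S=s]=q(s)/f(s)$ with $q(s)=\int w\,p(w,s)\,dw$. Then for every $h$ and $s\in I$,
\begin{align*}
	|\hat{m}_{n,h}(s)-m(s)| \leq \Bigg|\hat{m}_{n,h}(s)-\frac{\bar{q}(s,h)}{\bar{f}(s,h)}\Bigg| + \Bigg|\frac{\bar{q}(s,h)}{\bar{f}(s,h)}-m(s)\Bigg|.
\end{align*}
It therefore suffices to show two things. First, the supremum over $a_{n}\leq h\leq b_{n}$ and $s\in I$ of the first term vanishes almost surely. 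Second, the supremum of the second (non-random) term tends to $0$.

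For the stochastic term I will use Theorem \ref{ther:theorem_2_einmahl}. Since $na_{n}/\log n\rightarrow\infty$, we have $a_{n}\geq c\log n/n$ for all large $n$, for whatever constant $c$ the theorem requires. Hence $[a_{n},b_{n}]\subset[c\log n/n,b_{n}]$ eventually. Theorem \ref{ther:theorem_2_einmahl} then gives, almost surely, a finite random constant $K$ such that for all large $n$
\begin{align*}
	\sup_{a_{n}\leq h\leq b_{n}}\Bigg\|\hat{m}_{n,h}-\frac{\bar{q}(\cdot,h)}{\bar{f}(\cdot,h)}\Bigg\|_{I} \leq K\sup_{a_{n}\leq h\leq b_{n}}\sqrt{\frac{\log(h^{-1})\vee\log\log n}{nh}}.
\end{align*}
The map $h\mapsto\log(h^{-1})/h$ is decreasing, so the right-hand side is maximised at $h=a_{n}$. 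There $\log(a_{n}^{-1})\leq\log n-\log(c\log n)\leq\log n$. The bound is thus at most $K\sqrt{\log n/(na_{n})}\rightarrow 0$. This is exactly the corollary noted after Theorem \ref{ther:theorem_2_einmahl}, with the range $[c\log n/n,b_{n}]$ replaced by the smaller range $[a_{n},b_{n}]$.

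For the bias term, write both quantities as integrals against the kernel. This gives
\begin{align*}
	\frac{\bar{q}(s,h)}{\bar{f}(s,h)}-m(s)=\frac{\int (m(u)-m(s))f(u)\,k_{h}((s-u)h^{-1/d})\,du}{\int f(u)\,k_{h}((s-u)h^{-1/d})\,du}.
\end{align*}
By assumption 4 the integrand vanishes unless $\max_{i}|s_{i}-u_{i}|\leq h^{1/d}/2\leq b_{n}^{1/d}/2$. Assumption 5 gives $k_{h}\geq 0$, so the ratio is a weighted average of $m(u)-m(s)$ over this window. Its absolute value is therefore at most $\omega(b_{n}^{1/d})$, where $\omega(\delta)=\sup\{|m(u)-m(s)|:s\in I,\ \max_{i}|u_{i}-s_{i}|\leq\delta/2\}$. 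For $n$ large the window lies inside $I^{\epsilon}$. The denominator is strictly positive because $f$ is bounded below there (assumption 6). Since $b_{n}\rightarrow 0$, the bias is uniformly $o(1)$ in $s\in I$ and $h\leq b_{n}$, provided $m$ is uniformly continuous on the compact set $I^{\epsilon}$.

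I expect the main obstacle to be securing this uniform continuity of $m=q/f$. The listed assumptions give $f$ Lipschitz and bounded below on $I^{\epsilon}$, so it suffices that $q$ is continuous there. I would obtain this from continuity of $s\mapsto p(w,s)$, the joint density posited in the corollary, combined with dominated convergence. Domination comes from $|W|\leq w_{\textrm{max}}$ on $\{S\in I^{\epsilon}\}$ (assumption 7), so that $q(s)=\int_{|w|\leq w_{\textrm{max}}}w\,p(w,s)\,dw$ is continuous. This is the same regularity Einmahl and Mason impose in their Corollary 2. If it cannot be extracted from assumptions 1 to 8 as stated, I would add it explicitly as a hypothesis rather than leave it implicit.

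Combining the two terms gives $\limsup_{n}\sup_{a_{n}\leq h\leq b_{n}}\|\hat{m}_{n,h}-m\|_{I}=0$ almost surely, which is the claimed statement.
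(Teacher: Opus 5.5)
The paper gives no proof of this statement. It imports it from \cite{einmahl_uniform_2005}, and the only piece it spells out is the remark after Theorem \ref{ther:theorem_2_einmahl} that the centred term $\hat{m}_{n,h}-\bar{q}(\cdot,h)/\bar{f}(\cdot,h)$ vanishes. Your argument is the standard derivation behind that citation: the centred term comes from Theorem \ref{ther:theorem_2_einmahl}, and the bias from the compact support of the kernel plus continuity of $m$. It is correct. Compared with the citation, it brings two things into the open.

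First, your restriction to $h\geq a_{n}$ is essential. At $h=c\log n/n$, Theorem \ref{ther:theorem_2_einmahl} only bounds the centred term by a multiple of $c^{-1/2}$. So the paper's remark, stated over the whole range $[c\log n/n,b_{n}]$, claims more than the theorem gives. Your rate $K\sqrt{\log n/(na_{n})}$ is exactly what the theorem does give. To get the monotone $b_{n}\downarrow 0$ the theorem asks for, replace $b_{n}$ by $\sup_{j\geq n}b_{j}$.

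Second, your worry about the bias term is justified. Uniform continuity of $m=\mathbb{E}[W|S=\cdot]$ on $I^{\epsilon}$ does not follow from assumptions 1 to 8 plus a joint density, and without it the statement as transcribed is false. Take $f$ uniform and $W=1+\mathbbm{1}(S_{1}>0)+U$, with $U$ uniform on $[0,1]$ and independent of $S$. Then $W\in[1,3]$ and $(W,S)$ has a joint density, but $m$ jumps by $1$ across $\{s_{1}=0\}$. If that hyperplane passes through the interior of $I$, then for every $h$ there are points of $I$ within $h^{1/d}/2$ of it where the bias exceeds a constant independent of $h$. Adding the hypothesis explicitly is the right call.

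Two smaller points. In your continuity argument, the bound $|w|\leq w_{\textrm{max}}$ alone does not dominate $w\,p(w,s)$ uniformly for $s$ near $s_{0}$. You can fix this in either of two ways:
\begin{itemize}
\item assume $p$ is jointly continuous, hence bounded, on $[-w_{\textrm{max}},w_{\textrm{max}}]\times I^{\epsilon}$; or
\item use Scheff\'{e}'s lemma: pointwise convergence $p(\cdot,s)\to p(\cdot,s_{0})$ together with $\int p(w,s)\,dw=f(s)\to f(s_{0})$ gives $L^{1}$ convergence, so $|q(s)-q(s_{0})|\leq w_{\textrm{max}}\|p(\cdot,s)-p(\cdot,s_{0})\|_{L^{1}}\to 0$.
\end{itemize}

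It is also better to state the extra hypothesis directly as continuity of $m$ on $I^{\epsilon}$, since that is all your bias step uses. In the paper's application, $W$ is a function of $S$ and the discrete action $A$, so $(W,S)$ has no joint Lebesgue density at all. Yet under assumption 9, $m\equiv 1$, so $\bar{q}(\cdot,h)=\bar{f}(\cdot,h)$ for every $h$ and the bias vanishes identically. Your first step alone is therefore what the proof of Theorem \ref{ther:asympt_concis_bound_wsa} actually needs.
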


\begin{proof} Let $f:\mathcal{S}\times\mathcal{W}\times\mathcal{R}\rightarrow\mathbb{R} \in \mathcal{F}$ define the space of functions of the form:
\begin{align*}
	f_{k,n} = wr\Bigg(\frac{\sum_{i=1}^{n}k_{h}(s_{i},s)w_{i}}{\sum_{i=1}^{n}k_{h}(s_{i},s)}\Bigg)^{-1}, 
\end{align*}

Throughout, the observation that $\frac{\sum_{i=1}^{n}k_{h}(s_{i},s)w_{i}}{\sum_{i=1}^{n}k_{h}(s_{i},s)}$, $f_{n}$ and $\hat{J}_{\textrm{Kernel-WIS}}$ are all bounded is frequently used:
\begin{align*}
	w_{\textrm{min}} \leq \frac{\sum_{i=1}^{n}k_{h}(s_{i},s)w_{i}}{\sum_{i=1}^{n}k_{h}(s_{i},s)} \leq w_{\textrm{max}}
\end{align*}
\begin{align*}
	0 \leq \frac{wr}{w_{\textrm{max}}} \leq f_{n} \leq \frac{wr}{w_{\textrm{min}}} \leq \frac{w_{\textrm{max}}}{w_{\textrm{min}}}
\end{align*}
\begin{align*}
	0 \leq \hat{J}_{\textrm{Kernel-WIS}} \leq \frac{w_{\textrm{max}}}{w_{\textrm{min}}}.
\end{align*}

Observe that:
\begin{align}
	\lim_{n\rightarrow\infty}&\sup_{a_{n}\leq h \leq b_{n}}\|\hat{J}_{\textrm{Kernel-WIS}} - \mathbb{E}_{p_{\pi_{e}}}[R]\|_{I\times \mathcal{W} \times \mathcal{R}} \nonumber \\ 
	\leq & \lim_{n\rightarrow\infty}\sup_{a_{n}\leq h \leq b_{n}}\Big\|\hat{J}_{\textrm{Kernel-WIS}} - \frac{1}{n}\sum_{i=1}^{n}w_{i}r_{i}\mathbb{E}[W|S=s_{i}]^{-1}\Big\|_{I\times \mathcal{W} \times \mathcal{R}} \nonumber \\
	& + \lim_{n\rightarrow\infty}\sup_{a_{n}\leq h \leq b_{n}}\Big\|\frac{1}{n}\sum_{i=1}^{n}w_{i}r_{i}\mathbb{E}[W|S=s_{i}]^{-1} - \mathbb{E}_{p_{\pi_{e}}}[R]\Big\|_{I\times \mathcal{W} \times \mathcal{R}}. \label{equ:consist_decomp}
\end{align}

Beginning with:
\begin{align*}
	\lim_{n\rightarrow\infty}\sup_{a_{n}\leq h \leq b_{n}}\Big\|\hat{J}_{\textrm{Kernel-WIS}} - \frac{1}{n}\sum_{i=1}^{n}w_{i}r_{i}\mathbb{E}[W|S=s_{i}]^{-1}\Big\|_{I\times \mathcal{W} \times \mathcal{R}},
\end{align*}
from corollary \ref{corol:corol_2_einmahl}: 
\begin{align*}
	\lim_{n\rightarrow \infty}\sup_{a_{n}\leq h \leq b_{n}} \Bigg\|\frac{\sum_{i=1}^{n}k_{h}(s_{i},s)w_{i}}{\sum_{i=1}^{n}k_{h}(s_{i},s)} - \mathbb{E}[W|S=s]\Bigg\|_{I} \xrightarrow{a.s.} 0,
\end{align*}
then, for $a_{n}\leq h \leq b_{n}$, uniformly in $s\in I$: 
\begin{align*}
	\frac{\sum_{i=1}^{n}k_{h}(s_{i},s)w_{i}}{\sum_{i=1}^{n}k_{h}(s_{i},s)} \xrightarrow{a.s.} \mathbb{E}[W|S=s].
\end{align*}

Consider the transformation: 
\begin{align*}
	f_{n}(w,r,s;\omega) = wr\Bigg(\frac{\sum_{i=1}^{n}k_{h}(s_{i},s)w_{i}}{\sum_{i=1}^{n}k_{h}(s_{i},s)}\Bigg)^{-1}.
\end{align*}
Since $0<w_{\textrm{min}} < w_{\textrm{max}} < \infty$, then the function, $f_{n}(w,r,s;\omega)$, is continuous and thus, by the continuous mapping theorem (\cite{vaart_asymptotic_1998}), subject to $a_{n}\leq h \leq b_{n}$ and uniformly in $s \in I$:
\begin{align*}
	f_{n}(w,r,s;\omega) \xrightarrow{a.s.} wr\mathbb{E}[W|S=s]^{-1}.
\end{align*}
To obtain convergence uniformly in $w$ and $r$, observe that, given both $W$ and $R$ are defined over bounded domains, subject to $a_{n}\leq h \leq b_{n}$:
\begin{align*}
	|f_{n}(w,r,s;\omega) - wr\mathbb{E}[W|S=s]^{-1}| \leq  \sup_{w,r}\|f_{n}(w,r,s;\omega) - wr\mathbb{E}[W|S=s]^{-1}\|_{I}& \\
	\leq \sup_{w,r}\Bigg\|wr \Bigg(\Bigg(\frac{\sum_{i=1}^{n}k_{h}(s_{i},s)w_{i}}{\sum_{i=1}^{n}k_{h}(s_{i},s)}\Bigg)^{-1} - \mathbb{E}[W|S=s]^{-1}\Bigg)\Bigg\|_{I} &. 
\end{align*}
As such:
\begin{align}
	\sup_{a_{n}\leq h \leq b_{n}}\|f_{n}(w,r,s;\omega) - wr\mathbb{E}[W|S=s]^{-1}\|_{I\times \mathcal{W}\times \mathcal{R}} \xrightarrow{a.s.} 0. \label{equ:fkn_as_conv}
\end{align}
Thus, almost surely: 
\begin{align}
	\lim_{n\rightarrow\infty}&\sup_{a_{n}\leq h \leq b_{n}}\Big\|\hat{J}_{\textrm{Kernel-WIS}} - \frac{1}{n}\sum_{i=1}^{n}w_{i}r_{i}\mathbb{E}[W|S=s_{i}]^{-1}\Big\|_{I\times \mathcal{W} \times \mathcal{R}} \nonumber\\
	= &\lim_{n\rightarrow\infty}\sup_{a_{n}\leq h \leq b_{n}}\Big\|\frac{1}{n}\sum_{i=1}^{n}\Big(f_{n}(w_{i},r_{i},s_{i};\omega) - w_{i}r_{i}\mathbb{E}[W|S=s_{i}]^{-1}\Big)\Big\|_{I\times \mathcal{W} \times \mathcal{R}} \nonumber\\
	\leq &\lim_{n\rightarrow\infty}\frac{1}{n}\sum_{i=1}^{n}\sup_{a_{n}\leq h \leq b_{n}}\Big\|f_{n}(w_{i},r_{i},s_{i};\omega) - w_{i}r_{i}\mathbb{E}[W|S=s_{i}]^{-1}\Big\|_{I\times \mathcal{W} \times \mathcal{R}} = 0, \label{equ:lhs_as_conv}
\end{align}
where equation \ref{equ:lhs_as_conv} follows from equation \ref{equ:fkn_as_conv}.\\

For:
\begin{align}
	\lim_{n\rightarrow\infty}\sup_{a_{n}\leq h \leq b_{n}}\Big\|\frac{1}{n}\sum_{i=1}^{n}w_{i}r_{i}\mathbb{E}[W|S=s_{i}]^{-1} - \mathbb{E}_{p_{\pi_{e}}}[R]\Big\|_{I\times \mathcal{W} \times \mathcal{R}},
\end{align}

observe that:
\begin{align*}
	\mathbb{E}[W|S=s] = \int \frac{p_{\pi_{e}}(A|S))}{p_{\pi_{\beta}}(A|S)}p_{\pi_{\beta}}(A|S)da = 1.
\end{align*}
As such by the strong law of large numbers and under the correct identification of the propensity score:
\begin{align*}
	\limsup_{n\rightarrow\infty}\sup_{a_{n}\leq h \leq b_{n}}\Big\|\frac{1}{n}\sum_{i=1}^{n}w_{i}r_{i}\mathbb{E}[W|S=s]^{-1} - \mathbb{E}_{p_{\pi_{e}}}[R]\Big\|_{I\times \mathcal{W} \times \mathcal{R}} = 0	.
\end{align*}
These two steps, under the decomposition in equation \ref{equ:consist_decomp}, imply:
\begin{align*}
	P\Big(\lim_{n\rightarrow \infty}\sup_{a_{n}\leq h \leq b_{n}}\|\hat{J}_{\textrm{Kernel-WIS}} - \mathbb{E}_{p_{\pi_{e}}}[R]\|_{I\times \mathcal{W} \times \mathcal{R}} > \epsilon\Big) = 0,
\end{align*}
and thus, under $a_{n}\leq h \leq b_{n}$:
\begin{align*}
	\hat{J}_{\textrm{Kernel-WIS}} - \mathbb{E}_{p_{\pi_{e}}}[R] \xrightarrow{a.s.} 0.
\end{align*}
	
\end{proof}

\FloatBarrier

\section{Derivative of Nadaraya–Watson kernel regressor with RBF kernel}\label{sec:appendix_nw_kernel_grad}

The proceeding describes the analytic gradient for the Nadaraya–Watson kernel regressor with RBF kernel function, with respect to the bandwidth, assuming it is a one-dimensional real value (section \ref{sec:oned_grad}) and a real valued vector (section \ref{sec:vec_val_grad}). In both cases, the gradient of interest is with respect to $\log h$, since the optimisation is performed over $\log h$ to circumvent the need for explicit constraints. 

\subsection{One-dimensional gradient}\label{sec:oned_grad}
To begin, observe the following decomposition by repeated application of the chain rule:
\begin{align}
	\frac{\partial \textrm{MSE}(h)}{\partial \log h} =& \frac{\partial \textrm{MSE}(h)}{\partial h}\Bigg(\frac{\partial \log h}{\partial h}\Bigg)^{-1} \nonumber \\
	=& \frac{\partial \textrm{MSE}(h)}{\partial \hat{y}(s_{i},h)}\frac{\partial \hat{y}(s_{i},h)}{\partial h}\Bigg(\frac{\partial \log h}{\partial h}\Bigg)^{-1} \nonumber  \\
	=& \frac{\partial \textrm{MSE}(h)}{\partial u(w_{i},s_{i},\hat{y})}\frac{\partial u(w_{i},s_{i},\hat{y})}{\partial \hat{y}(s_{i},h)}\frac{\partial \hat{y}(s_{i},h)}{\partial h}\Bigg(\frac{\partial \log h}{\partial h}\Bigg)^{-1} \label{equ:mse_deriv_decomp}
\end{align}
for: 
\begin{align*}
	\textrm{MSE}(h) =& \frac{1}{n}\sum (y_{i}-\hat{y}_{i})^{2} \\
	u(w_{i},s_{i},\hat{y}) =& (w_{i}-\hat{y}_{i}(s_{i})) \\
	\hat{y}(s_{i},h) =& \sum_{j=1}^{n}k(h,s_{i},s_{j})w_{j}\Big(\sum_{j=1}^{n}k(h,s_{i},s_{j})\Big)^{-1} \\
	k(h,s_{i},s_{j}) =& \exp(-0.5h^{-2}||s_{i}-s_{j}||^{2}) \\
	||s_{i}-s_{j}||^{2} =& \sum_{k} (s_{i,k}-s_{j,k})^{2}.
\end{align*}
Of the terms in equation \ref{equ:mse_deriv_decomp}, $\frac{\partial \textrm{MSE}(h)}{\partial u(w_{i},s_{i},\hat{y})}$, $\frac{\partial u(w_{i},s_{i},\hat{y})}{\partial \hat{y}(s_{i},h)}$ and $(\frac{\partial \log h}{\partial h})^{-1}$ are trivial to define and are listed below: 
\begin{align*}
	\frac{\partial \textrm{MSE}(h)}{\partial u(w_{i},s_{i},\hat{y})} =& \frac{2}{n}\sum_{i=1}^{n}u(w_{i},s_{i},\hat{y})  = \frac{2}{n}\sum_{i=1}^{n}(w_{i}-\hat{y}_{i}(s_{i}))\\
	\frac{\partial u(w_{i},s_{i},\hat{y})}{\partial \hat{y}(s_{i},h)} =& -1 \\
	\Bigg(\frac{\partial \log h}{\partial h}\Bigg)^{-1} =& h.
\end{align*}
$\frac{\partial \hat{y}(s_{i},h)}{\partial h}$, however, requires a little more care. First let:
\begin{align*}
	f(s_{i},h) =& \sum_{j=1}^{n}k(h,s_{i},s_{j})w_{j} \\
	g(s_{i},h) =& \sum_{j=1}^{n}k(h,s_{i},s_{j}),
\end{align*}
and thus:
\begin{align*}
	\frac{\partial f(s_{i},h)}{\partial h} = \sum_{j=1}^{n}h^{-3}||s_{i,l}-s_{j,l}||^{2}k(h,s_{i},s_{j})w_{j} \\
	\frac{\partial g(s_{i},h)}{\partial h} = \sum_{j=1}^{n}h^{-3}||s_{i,l}-s_{j,l}||^{2}k(h,s_{i},s_{j}).
\end{align*}
By the quotient rule:
\begin{align*}
	\frac{\partial \hat{y}(s_{i},h)}{\partial h} =& \frac{\frac{\partial f(s_{i},h)}{\partial h}g(s_{i},h) - \frac{\partial g(s_{i},h)}{\partial h}f(s_{i},h)}{g(s_{i},h)^{2}} \\
	=& \frac{\sum_{j=1}^{n}h^{-3}||s_{i,l}-s_{j,l}||^{2}k(h,s_{i},s_{j})w_{j}}{g(s_{i},h)} \\
	& - \frac{\sum_{j=1}^{n}h^{-3}||s_{i,l}-s_{j,l}||^{2}k(h,s_{i},s_{j})}{g(s_{i},h)}\hat{y}(s_{i},h) \\
	= & \sum_{j=1}^{n}g(s_{i},h)^{-1}h^{-3}||s_{i,l}-s_{j,l}||^{2}k(h,s_{i},s_{j})(w_{j}-\hat{y}(s_{i},h)).
\end{align*}
As such: 
\begin{align}
	&\frac{\partial \textrm{MSE}(h)}{\partial \log h} \nonumber\\
	& = \frac{2}{n}\sum_{i=1}^{n}(w_{i}-\hat{y}_{i}(s_{i}))(-1)h\sum_{j=1}^{n}g(s_{i},h)^{-1}h^{-3}||s_{i,l}-s_{j,l}||^{2}k(h,s_{i},s_{j})(w_{j}-\hat{y}(s_{i},h)) \nonumber \\
	& = \frac{-2}{n}\sum_{i=1}^{n}(w_{i}-\hat{y}_{i}(s_{i}))g(s_{i},h)^{-1}\sum_{j=1}^{n}h^{-2}||s_{i,l}-s_{j,l}||^{2}k(h,s_{i},s_{j})(w_{j}-\hat{y}(s_{i},h)), \label{equ:mse_deriv}
\end{align}
where $||x-x'||^{2} = \sum_{k}(x_{k}-x_{k}')^{2}$.
%

\subsection{Vector valued gradient}\label{sec:vec_val_grad}
To begin, observe the following decomposition by repeated application of the chain rule:
\begin{align}
	\frac{\partial \textrm{MSE}(h)}{\partial \log h_{l}} =& \frac{\partial \textrm{MSE}(h)}{\partial h_{l}}\Bigg(\frac{\partial \log h_{l}}{\partial h_{l}}\Bigg)^{-1} \nonumber \\
	=& \frac{\partial \textrm{MSE}(h)}{\partial \hat{y}(s_{i},h)}\frac{\partial \hat{y}(s_{i},h)}{\partial h_{l}}\Bigg(\frac{\partial \log h_{l}}{\partial h_{l}}\Bigg)^{-1} \nonumber  \\
	=& \frac{\partial \textrm{MSE}(h)}{\partial u(w_{i},s_{i},\hat{y})}\frac{\partial u(w_{i},s_{i},\hat{y})}{\partial \hat{y}(s_{i},h)}\frac{\partial \hat{y}(s_{i},h)}{\partial h_{l}}\Bigg(\frac{\partial \log h_{l}}{\partial h_{l}}\Bigg)^{-1}, \label{equ:mse_deriv_decomp_vec}
\end{align}
for: 
\begin{align*}
	\textrm{MSE}(h) =& \frac{1}{n}\sum (y_{i}-\hat{y}_{i})^{2} \\
	u(w_{i},s_{i},\hat{y}) =& (w_{i}-\hat{y}_{i}(s_{i})) \\
	\hat{y}(s_{i},h) =& \sum_{j=1}^{n}k(h,s_{i},s_{j})w_{j}\Big(\sum_{j=1}^{n}k(h,s_{i},s_{j})\Big)^{-1} \\
	k(h,s_{i},s_{j}) =& \exp(-0.5h^{-2}||s_{i}-s_{j}||^{2}) \\
	||s_{i}-s_{j}||^{2} =& \sum_{k} (s_{i,k}-s_{j,k})^{2}.
\end{align*}
Of the terms in equation \ref{equ:mse_deriv_decomp_vec}, $\frac{\partial \textrm{MSE}(h)}{\partial u(w_{i},s_{i},\hat{y})}$, $\frac{\partial u(w_{i},s_{i},\hat{y})}{\partial \hat{y}(s_{i},h)}$ and $(\frac{\partial \log h}{\partial h})^{-1}$ are trivial to define and are listed below: 
\begin{align*}
	\frac{\partial \textrm{MSE}(h)}{\partial u(w_{i},s_{i},\hat{y})} =& \frac{2}{n}\sum_{i=1}^{n}u(w_{i},s_{i},\hat{y})  = \frac{2}{n}\sum_{i=1}^{n}(w_{i}-\hat{y}_{i}(s_{i}))\\
	\frac{\partial u(w_{i},s_{i},\hat{y})}{\partial \hat{y}(s_{i},h)} =& -1 \\
	\Bigg(\frac{\partial \log h_{l}}{\partial h_{l}}\Bigg)^{-1} =& h_{l}.
\end{align*}
$\frac{\partial \hat{y}(s_{i},h)}{\partial h}$, however, requires a little more care. First let:
\begin{align*}
	f(s_{i},h) =& \sum_{j=1}^{n}k(h,s_{i},s_{j})w_{j} \\
	g(s_{i},h) =& \sum_{j=1}^{n}k(h,s_{i},s_{j}),
\end{align*}
and thus:
\begin{align*}
	\frac{\partial f(s_{i},h)}{\partial h_{l}} = \sum_{j=1}^{n}h^{-3}(s_{i,l}-s_{j,l})^{2}k(h,s_{i},s_{j})w_{j} \\
	\frac{\partial g(s_{i},h)}{\partial h_{l}} = \sum_{j=1}^{n}h^{-3}(s_{i,l}-s_{j,l})^{2}k(h,s_{i},s_{j}).
\end{align*}
By the quotient rule:
\begin{align*}
	\frac{\partial \hat{y}(s_{i},h)}{\partial h_{l}} =& \frac{\frac{\partial f(s_{i},h)}{\partial h_{l}}g(s_{i},h) - \frac{\partial g(s_{i},h)}{\partial h_{l}}f(s_{i},h)}{g(s_{i},h)^{2}} \\
	=& \frac{\sum_{j=1}^{n}h^{-3}(s_{i,l}-s_{j,l})^{2}k(h,s_{i},s_{j})w_{j}}{g(s_{i},h)} \\
	& - \frac{\sum_{j=1}^{n}h^{-3}(s_{i,l}-s_{j,l})^{2}k(h,s_{i},s_{j})}{g(s_{i},h)}\hat{y}(s_{i},h) \\
	= & \sum_{j=1}^{n}g(s_{i},h)^{-1}h^{-3}(s_{i,l}-s_{j,l})^{2}k(h,s_{i},s_{j})(w_{j}-\hat{y}(s_{i},h)).
\end{align*}
As such: 
\begin{align*}
	&\frac{\partial \textrm{MSE}(h)}{\partial \log h_{l}} \\
	& = \frac{2}{n}\sum_{i=1}^{n}(w_{i}-\hat{y}_{i}(s_{i}))(-1)h\sum_{j=1}^{n}g(s_{i},h)^{-1}h^{-3}(s_{i,l}-s_{j,l})^{2}k(h,s_{i},s_{j})(w_{j}-\hat{y}(s_{i},h)) \\
	& = \frac{-2}{n}\sum_{i=1}^{n}(w_{i}-\hat{y}_{i}(s_{i}))g(s_{i},h)^{-1}\sum_{j=1}^{n}h^{-2}(s_{i,l}-s_{j,l})^{2}k(h,s_{i},s_{j})(w_{j}-\hat{y}(s_{i},h)) 
\end{align*}

\section{OptdigitsDist dataset generation}\label{sec:optdigits_gen_appendix}

The code used to construct and train the variational autoencoder (VAE) for generation of the optdigitsDist dataset was directly copied from \cite{francis_variational-autoencoder-for-mnist_2023} except for some inconsequential changes to the metric logging.\\

Figure \ref{fig:vae_training_metrics} displays the reconstruction error and Kullback–Leibler (KL) divergence for the trained VAE. The VAE was concluded to reasonably converge due to both elements of the loss function stabilising away from 0.\\

\begin{figure}[!h]
	\centering
	\begin{subfigure}{0.49\linewidth}
		\includegraphics[width=\linewidth]{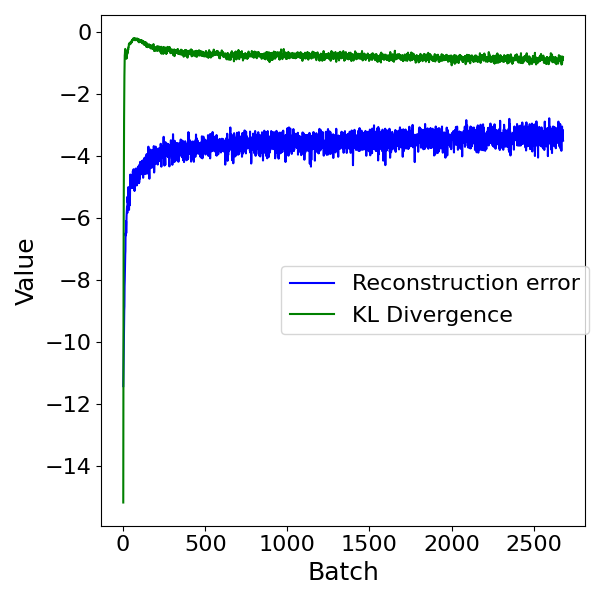}
		\caption{Training dataset}
	\end{subfigure}
	\begin{subfigure}{0.49\linewidth}
		\includegraphics[width=\linewidth]{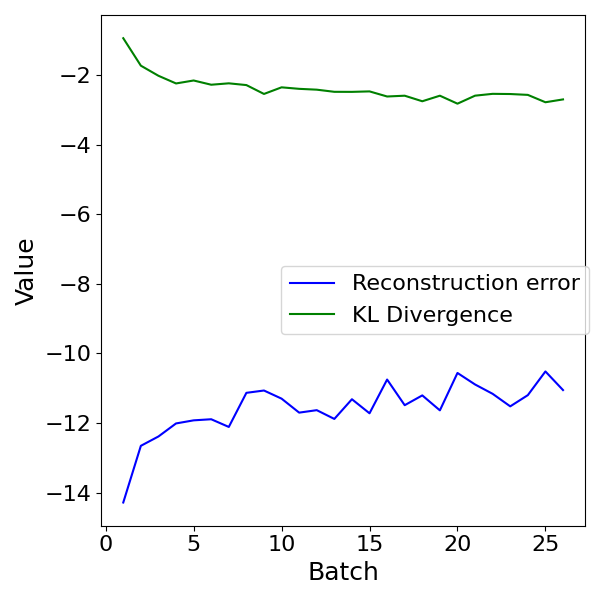}
		\caption{Testing dataset}
	\end{subfigure}
	\caption{This figure displays the reconstruction error and Kullback–Leibler divergence for the variational autoencoder used to derive the continuous reward optdigits setting (optdigitsDist). The metrics are displayed for both the training set and the testing set.}
	\label{fig:vae_training_metrics}
\end{figure}

\FloatBarrier

\section{Learnt logging policy sensitivity}\label{sec:appendix_learnt_logging_sensitivity}
The proceeding describes the training process and itemised results for the sensitivity analysis performed under learnt logging and behaviour policies, described in section \ref{sec:learnt_logging_sensitivity}.\\

\subsection{Policy training}
Two model architectures were considered for the evaluation: a linear model (herein referred to as linear) and a fully connected multi-layer perceptron with a single hidden layer (herein referred to as one-layer). For all datasets, both models were trained to convergence using a maximum of 150 epochs. Additionally, a batch size of 32 was used except for on the micro-mass and soybean datasets where a batch size of 12 was used, owing to the smaller sample size. For the one-layer networks, a hidden layer of dimension 64 was used, except for on the page-blocks dataset where 32 was used to slow down the model convergence, enabling policies of different performance levels to be selected.\\

Epochs were selected for deriving the experiment policies according accuracy i.e.,:
\begin{align*}
	\frac{\textrm{TP}+\textrm{TN}}{\textrm{TP}+\textrm{TN}+\textrm{FP}+\textrm{FN}}
\end{align*}
where $\textrm{TP}$,$\textrm{TN}$,$\textrm{FP}$ and $\textrm{FN}$ denote true positives, true negatives, false positives and false negatives, respectively. The epochs selected according the validation accuracy followed:
\begin{itemize}
	\item The earliest epoch associated with the maximum validation score, rounded to the nearest 0.05 and;
	\item The earliest epoch associated with the maximum validation score (rounded to the nearest 0.05) minus 0.4.
\end{itemize}
For example, under the optdigits dataset with a linear model, the maximum score achieved was 0.97. This was rounded down to 0.95 and the earliest epoch achieving at least this score was epoch 46. $0.95-0.4=0.55$ and the earliest epoch achieving this score was 6. Thus, epochs 6 and 46 were used. Where the maximum score achieved was less than 0.5, a minimum validation score of 0.1 was used.\\

\subsection{Results}
The aggregated results are described in figure \ref{sec:learnt_logging_sensitivity}. Tables \ref{tbl:k_wis_wis_oracle_results_learnt_policies} and \ref{tbl:k_wis_wis_non_oracle_results_learnt_policies} describe the performance of the WIS and Kernel-WIS estimators, split by evaluation and logging policy divergence, respectively, identically to the results described in sections \ref{sec:oracle_beh_policy_results} and \ref{sec:non_oracle_beh_sensitivity}.\\

{\small\tabcolsep=3pt  
	\begin{longtable}{cccccccc}
	\caption{The table describes the pointwise finite mean squared error of the WIS estimator and Kernel-WIS estimator (with a shared bandwidth), split by dataset, policy type and a discretisation of the absolute difference in true performance under the evaluation and logging policy, for oracle behaviour policies with learnt logging and behaviour policies. The P-Value column defines the p-value of a two-sided Wald t-test, comparing the finite mean squared error of the WIS and Kernel-WIS estimators. N observations defines the number of experiments aggregated over per row, where the aggregations are across logging/behaviour model architecture and epoch, and evaluation policy. Values in bold in the WIS and Kernel-WIS columns highlight the best performing estimator. Neither are highlighted if the mean performance is within 0.001. Values highlighted in the P-Value column identify those which are less than or equal to 0.05.}
	\label{tbl:k_wis_wis_oracle_results_learnt_policies} \\
	\toprule
	Dataset & Policy Type & \makecell{Eval/Log\\Abs. Diff.} & WIS & Kernel-WIS & N Obs. & P-Value\\
	\midrule
	\endfirsthead
	\toprule
	Dataset & Policy Type & \makecell{Eval/Log\\Abs. Diff.} & WIS & Kernel-WIS & N Obs. & P-Value\\
	\midrule
	\endhead            
	\midrule\multicolumn{7}{r}{{Continued on next page}} \\ 
	\endfoot     
	\endlastfoot
	arrhythmia & IW & (0.0, 0.2] & 0.101$\pm$0.032 & \textbf{0.081$\pm$0.028} & 6 & 0.28 \\
	arrhythmia & IW & (0.2, 0.4] & 0.075$\pm$0.005 & \textbf{0.055$\pm$0.002} & 2 & 0.07 \\
	arrhythmia & SNIW & (0.0, 0.2] & 0.091$\pm$0.026 & \textbf{0.070$\pm$0.028} & 6 & 0.21 \\
	arrhythmia & SNIW & (0.2, 0.4] & 0.078$\pm$0.001 & \textbf{0.073$\pm$0.001} & 2 & 0.05 \\
	\midrule
	kropt & IW & (0.0, 0.2] & 0.052$\pm$0.006 & \textbf{0.049$\pm$0.008} & 8 & 0.41 \\
	kropt & SNIW & (0.0, 0.2] & 0.044$\pm$0.008 & \textbf{0.033$\pm$0.016} & 4 & 0.29 \\
	kropt & SNIW & (0.2, 0.4] & 0.110$\pm$0.003 & \textbf{0.059$\pm$0.054} & 4 & 0.15 \\
	\midrule
	letter & IW & (0.0, 0.2] & 0.040$\pm$0.017 & \textbf{0.038$\pm$0.019} & 4 & 0.89 \\
	letter & IW & (0.2, 0.4] & 0.031$\pm$0.012 & 0.030$\pm$0.012 & 4 & 0.84 \\
	letter & SNIW & (0.2, 0.4] & 0.066$\pm$0.061 & 0.065$\pm$0.061 & 6 & 0.97 \\
	letter & SNIW & (0.6, 0.8] & 0.093$\pm$0.005 & \textbf{0.062$\pm$0.003} & 2 & \textbf{0.03} \\
	\midrule
	micro-mass & IW & (0.0, 0.2] & 0.064$\pm$0.016 & \textbf{0.054$\pm$0.020} & 8 & 0.29 \\
	micro-mass & SNIW & (0.0, 0.2] & 0.086$\pm$0.020 & 0.086$\pm$0.020 & 5 & 0.99 \\
	micro-mass & SNIW & (0.2, 0.4] & 0.090$\pm$0.023 & 0.090$\pm$0.023 & 3 & 1.0 \\
	\midrule
	optdigits & IW & (0.0, 0.2] & 0.080$\pm$0.013 & \textbf{0.078$\pm$0.011} & 4 & 0.86 \\
	optdigits & IW & (0.6, 0.8] & 0.061$\pm$0.008 & 0.061$\pm$0.009 & 4 & 0.93 \\
	optdigits & SNIW & (0.0, 0.2] & \textbf{0.084$\pm$0.005} & 0.087$\pm$0.005 & 2 & 0.57 \\
	optdigits & SNIW & (0.6, 0.8] & 0.062$\pm$0.011 & 0.062$\pm$0.011 & 6 & 1.0 \\
	\midrule
	page-blocks & IW & (0.0, 0.2] & 0.052$\pm$0.010 & 0.053$\pm$0.010 & 3 & 0.87 \\
	page-blocks & IW & (0.2, 0.4] & 0.072$\pm$nan & 0.072$\pm$nan & 1 & nan \\
	page-blocks & IW & (0.4, 0.6] & 0.052$\pm$0.003 & \textbf{0.050$\pm$0.003} & 2 & 0.63 \\
	page-blocks & IW & (0.6, 0.8] & 0.078$\pm$0.002 & 0.077$\pm$0.001 & 2 & 0.63 \\
	page-blocks & SNIW & (0.0, 0.2] & 0.039$\pm$nan & 0.039$\pm$nan & 1 & nan \\
	page-blocks & SNIW & (0.2, 0.4] & \textbf{0.043$\pm$0.019} & 0.056$\pm$0.009 & 3 & 0.38 \\
	page-blocks & SNIW & (0.4, 0.6] & \textbf{0.047$\pm$0.004} & 0.049$\pm$0.000 & 2 & 0.63 \\
	page-blocks & SNIW & (0.6, 0.8] & \textbf{0.074$\pm$0.002} & 0.077$\pm$0.002 & 2 & 0.32 \\
	\midrule
	pendigits & IW & (0.0, 0.2] & \textbf{0.072$\pm$0.002} & 0.073$\pm$0.003 & 2 & 0.69 \\
	pendigits & IW & (0.2, 0.4] & 0.043$\pm$0.001 & 0.043$\pm$0.001 & 2 & 1.0 \\
	pendigits & IW & (0.6, 0.8] & 0.074$\pm$0.001 & 0.075$\pm$0.000 & 4 & 0.43 \\
	pendigits & SNIW & (0.0, 0.2] & 0.025$\pm$0.005 & 0.025$\pm$0.005 & 2 & 1.0 \\
	pendigits & SNIW & (0.4, 0.6] & 0.208$\pm$0.008 & \textbf{0.161$\pm$0.006} & 2 & \textbf{0.03} \\
	pendigits & SNIW & (0.6, 0.8] & 0.073$\pm$0.001 & 0.073$\pm$0.001 & 4 & 1.0 \\
	\midrule
	soybean & IW & (0.0, 0.2] & 0.085$\pm$0.032 & \textbf{0.083$\pm$0.034} & 7 & 0.91 \\
	soybean & IW & (0.2, 0.4] & 0.105$\pm$nan & 0.105$\pm$nan & 1 & nan \\
	soybean & SNIW & (0.0, 0.2] & 0.115$\pm$0.002 & \textbf{0.111$\pm$0.003} & 4 & 0.05 \\
	soybean & SNIW & (0.2, 0.4] & 0.089$\pm$0.001 & 0.089$\pm$0.001 & 2 & 1.0 \\
	soybean & SNIW & (0.4, 0.6] & 0.349$\pm$0.004 & \textbf{0.034$\pm$0.001} & 2 & \textbf{0.0} \\
	\midrule
	yeast & IW & (0.0, 0.2] & 0.057$\pm$0.007 & \textbf{0.052$\pm$0.007} & 5 & 0.27 \\
	yeast & IW & (0.2, 0.4] & 0.061$\pm$0.004 & 0.060$\pm$0.004 & 3 & 0.89 \\
	yeast & SNIW & (0.2, 0.4] & 0.055$\pm$0.019 & \textbf{0.053$\pm$0.017} & 8 & 0.8 \\
	\bottomrule
	\end{longtable}
}

{\small\tabcolsep=3pt  
	\begin{longtable}{cccccccc}
	\caption{The table describes the pointwise finite mean squared error of the WIS estimator and Kernel-WIS estimator (with a shared bandwidth), split by dataset, policy type and a discretisation of the absolute difference in true performance under the evaluation and logging policy, for non-oracle behaviour policies with learnt logging and behaviour policies. The P-Value column defines the p-value of a two-sided Wald t-test, comparing the finite mean squared error of the WIS and Kernel-WIS estimators. N observations defines the number of experiments aggregated over per row, where the aggregations are across logging/behaviour model architecture and epoch, and evaluation policy. Values in bold in the WIS and Kernel-WIS columns highlight the best performing estimator. Neither are highlighted if the mean performance is within 0.001. Values highlighted in the P-Value column identify those which are less than or equal to 0.05.}
	\label{tbl:k_wis_wis_non_oracle_results_learnt_policies} \\
	\toprule
	Dataset & Policy Type & \makecell{Eval/Log\\Abs. Diff.} & WIS & Kernel-WIS & N Obs. & P-Value\\
	\midrule
	\endfirsthead
	\toprule
	Dataset & Policy Type & \makecell{Eval/Log\\Abs. Diff.} & WIS & Kernel-WIS & N Obs. & P-Value\\
	\midrule
	\endhead            
	\midrule\multicolumn{7}{r}{{Continued on next page}} \\ 
	\endfoot     
	\endlastfoot
	arrhythmia & IW & (0.0, 0.2] & 0.121$\pm$0.065 & \textbf{0.074$\pm$0.032} & 17 & \textbf{0.01} \\
	arrhythmia & IW & (0.2, 0.4] & 0.159$\pm$0.078 & \textbf{0.156$\pm$0.076} & 7 & 0.95 \\
	arrhythmia & SNIW & (0.0, 0.2] & 0.154$\pm$0.096 & \textbf{0.142$\pm$0.091} & 23 & 0.65 \\
	arrhythmia & SNIW & (0.2, 0.4] & \textbf{0.046$\pm$nan} & 0.116$\pm$nan & 1 & nan \\
	\midrule
	kropt & IW & (0.0, 0.2] & 0.112$\pm$0.047 & \textbf{0.078$\pm$0.072} & 23 & 0.07 \\
	kropt & IW & (0.2, 0.4] & 0.240$\pm$nan & 0.240$\pm$nan & 1 & nan \\
	kropt & SNIW & (0.0, 0.2] & 0.076$\pm$0.044 & \textbf{0.067$\pm$0.056} & 13 & 0.68 \\
	kropt & SNIW & (0.2, 0.4] & 0.400$\pm$0.382 & \textbf{0.067$\pm$0.074} & 11 & \textbf{0.02} \\
	\midrule
	letter & IW & (0.0, 0.2] & 0.083$\pm$0.038 & \textbf{0.064$\pm$0.041} & 16 & 0.21 \\
	letter & IW & (0.2, 0.4] & 0.149$\pm$0.191 & \textbf{0.058$\pm$0.049} & 8 & 0.23 \\
	letter & SNIW & (0.0, 0.2] & 0.095$\pm$0.059 & \textbf{0.092$\pm$0.063} & 7 & 0.94 \\
	letter & SNIW & (0.2, 0.4] & 0.341$\pm$0.250 & \textbf{0.147$\pm$0.092} & 11 & \textbf{0.03} \\
	letter & SNIW & (0.6, 0.8] & 0.637$\pm$0.137 & \textbf{0.136$\pm$0.102} & 6 & \textbf{0.0} \\
	\midrule
	micro-mass & IW & (0.0, 0.2] & 0.111$\pm$0.050 & \textbf{0.090$\pm$0.058} & 24 & 0.2 \\
	micro-mass & SNIW & (0.0, 0.2] & 0.172$\pm$0.150 & \textbf{0.166$\pm$0.159} & 18 & 0.91 \\
	micro-mass & SNIW & (0.2, 0.4] & \textbf{0.178$\pm$0.128} & 0.187$\pm$0.158 & 6 & 0.92 \\
	\midrule
	optdigits & IW & (0.0, 0.2] & \textbf{0.091$\pm$0.105} & 0.100$\pm$0.100 & 16 & 0.81 \\
	optdigits & IW & (0.2, 0.4] & \textbf{0.059$\pm$0.005} & 0.085$\pm$0.025 & 4 & 0.13 \\
	optdigits & IW & (0.6, 0.8] & 0.115$\pm$0.051 & \textbf{0.099$\pm$0.044} & 4 & 0.65 \\
	optdigits & SNIW & (0.2, 0.4] & 0.484$\pm$0.019 & \textbf{0.142$\pm$0.023} & 2 & \textbf{0.0} \\
	optdigits & SNIW & (0.4, 0.6] & 0.427$\pm$0.253 & \textbf{0.309$\pm$0.236} & 7 & 0.39 \\
	optdigits & SNIW & (0.6, 0.8] & 0.437$\pm$0.287 & \textbf{0.342$\pm$0.250} & 15 & 0.34 \\
	\midrule
	page-blocks & IW & (0.0, 0.2] & 0.114$\pm$0.079 & \textbf{0.084$\pm$0.059} & 8 & 0.4 \\
	page-blocks & IW & (0.2, 0.4] & 0.147$\pm$0.124 & \textbf{0.079$\pm$0.069} & 12 & 0.12 \\
	page-blocks & IW & (0.4, 0.6] & 0.092$\pm$0.061 & \textbf{0.057$\pm$0.006} & 4 & 0.33 \\
	page-blocks & SNIW & (0.0, 0.2] & 0.245$\pm$0.135 & \textbf{0.208$\pm$0.141} & 7 & 0.62 \\
	page-blocks & SNIW & (0.2, 0.4] & 0.314$\pm$0.137 & \textbf{0.171$\pm$0.245} & 3 & 0.44 \\
	page-blocks & SNIW & (0.4, 0.6] & 0.246$\pm$0.241 & \textbf{0.167$\pm$0.142} & 9 & 0.41 \\
	page-blocks & SNIW & (0.6, 0.8] & 0.504$\pm$0.214 & \textbf{0.472$\pm$0.264} & 3 & 0.88 \\
	page-blocks & SNIW & (0.8, 1.0] & 0.610$\pm$0.005 & \textbf{0.442$\pm$0.090} & 2 & 0.23 \\
	\midrule
	pendigits & IW & (0.0, 0.2] & 0.074$\pm$0.028 & \textbf{0.068$\pm$0.018} & 14 & 0.46 \\
	pendigits & IW & (0.2, 0.4] & \textbf{0.090$\pm$0.085} & 0.098$\pm$0.014 & 5 & 0.85 \\
	pendigits & IW & (0.4, 0.6] & 0.123$\pm$0.111 & \textbf{0.074$\pm$0.044} & 3 & 0.54 \\
	pendigits & IW & (0.6, 0.8] & \textbf{0.092$\pm$0.002} & 0.104$\pm$0.001 & 2 & \textbf{0.02} \\
	pendigits & SNIW & (0.0, 0.2] & 0.454$\pm$0.055 & \textbf{0.054$\pm$0.009} & 4 & \textbf{0.0} \\
	pendigits & SNIW & (0.2, 0.4] & 0.279$\pm$0.175 & \textbf{0.228$\pm$0.170} & 5 & 0.65 \\
	pendigits & SNIW & (0.4, 0.6] & 0.392$\pm$0.125 & \textbf{0.312$\pm$0.072} & 5 & 0.26 \\
	pendigits & SNIW & (0.6, 0.8] & 0.380$\pm$0.360 & \textbf{0.149$\pm$0.065} & 8 & 0.11 \\
	pendigits & SNIW & (0.8, 1.0] & 0.767$\pm$0.002 & \textbf{0.089$\pm$0.024} & 2 & \textbf{0.02} \\
	\midrule
	soybean & IW & (0.0, 0.2] & 0.089$\pm$0.046 & \textbf{0.069$\pm$0.047} & 17 & 0.22 \\
	soybean & IW & (0.2, 0.4] & 0.120$\pm$0.047 & 0.119$\pm$0.047 & 7 & 0.98 \\
	soybean & SNIW & (0.0, 0.2] & 0.155$\pm$0.072 & \textbf{0.139$\pm$0.048} & 10 & 0.58 \\
	soybean & SNIW & (0.2, 0.4] & 0.199$\pm$0.126 & 0.198$\pm$0.127 & 8 & 0.98 \\
	soybean & SNIW & (0.4, 0.6] & 0.509$\pm$0.163 & \textbf{0.315$\pm$0.099} & 6 & \textbf{0.04} \\
	\midrule
	yeast & IW & (0.0, 0.2] & 0.108$\pm$0.039 & \textbf{0.080$\pm$0.024} & 19 & \textbf{0.01} \\
	yeast & IW & (0.2, 0.4] & 0.093$\pm$0.011 & \textbf{0.071$\pm$0.036} & 5 & 0.25 \\
	yeast & SNIW & (0.0, 0.2] & 0.218$\pm$0.102 & \textbf{0.175$\pm$0.062} & 13 & 0.21 \\
	yeast & SNIW & (0.2, 0.4] & \textbf{0.148$\pm$0.144} & 0.155$\pm$0.180 & 11 & 0.92 \\
	\bottomrule
	\end{longtable}
}

\FloatBarrier

\vskip 0.2in
\bibliography{bib}

\end{document}